\crefname{assumption}{assumption}{assumptions}
\crefname{figure}{figure}{figures}
\crefname{equation}{}{}
\crefname{subsection}{subsection}{subsections}
\setlist[enumerate,2]{label=(\alph*),ref=\theenumi(\alph*)}
\setlist[enumerate,3]{label=\roman*.,ref=\theenumii\roman*}
\providecommand\given{\:\vert\:}
\DeclarePairedDelimiterXPP\set[1]{}{\{}{\}}{}{
\renewcommand\given{\nonscript\:\delimsize\vert\nonscript\:\mathopen{}}
#1}
\DeclarePairedDelimiter\abs{|}{|}
\DeclarePairedDelimiter\N{\|}{\|}
\DeclarePairedDelimiter\floor{\lfloor}{\rfloor}
\newcommand{\ubar}[1]{\underaccent{\bar}{#1}}
\newcommand{\ol}[2][1]{{}\mkern#1mu\bar{\mkern-#1mu#2}}
\newcommand{\ul}[2][1]{{}\mkern#1mu\ubar{\mkern-#1mu#2}}
\def\wtd{\widetilde}
\def\what{\widehat}
\def\dps{\displaystyle}
\def\scs{\scriptstyle}
\def\ee{\mathrm{e}}
\DeclareMathOperator{\diag}{diag}
\DeclareMathOperator{\subspan}{span}
\DeclareMathOperator{\trace}{trace}
\DeclareMathOperator{\UI}{ui}
\DeclareMathOperator{\F}{F}
\DeclareMathOperator{\T}{T}
\DeclareMathOperator{\rmc}{c}
\newtheorem{theorem}{Theorem}[section]
\newtheorem{lemma}{Lemma}[section]
\theoremstyle{definition}
\newtheorem{definition}{Definition}[section]
\newtheorem{assumption}{Assumption}[section]
\numberwithin{algorithm}{section}
\numberwithin{equation}{section}
\numberwithin{figure}{section}
\numberwithin{table}{section}
\def\scrT{\mathscr{T}}
\def\bX{\boldsymbol{X}}
\def\bY{\boldsymbol{Y}}
\def\eventA{\mathbb{A}}
\def\eventH{\mathbb{H}}
\def\eventM{\mathbb{M}}
\def\eventQ{\mathbb{Q}}
\def\eventT{\mathbb{T}}
\def\overevent{;\:}
\def\new{^+}
\def\fil{\mathbb{F}}
\def\opL{\mathcal{L}}
\def\opI{\mathcal{I}}
\DeclareMathOperator{\sphere}{\mathbb{S}}
\DeclareMathOperator{\opE}{\mathrm{E}}
\DeclareMathOperator{\opprob}{\mathrm{P}}
\newcommand\E[2][*]{\opE\set#1{#2}}
\newcommand\prob[2][*]{\opprob\set#1{#2}}
\newcommand\Nout[2][*]{N_{\mathrm{out}}\set#1{#2}}
\newcommand\Nin[2][*]{N_{\mathrm{in}}\set#1{#2}}
\newcommand\Nqb[2][]{N_{\mathrm{qb}}\set#1{#2}}
\DeclareMathOperator{\Sum}{sum}
\DeclareMathOperator{\OO}{O}
\DeclareMathOperator{\var}{var}
\DeclareMathOperator{\cov}{cov}
\newcommand\ind[1]{\mathbf{1}_{#1}}
\newcommand\varc[1]{\var_{\circ}\!\left(#1\right)}
\newcommand\covc[1]{\cov_{\circ}\!\left(#1\right)}
\title{On the Optimality of the Oja's Algorithm for Online PCA}
\author{
Xin Liang\thanks{%
	Yau Mathematical Sciences Center, Tsinghua University, Beijing 100084, China.
E-mail: {\tt liangxinslm@tsinghua.edu.cn}.
Supported by NSFC-11901340.
}
}
\date{\today}
\begin{document}
\maketitle
\begin{abstract}
	In this paper we analyze the behavior of the Oja's algorithm for online/streaming principal component subspace estimation.
	It is proved that with high probability it performs an efficient, gap-free, global convergence rate to approximate an principal component subspace for any sub-Gaussian distribution.
	Moreover, it is the first time to show that the convergence rate, namely the upper bound of the approximation, exactly matches the lower bound of an approximation obtained by the offline/classical PCA up to a constant factor.
\end{abstract}

\smallskip
{\bf Key words.} Principal component analysis, Stochastic approximation, High-dimensional data, Oja's algorithm

\smallskip
{\bf AMS subject classifications}.
62H25, 68W27, 65F15
\section{Introduction}\label{sec:introduction}
Principal component analysis (PCA) introduced by Pearson~\cite{pearson1901lines} and Hotelling~\cite{hotelling1933analysis} is
one of the most well-known and popular methods for dimensional reduction in statistics, machine learning, and data science.
The goal of PCA is to find out a low-dimensional linear subspace that is closest to a centered random vector in a high-dimensional subspace in the mean squared sense through finite independent and identically distributed (i.i.d.) samples of the random vector.
Theoretically, given a random vector $\bX\in \mathbb{R}^d$ satisfying 
	$\E{\bX}=0,\E{\bX\bX^{\T}}=\Sigma$,
PCA looks for a subspace $\mathcal{U}_*$ with $\dim \mathcal{U}_*=p<d$, such that
\begin{equation}\label{eq:PCA-opt}
		\mathcal{U}_*=\arg\min_{\dim\mathcal{U}=p}\E{\N{(I_d-\Pi_{\mathcal{U}})\bX}_2^2},
\end{equation}
where $I_d$ is the identical mapping, or equivalently the $d\times d$ identity matrix, and $\Pi_{\mathcal{U}}$ is the orthogonal projector onto $\mathcal{U}$.
Let $\Sigma=U\Lambda U^{\T}$ be the spectral decomposition of $\Sigma$,
where
\begin{equation}\label{eq:eigD-convar}
\Sigma=U\Lambda U^{\T}
\quad\text{with}\quad
U=[u_1,u_2,\ldots,u_d],\,\,
\Lambda=\diag(\lambda_1,\dots,\lambda_d),
\end{equation}
If $\lambda_p>\lambda_{p+1}$,
then the unique solution to the optimization problem \cref{eq:PCA-opt}, namely the $p$-dimensional principal
subspace of $\Sigma$, is $\mathcal{U}_*=\subspan(u_1,\dots,u_p)$, the subspace spanned by $u_1,\dots,u_p$.

In practice, the covariance matrix $\Sigma$ is difficult, if not impossible, to obtain,
and people have to use samples to approximate $\mathcal{U}_*$.
The classical/offline PCA use the spectral decomposition of the empirical covariance matrix $\what \Sigma=\frac{1}{n}\sum_{i=1}^n X^{(i)}(X^{(i)})^{\T}$. 
There $\what{\mathcal{U}}_*=\subspan{\what u_1,\dots,\what u_p}$ is used to approximate $\mathcal{U}_*$, where $\what u_i$ are corresponding eigenvectors of $\what \Sigma$.
Vu and Lei~\cite[Theorem~3.1]{vuL2013minimax} proved that
\begin{equation}\label{eq:minimax-bound}
	\inf_{\dim\wtd{\mathcal{U}}_*=p}
	\sup_{\bX\in\mathcal{P}_0(\sigma_*^2,d)}
	\E{\N{\sin\Theta(\wtd{\mathcal{U}}_*,{\mathcal{U}}_*)}_{\F}^2}
	\ge cp(d-p)\frac{\sigma_*^2}{n}
	\ge c\frac{\lambda_1\lambda_{p+1}}{(\lambda_p-\lambda_{p+1})^2}\frac{p(d-p)}{n},
\end{equation}
where $c>0$ is an absolute constant, and $\mathcal{P}_0(\sigma_*^2,d)$ is the set of all $d$-dimensional
sub-Gaussian distributions for which the eigenvalues of the covariance matrix satisfy
$\frac{\lambda_1\lambda_{p+1}}{(\lambda_p-\lambda_{p+1})^2}\le\sigma_*^2$.
Note that $\frac{\lambda_1\lambda_{p+1}}{(\lambda_p-\lambda_{p+1})^2}$
is the effective noise variance.

Due to the practical requirement that only limited memory and a single pass over the data can be implemented,
people have paid amount of attention to a class of methods under these condition, called streaming/online PCA.
The most natural and simple method was designed by Oja and his coauthor \cite{oja1982simplified,ojaK1985stochastic}:
first choose an initial guess $U^{(0)}\in\mathbb{R}^{d\times p}$ with $(U^{(0)})^{\T}U^{(0)}=I$,
and then iteratively update 
\[
	U^{(n)}=\Pi\left([I_d+\eta_nX^{(n)}(X^{(n)})^{\T}]U^{(n-1)}\right)=[I_d+\eta_nX^{(n)}(X^{(n)})^{\T}]U^{(n-1)}S^{(n)},
\]
where $\Pi(A)$ is an orthonormal projector such that $\Pi(A)^{\T}\Pi(A)=I_p$ and $\subspan(A)=\subspan(\Pi(A))$,
and $S^{(n)}$ is used to denote the normalization matrix.
There are three classes of hyperparameters:
\begin{enumerate}
	\item the initial guess $U^{(0)}$: usually first generate $\wtd U^{(0)}$ of which each entry follows the standard Gaussian distribution $N(0,1)$, and then obtain $U^{(0)}$ by QR decomposition.
		Note that in this setup, $U^{(0)}$ is uniformly sampled from all the $p$-dimensional subspaces under the Haar invariant probability measure (see eg. \cite{muirhead1982aspects}).
		\item the learning rates $\eta_n$: there are different strategies to choose them.
		Two common setups are the constant learning rates $\eta_n=\eta_o$, and harmonic learning rates $\eta_n\propto \frac{1}{n}$.
		\item the normalization matrices $S^{(n)}$: Two common ways to obtain the orthonormal basis are QR decomposition, and polar decomposition \cite{abedmeraimACH2000orthogonal,liangGLL2017nearly:arxiv}.
\end{enumerate}

Although the Oja's algorithm was developed nearly 40 years ago and it works well in practice, its convergence behavior is limited until recently.
Most theoretical results come out since 2014.
As was argued by Allen-Zhu and Li~\cite{allenzhuL2017first},
the convergence rate of the Oja's method has several features:
\begin{enumerate}
	\item efficient: the rate only depends on the dimension $d$ logarithmically.  In fact, the dependence on $d$ can be removed.
	\item gap-free: the rate is independent of the eigenvalue gap 
		\[
			\gamma=\lambda_p-\lambda_{p+1}.
		\]
		In details, the feature tells that $\N{\sin\Theta(\subspan(U^{(n)}),\subspan(u_1,\dots,u_{q}))}$ is bounded by a factor $\wtd\gamma^{-2}$ rather than $\gamma^{-2}$, where $\wtd\gamma$ is an arbitrary chosen threshold, and $\lambda_p-\lambda_{q}<\wtd\gamma\le\lambda_p-\lambda_{q+1}$.
		\item global: the algorithm is allowed to start from a random initial guess.
\end{enumerate}

Some recent works \cite{shamir2016convergence}
studied the convergence of the online PCA for the most significant principal component, i.e., $u_1$, from different points of view
and obtained some results for the case where the samples are almost surely uniformly bounded.
De Sa et al.~\cite{desaRK2015global} studied a different
but closely related problem, in which the angular part is equivalent to the online PCA,
and obtained some convergence results.
Li et al.~\cite{liWLZ2017near} analyzed for the distributions with sub-Gaussian tails, and for this case the samples of this kind of distributions may be unbounded.
For more details of comparison, the reader is referred to \cite{liWLZ2017near}. 

For the subspace online PCA,
some recent works studied the convergence for the case where the samples are almost surely uniformly bounded.
In a series of papers~\cite{aroraCLS2012stochastic,aroraCS2013stochastic,marinovMA2018streaming,mianjyA2018stochastic},
Arora et al.\ studied \cref{eq:PCA-opt} and its variations via direct optimization approaches, namely using convex relaxation and adding regularizations.
The Oja's algorithm falls into one variant of their methods.
Hardt and Price~\cite{hardtP2014noisy} and Balcan et al.~\cite{balcanDWY2016improved} treated the method as a noisy power method and analyzed its convergence.
Shamir~\cite{shamir2016fast} first proved the convergence is efficient with a good initial guess.
Garber et al.~\cite{garberHJKMNS2016faster} used the shift-and-invert technique to speed up the convergence
but their analysis was only done for the top eigenvector.
Allen-Zhu and Li~\cite{allenzhuL2017first} analyzed the method and proposed  a faster variant of subspace online PCA iteration,
and firstly showed the gap-free feature of the convergence and also gave a lower bound for the gap-free feature.
Very recently Huang et al.~\cite{huang2021streaming} analyzed the problem using the new matrix concentration inequalities and proved stronger upper bounds.
Liang et al.~\cite{liangGLL2017nearly:arxiv} went further along the way of \cite{liWLZ2017near} and gave an convergence analysis for sub-Gaussian distributions.

The convergence rates obtained in some previous works and this paper are presented in \Cref{tab:comparison-of-some-results}.
\begin{table}[tp]
	\centering
	\begin{tabular}{lccccc}
		\hline
		Paper                                        & Global convergence                                                                      & Local convergence                                                                      & Unbounded & Block & Gap-free \\
		\hline
		De Sa et al.~\cite{desaRK2015global}         & $\dfrac{\lambda_{1\sim d}^2d}{\gamma^2n}\ln \dfrac{d}{\delta}$                          & $\dfrac{\lambda_{1\sim d}^2d}{\gamma^2n}\ln \dfrac{d}{\delta}$                         & No        & No    & No       \\
		Hardt and Price~\cite{hardtP2014noisy}       & $\dfrac{\lambda_{1\sim d}^2\lambda_pd}{\gamma^3n}\ln \dfrac{nd}{\delta}$                & $\dfrac{\lambda_{1\sim d}^2\lambda_pd}{\gamma^3n}\ln \dfrac{nd}{\delta}$               & No        & Yes   & No       \\
		Shamir~\cite{shamir2016convergence}          & $\dfrac{\lambda_{1\sim d}^2d}{\gamma^2n}\dfrac{(\ln n)^2}{(1-\delta)^2}$                & $\dfrac{\lambda_{1\sim d}^2}{\gamma^2n}\dfrac{(\ln n)^2}{(1-\delta)^2}$                & No        & No    & No       \\
		Shamir~\cite{shamir2016fast}                 & ---                                                                                     & $\dfrac{\lambda_{1\sim d}^2}{\gamma^2n}\ln \dfrac{n}{\delta}$                          & No        & Yes   & No       \\
		Balcan et al.~\cite{balcanDWY2016improved}   & $\dfrac{\lambda_{1\sim p}^2\lambda_pd}{\gamma^3n}\ln \dfrac{nd}{\delta}$                & $\dfrac{\lambda_{1\sim p}^2\lambda_pd}{\gamma^3n}\ln \dfrac{nd}{\delta}$               & No        & Yes   & No       \\
		Jain et al.~\cite{jainJKNS2016streaming}     & $\dfrac{M_4}{\gamma^2n}\ln \dfrac{d}{\delta}$                    & $\dfrac{M_4}{\gamma^2n}\ln \dfrac{d}{\delta}$                   & No        & No    & No       \\
		Li et al.~\cite{liWLZ2017near}               & $\dfrac{\lambda_{1\sim p}\lambda_{p+1\sim d}}{\gamma^2n}\dfrac{\ln n}{1-\delta}$        & $\dfrac{\lambda_{1\sim p}\lambda_{p+1\sim d}}{\gamma^2n}\dfrac{\ln n}{1-\delta}$       & Yes       & No    & No       \\
		Allen-Zhu and Li~\cite{allenzhuL2017first}   & $\dfrac{\lambda_{1\sim p}\lambda_{1\sim d}}{\gamma^2(n-n_o)}\ln\dfrac{d}{\gamma\delta}$ & $\dfrac{\lambda_{1\sim p}\lambda_{1\sim d}}{\gamma^2n}\ln\dfrac{d}{\gamma\delta}$      & No        & Yes   & Yes      \\
		Liang et al.~\cite{liangGLL2017nearly:arxiv} & $\dfrac{\lambda_{1\sim p}\lambda_{p+1\sim d}}{\gamma^2n}\dfrac{\ln n}{1-\delta^{p^2}}$  & $\dfrac{\lambda_{1\sim p}\lambda_{p+1\sim d}}{\gamma^2n}\dfrac{\ln n}{1-\delta^{p^2}}$ & Yes       & Yes   & No       \\
		Huang et al.~\cite{huang2021streaming}       & $\dfrac{M_4}{\gamma^2(n-n_o)}\ln\dfrac{p}{\gamma\delta}$       & $\dfrac{M_4}{\gamma^2n}\ln\dfrac{p}{\gamma\delta}$            & No        & Yes   & No       \\
		\hline
		This paper 
                                                     & $\dfrac{\lambda_{1\sim p}\lambda_{p+1\sim d}(n-n_o)}{\gamma^2n^2(1-\delta)}$            & $\dfrac{\lambda_{1\sim p}\lambda_{p+1\sim d}}{\gamma^2n(1-\delta)}$                    & Yes       & Yes   & Yes      \\
		\hline
	\end{tabular}
	\caption{Comparison of some results}
	\label{tab:comparison-of-some-results}
		\begin{itemize}
			\item The term $M_4$ represents any quantity related to $\E{\N*{\bX\bX^{\T}-\Sigma}}$, or the fourth central moment (not necessarily the same in different results).
			\item
				In some results the term $n_o$ appears in the global convergence, and it represents the number of samples needed in the so-called ``Phase I'' or ``Cold Start'' process.
			\item 
				Note that there are two types of the dependency on $\delta$ in \Cref{tab:comparison-of-some-results}:
				one is $\ln\frac{1}{\delta}$, which goes to infinity as $\delta\to 0$; the other is $\frac{1}{(1-\delta^*)^*}$, which goes to $0$ as $\delta\to 0$.
				Clearly the latter term can be replaced by an absolute constant, or equivalently, the rate does not explicitly rely on $\delta$ (but implicitly, for $n\ge \OO\left((\ln \delta)^{-4}\right)$ as is shown in \Cref{lm:-cite-theorem-2-4-huang2021streaming,thm:main}).

			\item 
				In some results, such as Jain et al.~\cite[Theorem~1.3]{jainJKNS2016streaming}, there is no $\ln d$ or $\ln \frac{1}{\delta}$ factor, which seems to contradict with what we list in \Cref{tab:comparison-of-some-results} (based on their Theorem~1.2 or~4.1 actually).
				However, the assumption there is much stronger: first their success probability is $3/4$, rather than $1-\delta$, which removes the dependency on $\delta$; then they need $n\ge \OO(d^{1/10})$ which is much larger than $\OO\left((\ln d)^4\right)$ here.
		\end{itemize}
\end{table}
The sum of some consecutive eigenvalues is written as
\[
	\lambda_{i_1\sim i_2}:=\lambda_{i_1}+\dots+\lambda_{i_2},\quad
	1\le i_1\le i_2\le d.
\]
The listed convergence rates are read as:
with probability $1-\delta$, using $n$ samples, or equivalently after $n$ iterations, the Oja's algorithm produces an approximation $\subspan(U^{(n)})$ of the principal subspace satisfying $\N{\sin\Theta(\subspan(U^{(n)}),{\mathcal{U}}_*)}\le (\text{the rate})$. 
The global convergence rate is given for the case that the initial guess is random generated,
while the local convergence rate is given for the case that the initial guess satisfies $ \N{\tan\Theta(\subspan(U^{(0)}),{\mathcal{U}}_*)}$ is bounded by an absolute constant like $1$.

The convergence rates listed in \Cref{tab:comparison-of-some-results}, except ours, include a poly-logarithmic factor, which leads people to say the Oja's method is \emph{nearly optimal}.
However, in this paper we will show the poly-logarithmic factor can be removed. 
In other words,
the convergence rate, namely the upper bound of the approximation, exactly matches the lower bound \cref{eq:minimax-bound} of an approximation obtained by the offline/classical PCA up to a constant factor.
Hence in some sense, we may say the Oja's method is \emph{optimal}.
To the best of our knowledge, it is the first time to point out this feature of the online method.

Other results we will show in this paper include:
\begin{itemize}
	\item the strategy of choosing the normalization matrices does not matter much on the convergence rate.
		Hence we may choose a strategy that has advantages on computation or practical consideration.
	\item the iteration process is somehow decoupled, and thus the gap-dependent and gap-free considerations can be treated in the same framework. 
		This would shed light on the convergence analysis of other online algorithms.
	\item a lower bound for sub-Gaussian distributions on the gap-free feature is given, which ensures that the Oja's method is optimal.
\end{itemize}

The rest of this paper is organized as follows.
In \Cref{sec:preliminaries} we make preparations for discussing the convergence analysis of the Oja's method. 
The main results, namely the convergence analysis, are stated in \Cref{sec:main-results},
while their proofs are provided in \Cref{sec:proofs} due to the complexities of the contained heavy calculations.
\Cref{sec:conclusion} gives some concluding remarks.


\subsection{Notation}\label{ssec:notation}
$I_n$ (or simply $I$ if its dimension is
clear from the context) is the $n\times n$ identity matrix and $e_j$ is its $j$th column (usually with dimension determined by the context).
For a matrix $X$, $\sigma(X)$, $\N{X}_2$ and $\N{X}_{\F}$ are the multiset of the singular values,
the spectral norm, and the Frobenius norm of $X$, respectively.
For two matrices or vectors $X,Y$, $X\circ Y$ is the Hadamard/entrywise product of $X$ and $Y$ of the same size.

For any matrix $X$, $X_{(i,j)}$ is the $(i,j)$th entry of $X$, and 
$X_{(i:j,:)}$ is the submatrix of $X$ consisting of its row $i$ to row $j$.
For any vector or matrix $X,Y$,
$X\le Y$ means $X_{(i,j)}\le Y_{(i,j)}$ for any $i,j$. $X$``$\ge,>,<$''$Y$ can be similarly understood.

For a subset or an event $\eventA$, $\eventA^{\rmc}$ is the complement set of $\eventA$.
By $\sigma\set{\eventA_1,\dots, \eventA_p}$ we denote the $\sigma$-algebra generated by the events $\eventA_1,\dots, \eventA_p$.
$\E{\bX\overevent\eventA}:=\E{\bX\ind{\eventA}}$ denotes the expectation of a random variable $\bX$ over event $\eventA$.
Note that
$\E{\bX\overevent\eventA}=\E{\bX\given \eventA}\prob{\eventA}$.
For a random vector or matrix $\bX$, $\E{\bX}:=\left[ \E{\bX_{(i,j)}} \right]$.
Note that $\N{\E{\bX}}_{\UI}\le\E{\N{\bX}_{\UI}}$ for $\UI=2,{\scs\F}$.
Write $\covc{\bX,\bY}:=\E{[\bX-\E{\bX}]\circ[\bY-\E{\bY}]}$ and $\varc{\bX}:=\covc{\bX,\bX}$.

For any scalar $x,y$, $x\vee y=\max\set{x,y}$, $x\wedge y=\min\set{x,y}$.

\section{Preliminaries}\label{sec:preliminaries}
\subsection{Canonical Angles between Two Subspaces}\label{ssec:canonical-angles-between-two-subspaces}
We are interested in the distance of two linear subspaces.
So we introduce the canonical angles between them in order to give quantities to represent their distance.
\begin{definition}[{\cite{bjorkG1973numerical}}]\label{dfn:canonical-angles}
Given two subspaces $\mathcal{X},\,\mathcal{Y}\subseteq\mathbb{R}^d$ with $\dim \mathcal{X}=p\le \dim \mathcal{Y}=q$.
The \emph{principal/canonical angles} $\theta_j\in[0,\pi/2]$ between $\mathcal{X}$ and $\mathcal{Y}$ are recursively defined for $j=1,\dots,p$ by

		\[
		\begin{aligned}
			\cos \theta_j=\sigma_j= \max_{u\in \mathcal{X},v\in \mathcal{Y}} u^{\T}v=u_j^{\T}v_j
			\quad\text{subject to}\;&
			\N{u}_2=\N{v}_2=1,
			\\ &u_i^{\T}u=0, v_i^{\T}v=0,i=1,\dots,j-1.
		\end{aligned}
	\]
\end{definition}
It can be verified that 
$\sigma_1\ge\cdots\ge\sigma_p$ are the singular values of $X^{\T}Y$, where $X,Y$ are orthonormal basis matrices of $\mathcal{X},\mathcal{Y}$ respectively.
The angles are in non-decreasing order: $\theta_1\le\dots\le\theta_p$.
Moreover, it can be seen that $\sigma_j$ or $\theta_j$ are independent of the basis matrices, which are not unique.

Write
\begin{equation*}\label{eq:mat-angles-XY}
\Theta(\mathcal{X},\mathcal{Y})=\diag(\theta_1,\ldots,\theta_p).
\end{equation*}
Here we add ``$(\mathcal{X},\mathcal{Y})$'' to emphasize the quantity is defined for two subspaces $\mathcal{X},\mathcal{Y}$.
In particular, if $p=q$,
$\N{\sin\Theta(\mathcal{X},{\cal Y})}_{\UI}$ for $\UI=2,{\scriptstyle \F}$
are metrics on the set consisting for all $p$-dimensional subspaces of $\mathbb{R}^d$ \cite[Section~II.4]{stewartS1990matrix}.

For matrices $X,Y$, $\Theta(X,Y):=\Theta(\subspan(X),\subspan(Y))$.

In what follows, we give a quantity easy to compute to estimate the distance between one subspace and a particular subspace.

Given $p\le q$, for any matrix $X\in \mathbb{R}^{d\times p}$ with nonsingular $X_{(1:p,:)}$, write 
\begin{equation*}\label{eq:scrT-dfn}
\scrT_{p,q}(X):=X_{(p+1:q,:)}X_{(1:p,:)}^{-1},\quad
	\scrT_q(X):=X_{(q+1:d,:)}X_{(1:p,:)}^{-1},
\end{equation*}
which are submatrices of $\scrT(X):= \scrT_p(X)$.

\begin{lemma}\label{lm:Tan(Theta)}
We have for $\UI=2,{\scs \F}$
\begin{equation}\label{eq:Tan(Theta)}
\N*{\tan\Theta(X,\begin{bmatrix} I_q \\ 0 \end{bmatrix})}_{\UI}
  \le\N{\scrT_q(X)}_{\UI}.
\end{equation}
In particular, if $p=q$, then the inequality ``$\le$'' can be replaced by ``$=$''.
\end{lemma}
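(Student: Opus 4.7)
The plan is to reduce the problem to linear algebra on two explicit blocks. Since $X_{(1:p,:)}$ is nonsingular, right-multiplication by $X_{(1:p,:)}^{-1}$ preserves $\subspan(X)$, so I may replace $X$ with the (non-orthonormal) basis
\[
\tilde X := X X_{(1:p,:)}^{-1} = \begin{bmatrix} I_p \\ M \\ N \end{bmatrix},\qquad M := \scrT_{p,q}(X),\ N := \scrT_q(X).
\]
Setting $Y := \begin{bmatrix} I_q \\ 0 \end{bmatrix}$ and $Y_\perp := \begin{bmatrix} 0 \\ I_{d-q} \end{bmatrix}$, a direct block computation gives $\tilde X^{\T} Y Y^{\T} \tilde X = I_p + M^{\T} M$ and $\tilde X^{\T} Y_\perp Y_\perp^{\T} \tilde X = N^{\T} N$, which are the two identities the whole argument pivots on.

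Next I would pass to an orthonormal basis $\hat X := \tilde X A^{-1}$ of $\subspan(X)$, where $A$ is the Cholesky factor of $\tilde X^{\T} \tilde X = I_p + M^{\T} M + N^{\T} N$. By \Cref{dfn:canonical-angles}, $\cos^2\theta_j$ and $\sin^2\theta_j$ are the eigenvalues of $C := \hat X^{\T} Y Y^{\T} \hat X = A^{-\T}(I_p + M^{\T}M)A^{-1}$ and $S := \hat X^{\T} Y_\perp Y_\perp^{\T} \hat X = A^{-\T} N^{\T} N A^{-1}$ respectively. Since $C + S = \hat X^{\T}\hat X = I_p$, these symmetric matrices commute and share an eigenbasis, so $\tan^2\theta_j$ are the eigenvalues of $C^{-1} S$; a similarity transformation by $A$ identifies these with the eigenvalues of $(I_p + M^{\T}M)^{-1} N^{\T} N$, equivalently with the nonzero eigenvalues of $N(I_p + M^{\T}M)^{-1} N^{\T}$.

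Finally, the L\"owner bound $0 \preceq (I_p + M^{\T}M)^{-1} \preceq I_p$ yields $N(I_p + M^{\T}M)^{-1} N^{\T} \preceq N N^{\T}$, and Weyl's monotonicity of eigenvalues then gives $\tan^2\theta_j \le \sigma_j(N)^2$ for every $j$. Taking the largest term (for $\UI = 2$) or summing all of them (for $\UI = \F$) produces \cref{eq:Tan(Theta)}. In the special case $p = q$ the block $M$ is empty, so $(I_p + M^{\T}M)^{-1} = I_p$ and every step above collapses to an equality, yielding $\N{\tan\Theta(X,Y)}_\UI = \N{\scrT_q(X)}_\UI$. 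I do not anticipate a substantive obstacle; the one place deserving care is the correct pairing of eigenvalues when reading $\tan^2\theta_j$ off the commuting pair $(C, S)$, which ultimately rests on the identity $C + S = I_p$.
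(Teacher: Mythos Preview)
Your proposal is correct and follows essentially the same route as the paper: both normalize to the basis $[I_p;\,M;\,N]$, identify $\tan^2\theta_j$ with the eigenvalues of $(I_p+M^{\T}M)^{-1}N^{\T}N$ (the paper via $\sec^2\theta_j-1$, you via the commuting pair $C,S$ with $C+S=I_p$), and then bound those eigenvalues by those of $N^{\T}N$. The only cosmetic difference is the last step --- the paper invokes Ostrowski's theorem on $[I_p+M^{\T}M]^{-1/2}N^{\T}N[I_p+M^{\T}M]^{-1/2}$, whereas you use the L\"owner bound $N(I_p+M^{\T}M)^{-1}N^{\T}\preceq NN^{\T}$ together with Weyl monotonicity; these are interchangeable tools yielding the same eigenvalue-wise inequality.
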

\begin{proof}
For the readability, we use $\scrT_{p,q},\scrT_p,\scrT_q$ only and drop ``$(X)$''.
Then $\cos\theta_j$ for $j=1,\dots,p$ are the singular values of
\[
	\left[I_p+\scrT_p^{\T}\scrT_p\right]^{-1/2}\begin{bmatrix} I_p \\ \scrT_{p,q} \\ \scrT_q \end{bmatrix}^{\T}\begin{bmatrix} I_q \\ 0 \end{bmatrix}
	=\left[I_p+\scrT_p^{\T}\scrT_p\right]^{-1/2}\begin{bmatrix} I_p & \scrT_{p,q} \end{bmatrix}
	.
\]
Thus, $\tan^2\theta_j=\sec^2\theta_j-1$ for $j=1,\dots,p$ are the eigenvalues of
\begin{align*}
	&\quad\left(\left[I_p+\scrT_p^{\T}\scrT_p\right]^{-1/2}\begin{bmatrix} I_p & \scrT_{p,q} \end{bmatrix}\begin{bmatrix} I_p & \scrT_{p,q} \end{bmatrix}^{\T}\left[I_p+\scrT_p^{\T}\scrT_p\right]^{-1/2}\right)^{-1}-I
	\\&=
	\left[I_p+\scrT_p^{\T}\scrT_p\right]^{1/2}\left[ I_p + \scrT_{p,q}^{\T}\scrT_{p,q}\right]^{-1}\left[I_p+\scrT_p^{\T}\scrT_p\right]^{1/2}-I
	\\&=
	\left[I_p+\scrT_p^{\T}\scrT_p\right]^{1/2}\left(\left[ I_p + \scrT_{p,q}^{\T}\scrT_{p,q}\right]^{-1}\left[I_p+\scrT_p^{\T}\scrT_p\right]-I\right)\left[I_p+\scrT_p^{\T}\scrT_p\right]^{-1/2}
	\\&=
	\left[I_p+\scrT_p^{\T}\scrT_p\right]^{1/2}\left[ I_p + \scrT_{p,q}^{\T}\scrT_{p,q}\right]^{-1}\scrT_q^{\T}\scrT_q\left[I_p+\scrT_p^{\T}\scrT_p\right]^{-1/2}
	,
\end{align*}
and also the eigenvalues of 
\[
	\left[ I_p + \scrT_{p,q}^{\T}\scrT_{p,q}\right]^{-1/2}\scrT_q^{\T}\scrT_q\left[I_p+ \scrT_{p,q}^{\T}\scrT_{p,q}\right]^{-1/2}.
\]
Let $\tau_1\ge\dots\ge\tau_p$ be the eigenvalues of $\scrT_q^{\T}\scrT_q$.
By the Ostrowski theorem \cite[Theorem~4.5.9]{hornJ1985matrix},
\[
	\tan^2\theta_j\le \tau_j,
\]
which implies \cref{eq:Tan(Theta)}.
\end{proof}
Note that \cite[Lemma~2.1]{liangGLL2017nearly:arxiv} is a special case of \Cref{lm:Tan(Theta)}.

\subsection{Orlicz Norms}\label{ssec:orlicz-norms}

We are concerned with random variables/vectors that have  a sub-Gaussian distribution.
To that end, we first introduce the Orlicz $\psi_\alpha$-norm of a random variable/vector.
More details can be found in \cite{vaartW1996weak}.

\begin{definition}\label{dfn:orlicz-norm}
	The \emph{Orlicz $\psi_\alpha$-norm\/} of a random variable $\bX\in \mathbb{R}$ is defined as
	\[
		\N{\bX}_{\psi_\alpha}:=\inf\set*{\xi>0 :  \E{\exp\left( \abs*{\frac{\bX}{\xi}}^\alpha \right)}\le2},
	\]
	and the \emph{Orlicz $\psi_\alpha$-norm\/} of a random vector $\bX\in \mathbb{R}^d$ is defined as
	\[
		\N{\bX}_{\psi_\alpha}:=\sup_{\N{v}_2=1}\N{v^{\T}\bX}_{\psi_\alpha}.
	\]
	We say that random variable/vector $\bX$ follows a \emph{sub-Gaussian distribution\/} if $\N{\bX}_{\psi_2}<\infty$.
\end{definition}
By the definition, we conclude that any bounded random variable/vector follows a sub-Gaussian distribution.

The basic properties of sub-Gaussian distributions are listed in \Cref{lm:-cite-lemma-5-5-and-5-10-5-12-vershynin2012introduction-}.
\begin{lemma}[{\cite[(5.10)--(5.12)]{vershynin2012introduction}}]\label{lm:-cite-lemma-5-5-and-5-10-5-12-vershynin2012introduction-}
	Every sub-Gaussian random variable $\bX\in \mathbb{R}$ with $\N{\bX}_{\psi_2}=\psi$ satisfies:
	\begin{enumerate}
		\item \label{itm:vershynin:5.10} $\prob{\abs{\bX}>t}\le \exp(1-c\psi^{-2}t^2)$ for $t\ge 0$;
		\item \label{itm:vershynin:5.11} $\E{\abs{\bX}^p}\le\psi^p p^{p/2}$ for $p\ge 1$;
		\item \label{itm:vershynin:5.12} if $\E{\bX}=0$, then $\E{\exp(t\bX)}\le \exp(C\psi^2t^2)$ for $t\in \mathbb{R}$,
	\end{enumerate}
	where $C>0,c>0$ are absolute constants.
\end{lemma}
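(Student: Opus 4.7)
My plan is to derive the three properties in the stated order, since each follows from the previous.

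For (\ref{itm:vershynin:5.10}), the plan is to apply Markov's inequality directly to the nonnegative random variable $\exp(\bX^2/\psi^2)$. By the definition of the Orlicz norm, $\E{\exp(\bX^2/\psi^2)}\le 2$, so
\[
	\prob{\abs{\bX}>t}
	=\prob{\exp(\bX^2/\psi^2)>\exp(t^2/\psi^2)}
	\le 2\exp(-t^2/\psi^2),
\]
and then absorb the factor $2$ into the exponent using $\ln 2<1$, which yields the claimed bound with $c=1$.

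For (\ref{itm:vershynin:5.11}), I would use the layer-cake representation $\E{\abs{\bX}^p}=\int_0^{\infty}p t^{p-1}\prob{\abs{\bX}>t}\diff t$, plug in the tail bound from (\ref{itm:vershynin:5.10}), and reduce the integral to a Gamma function via the substitution $u=ct^2/\psi^2$. The result is $\E{\abs{\bX}^p}\le\tfrac{e\,p}{2}(\psi/\sqrt{c})^p\Gamma(p/2)$, and Stirling's estimate $\Gamma(p/2)\le C^{p}(p/2)^{p/2}$ produces a bound of the form $(C'\psi)^p p^{p/2}$. The constant $C'$ is harmless because it can be absorbed into $\psi$ (equivalently, one can redefine $\psi$ up to an absolute constant, which is the convention used throughout the paper).

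For (\ref{itm:vershynin:5.12}), I would expand $\E{\exp(t\bX)}$ as a power series. Since $\E{\bX}=0$ the linear term drops, leaving
\[
	\E{\exp(t\bX)}=1+\sum_{k\ge 2}\frac{t^k\E{\bX^k}}{k!}.
\]
Substitute the moment bound from (\ref{itm:vershynin:5.11}) and use $k!\ge(k/e)^k$ to control $k^{k/2}/k!\le(e/\sqrt{k})^k$. For $\abs{t\psi}$ sufficiently small (below some absolute threshold), the resulting series is dominated by a geometric series in $(t\psi)^2$, giving a bound of the form $1+C't^2\psi^2\le\exp(C't^2\psi^2)$. The main obstacle, which is where the argument needs care, is the regime of large $\abs{t\psi}$: the naive series bound diverges. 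The remedy I would use is the elementary inequality $t\bX\le t^2\psi^2/2+\bX^2/(2\psi^2)$ together with a Cauchy--Schwarz-type splitting, or alternatively the symmetrization trick $\E{\exp(t\bX)}\le\E{\exp(t(\bX-\bX'))}$ for an independent copy $\bX'$, followed by $\cosh(x)\le\exp(x^2/2)$. Either route yields the required Gaussian MGF bound for all $t\in\mathbb{R}$, after which the two regimes are combined by adjusting $C$. This last step is where all of the book-keeping with absolute constants happens and is the technically most delicate part of the proof.
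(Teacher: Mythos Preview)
Your proof sketch is correct and follows the standard route (essentially the one in the cited reference \cite{vershynin2012introduction}). Note, however, that the paper does not actually prove this lemma at all: it is stated purely as a citation of known facts from Vershynin's notes, with no proof given. So there is nothing to compare against beyond observing that your argument matches the source the paper is quoting.
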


Moreover, if $\N{\bX}_{\psi_1}<\infty$, then $\bX$ follows a \emph{sub-exponential} distribution.
Our analysis below can be easily generalized to sub-exponential random vectors, and will not be discussed.

\subsection{Detailed Algorithm and Assumptions}\label{ssec:detailed-algorithm}
Here we write down the detailed algorithm in \Cref{alg:onlinePCA:ppr1}.
\begin{algorithm}[ht]
	\caption{Oja's Algorithm for Online PCA}\label{alg:onlinePCA:ppr1}
	\begin{algorithmic}[1]
		\STATE Choose $U^{(0)}\in\mathbb{R}^{d\times p}$ with $(U^{(0)})^{\T}U^{(0)}=I$,
		and use a regime to choose the learning rate $\eta_n=\rho_n\eta>0$.
		\FOR{$n=1,2,\dots$ until convergence}
		\STATE Take an $\bX$'s sample $X^{(n)}$;
		\STATE $Z^{(n)}= (U^{(n-1)})^{\T}X^{(n)}$;
		\STATE $\wtd U^{(n)}= U^{(n-1)}+\eta_n X^{(n)}(Z^{(n)})^{\T}$;
		\STATE Find an orthonormal basis of the subspace spanned by $\wtd U^{(n)}$, namely compute a column orthonormal matrix $U^{(n)}=\wtd U^{(n)}S^{(n)}$.
		\ENDFOR
	\end{algorithmic}
\end{algorithm}

The learning rate of the $n$-th iteration is $\eta_n=\rho_n\eta_o$.
Without loss of generality, we may assume $0<\rho_n<1$.

The decomposition can be chosen as QR decomposition \cite{ojaK1985stochastic} or polar decomposition \cite{abedmeraimACH2000orthogonal}, or any other decomposition easy to compute.
However,
$S^{(n)}$ is always nonsingular. In fact,
noticing that $\wtd U^{(n)} =[I+\eta_n X^{(n)}(X^{(n)})^{\T}]U^{(n-1)}$,
since $I+\eta_{n+1} X^{(n)}(X^{(n)})^{\T}$ is positive definite and thus nonsingular, and $U^{(n)}$ is column orthonormal,
we know $\wtd U^{(n)}$ has full column rank,
which implies the fact.

Any statement we will make holds \emph{almost surely}.

To prepare our convergence analysis, we make a few assumptions.

\begin{assumption} \label{asm:simplify:X}
$\bX=[\bX_1, \bX_2, \ldots, \bX_d]^{\T}\in\mathbb{R}^d$ is a random vector.
\begin{enumerate}[label=(A-\arabic*),ref=A-\arabic*]
	\item \label{asm:simplify:X:moment}
		$\E{\bX}=0$, and $\Sigma:=\E{\bX\bX^{\T}}$ has the spectral decomposition \cref{eq:eigD-convar} satisfying
		$\lambda_p>\lambda_{p+1}$;
	\item \label{asm:simplify:X:subGaussian}
		$\psi:=\N{\Sigma^{-1/2}\bX}_{\psi_2}<\infty$.
\end{enumerate}
\end{assumption}

The principal subspace $\mathcal{U}_*$ is uniquely determined under \cref{asm:simplify:X:moment} of \Cref{asm:simplify:X}.
On the other hand, \cref{asm:simplify:X:subGaussian} of \Cref{asm:simplify:X}  ensures that all 1-dimensional marginals of $\bX$ have
sub-Gaussian tails, or equivalently,
$\bX$ follows a sub-Gaussian distribution.

Using the substitutions 
\[
	\bX\leftarrow \lambda_{1\sim d}^{-1/2}\bX,
	X^{(n)}\leftarrow\lambda_{1\sim d}^{-1/2}X^{(n)},
	Z^{(n)}\leftarrow\lambda_{1\sim d}^{-1/2}Z^{(n)},
	\eta_n\leftarrow\lambda_{1\sim d}\eta_n,
\]
the iterations produced by \Cref{alg:onlinePCA:ppr1} and the rest terms $\wtd U^{(n)},U^{(n)}$ keep the same.
Hence any convergence result has to keep this homogeneous property.

Next we make a simplification on the problem.

Recall the spectral decomposition $\Sigma=U\Lambda U^{\T}$.
Instead of
the random vector $\bX$,  we  equivalently consider
\[
\bY\equiv [\bY_1,\bY_2,\ldots,\bY_n]^{\T}:=U^{\T}\bX.
\]
Accordingly, perform the same orthogonal transformation on all involved
quantities:
\begin{equation*}\label{eq:YV-dfn}
Y^{(n)}=U^{\T}X^{(n)}, \quad V^{(n)}=U^{\T}U^{(n)}, \quad
V_*=U^{\T}U_*=\begin{bmatrix} I_p \\ 0\end{bmatrix}.
\end{equation*}
Firstly, because
\[
(V^{(n-1)})^{\T}Y^{(n)}=(U^{(n-1)})^{\T}X^{(n)}, \quad (Y^{(n)})^{\T}Y^{(n)}=(X^{(n)})^{\T}X^{(n)},
\]
the equivalent version of \Cref{alg:onlinePCA:ppr1} is obtained by symbolically
replacing
all letters $X,\, U$ by
$Y,\,V$ while keeping their respective superscripts.
If the algorithm converges, it is expected that $\subspan(V^{(n)})\to\subspan(V_*)$.
Secondly, noting
\[
\N{\Sigma^{-1/2}\bX}_{\psi_2} = \N{U\Lambda^{-1/2}U^{\T}\bX}_{\psi_2} = \N{\Lambda^{-1/2}\bY}_{\psi_2},
\]
we can restate \Cref{asm:simplify:X} equivalently as
\begin{enumerate}[label=(A-\arabic*$'$),ref=A-\arabic*$'$]
	\item $\E{\bY}=0,\E{\bY\bY^{\T}}=\Lambda=\diag(\lambda_1,\dots,\lambda_d)$ with $\lambda_1\ge\dots\ge\lambda_p>\lambda_{p+1}\ge\dots\ge\lambda_d$;
\item $\psi:=\N{\Lambda^{-1/2}\bY}_{\psi_2}<\infty$.
\end{enumerate}
Thirdly, all canonical angles between two subspaces are invariant under the orthogonal transformation.
Therefore the results given below
holds for not only $\bY$ but also $\bX$.

If the algorithm converges, it is expected that
\[
	U^{(n)}\to U_*:= U\begin{bmatrix} I_p \\ 0\end{bmatrix}=[u_1,u_2,\ldots,u_p]
	\qquad\Leftrightarrow\qquad
	V^{(n)}\to V_*=\begin{bmatrix} I_p \\ 0\end{bmatrix}
\]
in the sense that 
\[
	\N{\sin\Theta(U^{(n)}, U_*)}_{\UI}\to 0
	\qquad\Leftrightarrow\qquad
	\N{\sin\Theta(V^{(n)}, V_*)}_{\UI}\to 0
\]
as $n\to\infty$.

By \Cref{lm:Tan(Theta)}, it is sufficient enough to prove $\N{\scrT(V^{(n)})}_{\UI}\to 0$.
Our results are based on this point.

To simplify the notations in our proofs, we introduce new notations for two particular submatrices of any vector $Y\in \mathbb{R}^{d}$, tall matrix $V\in\mathbb{R}^{d\times p}$ and diagonal matrix $\Lambda\in \mathbb{R}^{d\times d}$:
\begin{equation*}\label{eq:ol-ul}
	Y=
	\begin{blockarray}{cc}
		\scs 1 & \\
		\begin{block}{[>{\,}c<{\,}\Right{]}{,\;}>{\scs }c}
			\ol Y & p \\
			\ul Y & d-p \\
		\end{block}
	\end{blockarray}
	\qquad
	V=
	\begin{blockarray}{cc}
		\scs p & \\
		\begin{block}{[>{\,}c<{\,}\Right{]}{,\;}>{\scs }c}
			\ol V & p \\
			\ul V & d-p \\
		\end{block}
	\end{blockarray}
	\qquad
	\Lambda =
	\begin{blockarray}{ccc}
		\scs p & \scs d-p \\
		\begin{block}{[>{\,}cc<{\,}\Right{]}{.\;}>{\scs }c}
			\ol \Lambda &  & p \\
			& \ul \Lambda & d-p \\
		\end{block}
	\end{blockarray}
\end{equation*}
or equivalently 
\[
\ol Y=Y_{(1:p,:)}, \quad \ul Y=Y_{(p+1:d,:)},\qquad
\ol V=V_{(1:p,:)}, \quad \ul V=V_{(p+1:d,:)},
\]
and
\[
	\ol\Lambda=\diag(\lambda_1,\ldots,\lambda_p), \quad\ul\Lambda=\diag(\lambda_{p+1},\ldots,\lambda_d).
\]
\section{Main Results}\label{sec:main-results}
In what follows, we will state our main results and leave their proofs to another section because of their high complexity.
The main technique to prove the results is the same as Li et al.~\cite{liWLZ2017near} and Liang et al.~\cite{liangGLL2017nearly:arxiv}. 
The differences between the results are referred to:
\begin{itemize}
	\item the estimations are much sharper here;
	\item the learning rates are changing here, rather than a fixed learning rate in \cite{liWLZ2017near,liangGLL2017nearly:arxiv}.
\end{itemize}

First we introduce some quantities.

For $\kappa\ge0$, define
$\sphere(\kappa ):=\set{V\in\mathbb{R}^{d\times p} :  \sigma(\ol V)\subset [\frac{1}{\sqrt{1+\kappa^2}},1]}$.
It can be verified that
\begin{equation}\label{eq:sphereK-normT}
	V\in\sphere(\kappa )\Leftrightarrow \N{\scrT(V)}_2\le \kappa.
\end{equation}
For the sequence $V^{(n)}$, define
\[
	\Nout{\kappa }:=\min\set{n :  V^{(n)}\notin \sphere(\kappa )},\quad
	\Nin{\kappa }:=\min\set{n :  V^{(n)}\in \sphere(\kappa )}.
\]
$\Nout{\kappa }$ is the first step of the iterative process at which $V^{(n)}$ jumps from $\sphere(\kappa )$ to outside, and
$\Nin{\kappa }$ is the first step of the iterative process at which $V^{(n)}$ jumps from outside to $\sphere(\kappa )$.
For $\mu\ge 1$, define
\begin{equation*}\label{eq:Nqb-dfn}
	\Nqb{\mu }:= \max\set*{n\ge1 :  \N{Z^{(n)}}_2\le \lambda_{1\sim p}^{1/2}\mu^{1/2} ,\abs{Y^{(n)}_i}\le \lambda_i^{1/2}\mu^{1/2} ,i=1,\dots, d}+1.
\end{equation*}
$\Nqb{\mu }$ is the first step of the iterative process at which either $\abs{Y^{(n)}_i}>\lambda_i^{1/2}\mu^{1/2} $ for some $i$
or the norm of $Z^{(n)}$ exceeds $\lambda_{1\sim p}^{1/2}\mu^{1/2} $.
For $n<\Nqb{\mu }$, we have
$
	\N{Y^{(n)}}_2\le \lambda_{1\sim d}^{1/2}\mu^{1/2}
$
.

For convenience, we will set
$T^{(n)}=\scrT(V^{(n)})$,
and let $\fil_n=\sigma\set{Y^{(1)},\dots,Y^{(n)}}$ be the $\sigma$-algebra filtration, i.e., the information known by step $n$.

\subsection{Increments of One Iteration}\label{ssec:increments-in-one-iteration}
	In each iteration, 
\[
	\ol V^{(n+1)} = (\ol V^{(n)} + \eta_{n+1} \ol Y^{(n+1)} (Z^{(n+1)})^{\T})S^{(n)},
	\qquad
	\ul V^{(n+1)} = (\ul V^{(n)} + \eta_{n+1} \ul Y^{(n+1)} (Z^{(n+1)})^{\T})S^{(n)},
\]
	where $S^{(n)}$ is nonsingular as is stated above.
	According to the Sherman-Morrison formula
	, we get $\ol V^{(n)} + \eta_{n+1}\ol Y^{(n+1)}(Z^{(n+1)})^{\T}$ or $\ol V^{(n+1)}$ is nonsingular, if and only if
	$1+\eta_{n+1} \xi_{n+1} \ol Y^{(n+1)}\ne 0$ where $\xi_{n+1}:=(Z^{(n+1)})^{\T}(\ol V^{(n)})^{-1}\ol Y^{(n+1)}$,
	and 
	\[
		(\ol V^{(n+1)})^{-1}=(S^{(n)})^{-1} \left(I-\frac{\eta_{n+1}}{1+\eta_{n+1}\xi_{n+1}}\ol V^{-1}\ol Y^{(n+1)}(Z^{(n+1)})^{\T}\right)\ol V^{-1}
		.
	\]
Hence 
\begin{align*}
	T^{(n+1)}
	&=\ul V^{(n+1)}(\ol V^{(n+1)})^{-1}
	\\&=
		(\ul V^{(n)} + \eta_{n+1} \ul Y^{(n+1)} (Z^{(n+1)})^{\T})S^{(n)}(S^{(n)})^{-1}\left(I-\frac{\eta_{n+1}}{1+\eta_{n+1}\xi_{n+1}}\ol V^{-1}\ol Y^{(n+1)}(Z^{(n+1)})^{\T}\right)\ol V^{-1}
		\\&	=
		(\ul V^{(n)} + \eta_{n+1} \ul Y^{(n+1)} (Z^{(n+1)})^{\T})\left(I-\frac{\eta_{n+1}}{1+\eta_{n+1}\xi_{n+1}}\ol V^{-1}\ol Y^{(n+1)}(Z^{(n+1)})^{\T}\right)\ol V^{-1}
		.
\end{align*}
Clearly the choice of $S^{(n)}$ does not matter on the convergence of $T^{(n)}$.
In other words, the strategy of choosing the normalization matrices does not matter much on the convergence rate.

In the following, we need to estimate $T^{(n+1)}-T^{(n)}$,
and the results are listed in \Cref{lm:diff-T}.
\begin{lemma}\label{lm:diff-T}
Suppose
\begin{equation}
	2\lambda_{1\sim p}\sqrt{\kappa^2+1}\mu\eta_{n+1}\le1.
	\label{eq:beta-mu-eta_p}
\end{equation}
Let
$\tau=\N{T^{(n)}}_2$.
If $n<{\Nqb{\mu }\wedge \Nout{\kappa}}$, then the following statements hold.
	\begin{enumerate}
		\item \label{itm:lm:diff-T:well-defined}
			$T^{(n)}$ and $T^{(n+1)}$ are well-defined.
		\item \label{itm:lm:diff-T:normdT}
			$\N{T^{(n+1)}-T^{(n)}}_2\le 2\mu\eta_{n+1}[\nu^{1/2}\lambda_{1\sim p}(1+\tau^2)+\nu_1\lambda_{1\sim p}\tau]$,
			where $\nu_1 = 1\vee\nu,\nu=\frac{\lambda_{p+1\sim d}}{\lambda_{1\sim p}}$.
		\item \label{itm:lm:diff-T:dT}
			Define $R_E^{(n)}$ by $\E{T^{(n+1)}-T^{(n)}\given \fil_n}=\eta_{n+1} (\ul\Lambda T^{(n)}-T^{(n)}\ol\Lambda )+ R_E^{(n)}$. Then
			\[
				\N{R_E^{(n)}}_2\le
				2\lambda_1\lambda_{1\sim p}\mu\eta_{n+1}^2\tau(1+\tau^2)^{1/2}.
			\]
		\item \label{itm:lm:diff-T:estimate}
			Let $H_{\circ}=\varc{\ul Y^{(n+1)}(\ol Y^{(n+1)})^{\T}}$ and define $R_{\circ}^{(n)}$ by $\varc{T^{(n+1)}-T^{(n)}\given\fil_n}=\eta_{n+1}^2H_{\circ}+R_{\circ}^{(n)}$. Then
			\begin{enumerate}
				\item \label{itm:lm:diff-T:varcdT}
					$H_{\circ}
					\le 16\psi^4 H$, where $H=[\eta_{ij}]_{(d-p)\times p}$ with $\eta_{ij}=\lambda_{p+i}\lambda_j$ for $i=1,\dots,d-p$, $j=1,\dots,p$;
				\item \label{itm:lm:diff-T:normR_H}
					$
					\begin{aligned}[t]
						\N{R_{\circ}^{(n)}}_2\le 
						2\nu_1\nu^{1/2}\lambda_{1\sim p}^2\mu^2\eta_{n+1}^2\tau \Big(1+\left[1+\nu_1\nu^{-1/2}\right]\tau +\tau ^2+\frac{1}{2}\tau ^3\Big)
						\\+ 8\nu\lambda_{1\sim p}^3\mu^3\eta_{n+1}^3(1+\tau^2)^{1/2} \left[1+\tau^2+\nu_1\nu^{-1/2}\tau\right]^2
						.
					\end{aligned}
					$
			\end{enumerate}
	\end{enumerate}
\end{lemma}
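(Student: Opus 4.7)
The plan is built around a closed-form expression for the increment. Substituting $Z^{(n+1)} = \ol V^{\T}\ol Y^{(n+1)} + \ul V^{\T}\ul Y^{(n+1)}$ into the Sherman--Morrison formula displayed just above the lemma, and using the identities $Z^{\T}\ol V^{-1} = \ol Y^{\T} + \ul Y^{\T}T^{(n)}$ and $\xi_{n+1} = \ol Y^{\T}(\ol Y + (T^{(n)})^{\T}\ul Y)$, the factor $S^{(n)}$ cancels and I obtain the workhorse identity
\[
	T^{(n+1)} - T^{(n)} = \frac{\eta_{n+1}(\ul Y^{(n+1)} - T^{(n)}\ol Y^{(n+1)})(\ol Y^{(n+1)} + (T^{(n)})^{\T}\ul Y^{(n+1)})^{\T}}{1 + \eta_{n+1}\xi_{n+1}}.
\]
Besides driving all subsequent estimates, this makes the ``$S^{(n)}$-independence'' remark preceding the lemma manifest.

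For \cref{itm:lm:diff-T:well-defined,itm:lm:diff-T:normdT}, the next step is to control the denominator from below. From $V^{\T}V = I_p$ I would derive $\ol V^{\T}(I + T^{\T}T)\ol V = I_p$, so $\N{\ol V^{-1}}_2 = \sqrt{1+\tau^2} \le \sqrt{1+\kappa^2}$. Combined with $\N{Z^{(n+1)}}_2 \le \lambda_{1\sim p}^{1/2}\mu^{1/2}$ (which holds because $n < \Nqb{\mu}$), this yields $|\xi_{n+1}| \le \N{\ol Y}_2\N{\ol V^{-\T}Z}_2 \le \lambda_{1\sim p}\mu\sqrt{1+\kappa^2}$, and hypothesis \cref{eq:beta-mu-eta_p} gives $|1+\eta_{n+1}\xi_{n+1}| \ge 1/2$, settling \cref{itm:lm:diff-T:well-defined}. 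Multiplying this factor of $2$ by the numerator bound $\eta_{n+1}(\nu^{1/2}+\tau)\sqrt{1+\tau^2}\,\lambda_{1\sim p}\mu$, then applying $\sqrt{1+\tau^2} \le 1+\tau^2$ and $1+\nu \le 2\nu_1$, delivers \cref{itm:lm:diff-T:normdT}.

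The heart of the proof is \cref{itm:lm:diff-T:dT}. I would split
\[
	T^{(n+1)} - T^{(n)} = \eta_{n+1}A - \frac{\eta_{n+1}^2\xi_{n+1}A}{1+\eta_{n+1}\xi_{n+1}},\qquad A := (\ul Y - T\ol Y)(\ol Y + T^{\T}\ul Y)^{\T},
\]
so that a direct expansion using $\E{\ol Y\ol Y^{\T}}=\ol\Lambda$, $\E{\ul Y\ul Y^{\T}}=\ul\Lambda$, $\E{\ul Y\ol Y^{\T}}=0$ gives $\E{A\given\fil_n} = \ul\Lambda T - T\ol\Lambda$ and hence $R_E^{(n)} = -\eta_{n+1}^2\E{\xi_{n+1}A/(1+\eta_{n+1}\xi_{n+1})\given\fil_n}$. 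The main obstacle is extracting the $\tau$ factor in the stated bound, since a naive sup-norm estimate would only yield $(\nu^{1/2}+\tau)$. My approach is to further decompose $A = \E{A\given\fil_n} + (A - \E{A\given\fil_n})$. The first summand contributes $\E{A\given\fil_n}\cdot\E{\xi_{n+1}/(1+\eta_{n+1}\xi_{n+1})\given\fil_n}$, whose norm is already $O(\lambda_1\tau\cdot\lambda_{1\sim p}\mu)$ because $\N{\ul\Lambda T - T\ol\Lambda}_2 \le \lambda_1\tau$. The centered summand is handled by breaking it into its $T$-independent, $T$-linear, and $T$-quadratic parts and bounding each cross-moment with $\xi_{n+1}$ using the pointwise $\mu$-bounds from $n < \Nqb{\mu}$.

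Finally, for \cref{itm:lm:diff-T:estimate}: \cref{itm:lm:diff-T:varcdT} is immediate, since the $(i,j)$-entry of $H_\circ$ equals $\E{\ul Y_i^2\ol Y_j^2}$, which by Cauchy--Schwarz and the sub-Gaussian fourth-moment bound $\E{\bY_i^4}\le 16\psi^4\lambda_i^2$ from \Cref{lm:-cite-lemma-5-5-and-5-10-5-12-vershynin2012introduction-} is at most $16\psi^4\lambda_{p+i}\lambda_j$. For \cref{itm:lm:diff-T:normR_H}, I would write $R_\circ^{(n)}$ as the sum of three contributions: (i) $\eta_{n+1}^2(\varc{A\given\fil_n} - H_\circ)$, which carries a $\tau$ factor because the extra $T\ol Y$ and $T^{\T}\ul Y$ pieces in $A$ are $T$-linear; (ii) the $O(\eta_{n+1}^3)$ cross-covariance between $A$ and $\xi_{n+1}A$; and (iii) the $O(\eta_{n+1}^3)$ contribution from expanding $(1+\eta_{n+1}\xi_{n+1})^{-2}$. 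Each is bounded entrywise using the uniform estimates on $|\xi_{n+1}|$ and $\N{A}_2$ from the second paragraph together with the sub-Gaussian fourth moments; summing the three groups yields the stated polynomial in $\tau$.
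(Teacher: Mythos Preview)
Your workhorse identity and the treatment of \cref{itm:lm:diff-T:well-defined} and \cref{itm:lm:diff-T:varcdT} match the paper essentially line for line. The difficulty is \cref{itm:lm:diff-T:normdT}: your route does not reach the stated bound.

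You bound the rank-one numerator as $\N{\ul Y-T\ol Y}_2\,\N{\ol Y+T^{\T}\ul Y}_2\le(\nu^{1/2}+\tau)\sqrt{1+\tau^2}\,\lambda_{1\sim p}\mu$, and then try to push this into $\nu^{1/2}(1+\tau^2)+\nu_1\tau$ via $\sqrt{1+\tau^2}\le1+\tau^2$ and $1+\nu\le2\nu_1$. That does not work: after the first inequality you have $(\nu^{1/2}+\tau)(1+\tau^2)=\nu^{1/2}(1+\tau^2)+\tau+\tau^3$, and there is no way to absorb the $\tau^3$ into $\nu_1\tau$ without an assumption like $1+\tau^2\le\nu_1$, which is not available. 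Concretely, take $\nu$ small and $\tau=2$: your expression is of order $2\sqrt5\approx4.5$ while the target is about $2.5$. The paper does \emph{not} use the product bound here. It expands
\[
T_lYY^{\T}T_r=\ul Y\ol Y^{\T}-T\ol Y\ul Y^{\T}T-(T\ol Y\ol Y^{\T}-\ul Y\ul Y^{\T}T),
\]
bounds the first two pieces by $\nu^{1/2}\lambda_{1\sim p}\mu(1+\tau^2)$, and handles the last piece with Kittaneh's inequality $\N{A_1X-XA_2}_2\le\max\{\N{A_1}_2,\N{A_2}_2\}\N{X}_2$ for positive semidefinite $A_1=\ul Y\ul Y^{\T}$, $A_2=\ol Y\ol Y^{\T}$, giving exactly $\nu_1\lambda_{1\sim p}\mu\tau$. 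This is what produces the $\nu_1\tau$ term without a cubic in $\tau$; your $1+\nu\le2\nu_1$ never enters.

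For \cref{itm:lm:diff-T:dT} your plan diverges from the paper. The paper writes $\Delta T=T_l(\eta YY^{\T}+R_T)T_r$ with $R_T=-\eta\frac{\eta\xi}{1+\eta\xi}YY^{\T}$, takes $R_E=\E{T_lR_TT_r}$, and bounds it by pulling out $\max_Y|\eta^2\xi/(1+\eta\xi)|$ and using $\N{\E{T_lYY^{\T}T_r}}_2=\N{\ul\Lambda T-T\ol\Lambda}_2\le\lambda_1\tau$; this is where the factor $\tau$ comes from in their argument. Your decomposition $A=\E{A}+(A-\E{A})$ is more explicit, but your sketch of the centered part is incomplete: the $T$-independent piece of $A-\E{A}$ is $\ul Y\ol Y^{\T}$, and $\E{\frac{\xi}{1+\eta\xi}\,\ul Y\ol Y^{\T}}$ carries no visible $\tau$ factor, so you would need to explain how that term is controlled. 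For \cref{itm:lm:diff-T:normR_H} your three-way split is in the same spirit as the paper's decomposition $R_\circ=\eta^2R_{\circ,0}+2\eta R_{\circ,1}+R_{\circ,2}$, though the paper again leans on the four-term expansion of $T_lYY^{\T}T_r$ to organize the estimates.
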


\subsection{Whole Iteration Process with a Good Initial Guess}\label{ssec:quasi-power-iteration-process}
Define $D^{(n+1)}=T^{(n+1)}-\E{T^{(n+1)}\given\fil_n}$.
It can be seen that
\begin{gather*}
T^{(n)}-\E{T^{(n)}\given\fil_n}=0,\quad
\E{D^{(n+1)}\given\fil_n}=0,\\
\E{D^{(n+1)}\circ D^{(n+1)}\given\fil_n}=\varc{T^{(n+1)}-T^{(n)}\given\fil_n}.
\end{gather*}
By \cref{itm:lm:diff-T:dT} of \Cref{lm:diff-T}, we have
\begin{align*}
	T^{(n+1)}
	&= D^{(n+1)} +T^{(n)}+\E{T^{(n+1)}-T^{(n)}\given\fil_n}
	\\ &= D^{(n+1)} +T^{(n)}+\eta_n (\ul\Lambda T^{(n)}-T^{(n)}\ol\Lambda )+ R_E^{(n)}
	\\ &= \opL_{n+1} T^{(n)}+D^{(n+1)} + R_E^{(n)},
\end{align*}
where $\opL_{n+1}\colon T\mapsto T+\eta_{n+1}\ul\Lambda T-\eta_{n+1} T\ol\Lambda $ is a bounded linear operator.
It can be verified that $\opL_{n+1} T=L_{n+1}\circ T$, the Hadamard product of $L_n$ and $T$, where
$L_{n+1}=[\lambda_{ij}^{(n+1)}]_{(d-p)\times p}$ with $\lambda_{ij}^{(n+1)}=1+\eta_{n+1} (\lambda_{p+i}-\lambda_j)$.
Clearly $\opL_{n_1}\opL_{n_2}=\opL_{n_2}\opL_{n_1}$ for any $n_1,n_2$.
Moreover, it can be shown that\footnote{
	Here we drop the superscript ``$\cdot^{(n+1)}$'' on $\lambda_{ij}$ and the subscript ``$\cdot_{n+1}$'' on $\eta,\opL$.
	Since  $\lambda(\opL)=\set{\lambda_{ij} :  i=1,\dots,d-p,\,j=1,\dots,p}$,
	the spectral radius $\rho(\opL)=1-\eta_{n+1}(\lambda_p-\lambda_{p+1})$.
	Thus for any $T$,
	\[
		\N{\opL T}_{\UI}=\N{T(I-\eta\ol\Lambda )+\eta\ul\Lambda T}_{\UI}
		\le\N{I-\eta\ol\Lambda }_2\N{T}_{\UI}+\N{\eta\ul\Lambda }_2\N{T}_{\UI}
		=(1-\eta\lambda_p+\eta\lambda_{p+1})\N{T}_{\UI}
		=\rho(\opL)\N{T}_{\UI}		,
	\]
	which means $\N{\opL}_{\UI}\le\rho(\opL)$. This ensures $\N{\opL}_{\UI}=\rho(\opL)$.
}
$\N{\opL_{n+1}}_{\UI}=\rho(\opL_{n+1})=1-\eta_{n+1}\gamma$, where $\N{\opL_{n+1}}_{\UI}=\sup_{\N{T}_{\UI}=1}\N{\opL_{n+1} T}_{\UI}$ is
an operator norm induced by the matrix norm $\N{\cdot}_{\UI}$.
Recursively,
\[
	T^{(n)}
	= \opL_n\dotsm\opL_1 T^{(0)}+D^{(n)}+\sum_{s=1}^{n-1}\opL_n\dotsm\opL_{s+1} D^{(s)} +R_E^{(n-1)}+\sum_{s=1}^{n-1} \opL_n\dotsm\opL_{s_1}R_E^{(s-1)}
	.
\]
Let us introduce some notation here.
Define $\dps\prod_{p=a}^b (\cdot)=1$ or $\opI$, the identical mapping, if $a>b$.
	Write 
\[
	F_*^{(n',n)}:=\prod_{r=n'}^n\N{\opL_r}_{\UI}=\prod_{r=n'}^n(1-\eta_r\gamma), 
	\qquad 
	F_{D,i,j}^{(n',n)}:=\sum_{s=n'}^{n}\eta_s^i\prod_{r=s+1}^{n}\N{\opL_r}_{\UI}^j =\sum_{s=n'}^{n}\eta_s^i\prod_{r=s+1}^{n} (1-\eta_r\gamma)^j.
\]
Suppose $F_{D,i,j}^{(1,n)}\le C_{D,i,j}\gamma^{-1}\eta_n^{i-1}$ for any $n$, where $C_{D,i,j}$ is an absolute constant,
which can be easily examined in any specific strategy to choose $\eta_n$.

For $s>0$ and $\eta_*\gamma<1$, define
\begin{equation*}\label{eq:Ns-dfn}
	N_s^{(n')}:=\min\set*{n\in \mathbb{N} :  F_*^{(n',n)}\le (\eta_*\gamma)^s}
		,
\end{equation*}
which implies $F_*^{(n',N_s^{(n')})}\le(\eta_*\gamma)^s<F_*^{(n',N_s^{(n')}-1)}$.
Define
\begin{equation*}\label{eq:I1I2I3}
	\begin{aligned}[b]
		T^{(n)}
		&= \opL_n\dotsm\opL_1 T^{(0)}+D^{(n)}+\sum_{s=1}^{n-1}\opL_n\dotsm\opL_{s+1} D^{(s)} +R_E^{(n-1)}+\sum_{s=1}^{n-1} \opL_n\dotsm\opL_{s_1}R_E^{(s-1)}
		\\&= \left(\prod_{r=1}^n\opL_r\right) T^{(0)}+\sum_{s=1}^{n}\left(\prod_{r=s+1}^{n}\opL_r\right)  D^{(s)} +\sum_{s=1}^{n}\left(\prod_{r=s+1}^{n}\opL_r\right) R_E^{(s-1)}
		\\&=: T_*^{(n)}+T_D^{(n)}+T_R^{(n)}.
	\end{aligned}
\end{equation*}

Define events 
\begin{gather*}
	\eventM_n(\kappa,\mu)=\set*{ \N{T^{(n)}-T_*^{(n)}}_2\le
	\frac{1}{2}\upsilon(1+\kappa^2)\mu^{3/2}\eta_n^{1/2}\gamma^{1/2}
}, \label{eq:event-cMn}\\
	\eventT_n(\kappa)=\set*{\N{T^{(n)}}_2\le  \kappa},\quad 
	\eventQ_n(\mu)=\set*{n<\Nqb{\mu }},   \label{eq:event-cTn}
\end{gather*}
where $\upsilon=
8C_{D,2,1}\lambda_1\lambda_{1\sim p}\gamma^{-2}
$.
		
It can be shown that under some conditions, if the initial $V^{(0)}$ is not too bad, then with high probability $\N{T^{(n)}}_2$ will never become too large and eventually become as small as expected.
The formal statement is given in \Cref{lm:prop5:ppr1}.
\begin{lemma}\label{lm:prop5:ppr1}
	Let $
	N_1:=N_{(\ln\varepsilon -\ln\kappa)/\ln(\eta_*\gamma)}^{(1)}$.
	Suppose that \cref{eq:beta-mu-eta_p},
	and the following hold,
	\begin{subequations}\label{eq:beta-gamma-kappa-mu}
	\begin{gather}\label{eq:beta-gamma-kappa-mu:1}
		\upsilon(1+\kappa^2)\mu^{3/2}\eta_o^{1/2}\gamma^{1/2}\le \kappa,\\
\label{eq:beta-gamma-kappa-mu:2}
		\kappa\rho_{n}^{1/2}\le\varepsilon \quad \text{for $n\ge N_1$}
		.
	\end{gather}
	\end{subequations}
	If
	$V^{(0)}\in\sphere(\kappa/2)$,
	then for any $n\ge N_1$,
	\begin{align*}
		\prob{(\eventH_{n}\cap\eventH_o)^{\rmc}}
				\le 2nd\exp(-C_M  \lambda_1^2\nu^{-1}\gamma^{-2}\mu)
				+n(\ee d+p+1)\exp\left(-C_{\psi}\psi^{-1}(1\wedge\psi^{-1})\mu\right)
		,
	\end{align*}
	where 
	$\dps
	\eventH_n:= \bigcap_{r\le n}\eventT_r(\kappa), 
	\eventH_o:=\set*{\Nin{\varepsilon }\le N_1}.
	$

\end{lemma}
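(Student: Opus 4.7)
The plan is to work on the stopped ``good'' event $\eventH_n\cap\eventQ_n(\mu)$, on which \cref{lm:diff-T} applies at every step and the decomposition $T^{(n)}=T_*^{(n)}+T_D^{(n)}+T_R^{(n)}$ from the preamble is valid; I would bound each of the three pieces, show their combined size forces $\eventT_n(\kappa)$ so $\eventH_n$ propagates inductively, and finally exploit the choice of $N_1$ to shrink $\N{T^{(N_1)}}_2$ all the way down to $\varepsilon$ and produce $\eventH_o$. From $V^{(0)}\in\sphere(\kappa/2)$ and $\N{\opL_r}_{\UI}=1-\eta_r\gamma$ the contraction gives $\N{T_*^{(n)}}_2\le F_*^{(1,n)}\kappa/2\le\kappa/2$. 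Feeding $\tau\le\kappa$ into \cref{itm:lm:diff-T:dT} of \cref{lm:diff-T} yields $\N{R_E^{(s-1)}}_2\le 2\lambda_1\lambda_{1\sim p}\mu\eta_s^2\kappa(1+\kappa^2)^{1/2}$, and summing against $\N{\opL_r}_{\UI}$ gives $\N{T_R^{(n)}}_2\le 2C_{D,2,1}\lambda_1\lambda_{1\sim p}\mu\kappa(1+\kappa^2)^{1/2}\gamma^{-1}\eta_n$, which under \cref{eq:beta-mu-eta_p} (so that $\gamma\eta_n\le\sqrt{\gamma\eta_n}$) is at most $\tfrac14\upsilon(1+\kappa^2)\mu^{3/2}\sqrt{\eta_n\gamma}$ by the definition of $\upsilon$.

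\textbf{Martingale piece.} The sequence $D^{(s)}=T^{(s)}-\E{T^{(s)}\given\fil_{s-1}}$ is a martingale difference; on the good event each summand of $T_D^{(n)}$ is almost surely bounded by $C\mu\lambda_{1\sim p}(1+\kappa^2)\eta_s$ via \cref{itm:lm:diff-T:normdT}, while \cref{itm:lm:diff-T:varcdT,itm:lm:diff-T:normR_H} control the entrywise conditional variance by $16\psi^4\eta_s^2H+R_\circ^{(s-1)}$ with $\N{H}_2\le\lambda_1\lambda_{p+1\sim d}$. Stopping at $\Nout{\kappa}\wedge\Nqb{\mu}$ to make these bounds truly a.s., I would apply a rectangular matrix Freedman inequality (such as the one cited as \cref{lm:-cite-theorem-2-4-huang2021streaming}) to $T_D^{(n)}$ at deviation threshold $\tfrac14\upsilon(1+\kappa^2)\mu^{3/2}\sqrt{\eta_n\gamma}$; its predictable quadratic variation is at most $C\psi^4\lambda_1\lambda_{p+1\sim d}F_{D,2,2}^{(1,n)}\le C\psi^4\lambda_1\lambda_{p+1\sim d}\gamma^{-1}\eta_n$, and the Bernstein exponent then evaluates to a constant multiple of $\lambda_1^2\nu^{-1}\gamma^{-2}\mu$, producing the first term $2d\exp(-C_M\lambda_1^2\nu^{-1}\gamma^{-2}\mu)$ for the failure of $\eventM_n$.

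\textbf{Closure and the $\varepsilon$-pass.} Triangle inequality together with \cref{eq:beta-gamma-kappa-mu:1} gives $\N{T^{(n)}}_2\le\kappa/2+\tfrac12\upsilon(1+\kappa^2)\mu^{3/2}\sqrt{\eta_o\gamma}\le\kappa$ on $\eventM_n\cap\eventQ_n(\mu)$, so $\eventM_n\cap\eventQ_n(\mu)\subset\eventT_n(\kappa)$, and a union bound over $r\le n$ turns the per-step failures into the two terms stated in the lemma. The second term comes from $\prob{\eventQ_r(\mu)^{\rmc}}$: the sub-Gaussian tails (\cref{itm:vershynin:5.10} of \cref{lm:-cite-lemma-5-5-and-5-10-5-12-vershynin2012introduction-}) control each event $\abs{Y^{(s)}_i}>\lambda_i^{1/2}\mu^{1/2}$, and an $\varepsilon$-net over the unit sphere in $\mathbb{R}^p$ handles $\N{Z^{(s)}}_2>\lambda_{1\sim p}^{1/2}\mu^{1/2}$, producing the $\ee d+p+1$ prefactor. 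Finally, at $n=N_1$ the definition of $N_1$ gives $F_*^{(1,N_1)}\le(\eta_o\gamma)^{(\ln\varepsilon-\ln\kappa)/\ln(\eta_o\gamma)}=\varepsilon/\kappa$, hence $\N{T_*^{(N_1)}}_2\le\varepsilon/2$, while \cref{eq:beta-gamma-kappa-mu:1,eq:beta-gamma-kappa-mu:2} combine to give $\tfrac12\upsilon(1+\kappa^2)\mu^{3/2}\sqrt{\eta_{N_1}\gamma}\le\tfrac12\kappa\sqrt{\rho_{N_1}}\le\varepsilon/2$; thus $V^{(N_1)}\in\sphere(\varepsilon)$ on the good event, i.e.\ $\eventH_o$ holds.

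\textbf{Main obstacle.} The delicate ingredient is the martingale step: the a.s.\ bounds of \cref{lm:diff-T} are only available before the stopping time $\Nout{\kappa}\wedge\Nqb{\mu}$, so one must pass to the stopped martingale, verify that it agrees with $T_D^{(n)}$ on the event being estimated, and most importantly check that the Freedman exponent does not leak an extra $\ln n$ or $\ln d$ factor. Such a factor would be precisely the ``$\log$'' that the paper advertises as removable, and its absence is exactly what is needed for the bound to match the minimax lower bound \cref{eq:minimax-bound} up to a constant.
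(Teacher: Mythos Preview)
Your plan mirrors the paper's proof: the same decomposition $T^{(n)}=T_*^{(n)}+T_D^{(n)}+T_R^{(n)}$, the same deterministic bounds on $T_*^{(n)}$ and $T_R^{(n)}$, a matrix-martingale bound on the stopped $T_D^{(n)}$, closure via \cref{eq:beta-gamma-kappa-mu:1}, and the $\varepsilon$-pass at $N_1$ via \cref{eq:beta-gamma-kappa-mu:2}. The differences are in the concentration tools, and two of your choices would not reproduce the stated constants.

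For $T_D^{(n)}$ the paper (in the auxiliary \cref{lm:prop3:ppr1}) uses the matrix \emph{Azuma} inequality, not Freedman: only the almost-sure bound on $\N{D^{(s)}}_2$ from \cref{itm:lm:diff-T:normdT} enters, giving a variance proxy of order $\nu\lambda_{1\sim p}^2(1+\kappa^2)^2\mu^2\eta_n\gamma^{-1}$, and then $\alpha^2/\sigma^2$ collapses exactly to $C_M\lambda_1^2\nu^{-1}\gamma^{-2}\mu$. The entrywise variance bound \cref{itm:lm:diff-T:varcdT} is not used in this lemma at all---it is reserved for \cref{lm:prop4:ppr1}. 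Your Freedman computation with predictable variance of order $\psi^4\lambda_1\lambda_{p+1\sim d}\gamma^{-1}\eta_n$ would give an exponent with different dependence on $\mu$, $\psi$ and $\lambda_1$ than the one stated. Also, \cref{lm:-cite-theorem-2-4-huang2021streaming} is not a concentration inequality: it is Huang et al.'s cold-start theorem, invoked only later in \cref{thm:main}.

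For $\N{Z^{(s)}}_2$ the paper (in \cref{lm:quasi-bounded}) does not use an $\varepsilon$-net, which would yield a prefactor exponential in $p$ rather than $p+1$. Instead it forms the $(p+1)\times(p+1)$ Hermitian dilation $\bigl[\begin{smallmatrix}0&Z\\Z^{\T}&0\end{smallmatrix}\bigr]=\sum_kY_kW_k$ and applies the matrix master tail bound, using $\trace(V^{\T}\Lambda V)\le\lambda_{1\sim p}$ to control $\lambda_{\max}\bigl(\sum_k\lambda_kW_k\circ W_k\bigr)$; this is precisely what produces the clean $(p+1)$ factor in the second term.
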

\subsection{Estimation with a Random Initial Guess}\label{ssec:estimation-with-a-random-initial-guess}
In order to compare our result with the previous results, we will estimate $ \N{\tan\Theta(\subspan(U^{(n)}),{\mathcal{U}}_*)}=\N{T^{(n)}}$.

First we give the estimation with a good initial guess in \Cref{lm:prop4:ppr1}.
\begin{lemma}\label{lm:prop4:ppr1}
	Suppose that \cref{eq:beta-mu-eta_p,eq:beta-gamma-kappa-mu} hold,
	and let $N_1=N_{\ln\varepsilon/\ln(\eta_*\gamma)}$.
	If $V^{(0)}\in\sphere(1)$,
then there exists a high-probability event $\eventH$ with $\prob{\eventH}\ge 1-\delta_1$,
	such that for any $n\ge N_1$,
	\begin{equation}\label{eq:lm:prop4:ppr1}
		\E{T^{(n)}\circ T^{(n)}\overevent \eventH}
			\le\prod_{r=1}^{n} \opL_r^2T^{(0)}\circ T^{(0)} +2\sum_{s=1}^{n}\eta_s^2\prod_{r=s+1}^{n} \opL_r^2H_{\circ} +R
			,
	\end{equation}
	where
	$\dps \N{R}_2 = C_R\frac{\varepsilon^2}{\ln\frac{nd}{\delta_1}}$ with $C_R$ an absolute constant, and
	$H_{\circ}
	\le 16\psi^4 H$ is as in \emph{\cref{itm:lm:diff-T:varcdT}} of \emph{\Cref{lm:diff-T}}.
\end{lemma}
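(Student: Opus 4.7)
My plan is to lift the one-step splitting $T^{(n+1)}=\opL_{n+1}T^{(n)}+D^{(n+1)}+R_E^{(n)}$ from \Cref{ssec:quasi-power-iteration-process} to a recursion on the Hadamard square $T^{(n)}\circ T^{(n)}$, iterate it, and show that everything not belonging to the two principal terms on the right of \cref{eq:lm:prop4:ppr1} fits into a small remainder $R$.

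First I would take $\eventH$ to be the intersection of the event $\eventH_n\cap\eventH_o$ supplied by \Cref{lm:prop5:ppr1} with the boundedness event $\{n<\Nqb{\mu}\}$, choosing $\mu$ of order $\log(nd/\delta_1)$ so that the tail estimates in \Cref{lm:prop5:ppr1} yield $\prob{\eventH}\ge 1-\delta_1$; this choice of $\mu$ is exactly what produces the factor $\log(nd/\delta_1)$ in the denominator of $\N{R}_2$. To keep the martingale structure usable without conditioning on $\eventH$, I would work with a stopped process, freezing $D^{(n+1)}$ and $R_E^{(n)}$ at zero as soon as $V^{(n)}$ leaves $\sphere(\kappa)$ or $n$ reaches $\Nqb{\mu}$. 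The stopped Hadamard square agrees with $T^{(n)}\circ T^{(n)}$ on $\eventH$ and, being entrywise non-negative, its unconditional expectation majorizes $\E{T^{(n)}\circ T^{(n)}\overevent\eventH}$.

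Squaring the recursion entrywise and conditioning on $\fil_n$, the cross term $(\opL_{n+1}T^{(n)})\circ D^{(n+1)}$ drops by $\E{D^{(n+1)}\given\fil_n}=0$, while by \cref{itm:lm:diff-T:estimate} of \Cref{lm:diff-T},
\[
    \E{D^{(n+1)}\circ D^{(n+1)}\given\fil_n}=\varc{T^{(n+1)}-T^{(n)}\given\fil_n}=\eta_{n+1}^{2}H_{\circ}+R_{\circ}^{(n)}.
\]
Unrolling yields the two claimed principal terms (with coefficient $1$ in front of the variance sum at this stage) together with remainder contributions stemming from $R_E^{(s)}\circ R_E^{(s)}$, the cross term $2(\opL_{s+1}T^{(s)})\circ R_E^{(s)}$ and the sum of $R_{\circ}^{(s)}$'s, all propagated forward by $\opL_r^2$. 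I would then bound this remainder by substituting \cref{itm:lm:diff-T:dT,itm:lm:diff-T:normR_H} of \Cref{lm:diff-T}: on the stopped process $\N{T^{(s)}}_2\le\kappa$ throughout, and for $s\ge N_1$ condition \cref{eq:beta-gamma-kappa-mu:2} combined with the guarantee of \Cref{lm:prop5:ppr1} strengthens this to $\N{T^{(s)}}_2\le\varepsilon$; for $s<N_1$ the extra contraction $F_*^{(s,n)}\le(\eta_*\gamma)^{\ln\varepsilon/\ln(\eta_*\gamma)}=\varepsilon$ accumulated by $\opL_r^2$ before reaching $n\ge N_1$ transports the $\varepsilon$-scale back through the warm-up window. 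Combined with the Abel-type bound $F_{D,i,j}^{(1,n)}\le C_{D,i,j}\gamma^{-1}\eta_n^{i-1}$, the remainder comes out of order $\mu^{-1}\varepsilon^2 = \OO(\varepsilon^2/\log(nd/\delta_1))$; the factor $2$ in front of $\sum\eta_s^2\prod\opL_r^2H_\circ$ provides the slack needed to absorb $\E{R_{\circ}^{(s)}}$ into the principal variance term rather than into $R$.

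The main obstacle I expect is the unsigned cross term $2(\opL_{s+1}T^{(s)})\circ R_E^{(s)}$: a naive entrywise bound would lose the crucial $\varepsilon$-factor on one of its two matrix inputs whenever $s<N_1$, so one has to exploit the contraction that $\opL_r^2$ accumulates between $s$ and $n$ in order to propagate the $\varepsilon$-scale smallness — which a priori only holds after $N_1$ — back into the warm-up range. Balancing this transport against the sub-Gaussian probability budget dictates the precise scaling $\mu\asymp\log(nd/\delta_1)$ and constitutes the real content of the estimate.
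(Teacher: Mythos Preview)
Your one-step recursion on the Hadamard square is a different organization from the paper's. The paper does \emph{not} iterate a recursion for $T^{(n)}\circ T^{(n)}$; instead it uses the global additive splitting $T^{(n)}=T_*^{(n)}+T_D^{(n)}+T_R^{(n)}$ already established in \Cref{ssec:quasi-power-iteration-process}, expands $(T_*+T_D+T_R)^{\circ 2}$, and bounds each piece separately (with indicators on $\eventT_{s-1}(2)$ and $\eventQ_n$ playing the role of your stopped process). Martingale orthogonality kills $T_*\circ T_D$ and the off-diagonal part of $T_D^{\circ 2}$; the cross term $2T_D\circ T_R$ is disposed of by the entrywise AM--GM inequality $2T_D\circ T_R\le T_D^{\circ 2}+T_R^{\circ 2}$, and \emph{this} is where the coefficient $2$ on the variance sum comes from, not from absorbing $R_\circ^{(s)}$ as you suggest. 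The point of this global decomposition is precisely the obstacle you flag: the only surviving unsigned cross term is $2T_*^{(n)}\circ T_R^{(n)}$, and it carries a built-in $\varepsilon$ because $\N{T_*^{(n)}}_2\le F_*^{(1,n)}\N{T^{(0)}}_2\le\varepsilon$ for all $n\ge N_1$. In your recursion the analogous term is $\sum_s\prod_{r>s}\opL_r^2\bigl[2(\opL_sT^{(s-1)})\circ R_E^{(s-1)}\bigr]$ with the \emph{full} iterate $T^{(s-1)}$, bounded only by $\kappa$ for $s<N_1$.

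Your proposed remedy for this --- that $F_*^{(s,n)}\le(\eta_*\gamma)^{\ln\varepsilon/\ln(\eta_*\gamma)}=\varepsilon$ for $s<N_1$ --- is false: since each factor $1-\eta_r\gamma\le 1$ one has $F_*^{(s,n)}\ge F_*^{(1,n)}$, and for $s$ near $N_1$ with $n=N_1$ the product $F_*^{(s,n)}$ is close to $1$, not $\varepsilon$. The contraction does not ``transport'' the $\varepsilon$-scale back to individual warm-up steps. Your recursion can still be closed, but by a different mechanism than the one you sketch: bound the whole sum through $F_{D,2,2}^{(1,n)}\le C_{D,2,2}\gamma^{-1}\eta_n$, use \cref{eq:beta-gamma-kappa-mu:2} to extract $\varepsilon^2$ from $\rho_n$, and then invoke \cref{eq:beta-mu-eta_p} (which forces $\eta_o\gamma\lesssim\mu^{-1}$) to convert the residual $(\eta_o\gamma)^{1/2}$ into the missing factor of $\mu^{-1/2}$. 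That works but is more delicate than the paper's route, which sidesteps the warm-up issue altogether by keeping $T_*$ and $T_D$ separate before squaring.
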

Note that in \cref{eq:lm:prop4:ppr1} the inequality holds for each entry in the matrices.
In other words, the inequality in \cref{eq:lm:prop4:ppr1} represents $(d-p)\times p$ scalar inequalities.
Hence, in this sense, we can say that the iteration process is decoupled.
This observation is very useful for the gap-free consideration.

To deal with a random initial guess, a theorem by Huang et al.~\cite{huang2021streaming} is adopted.
\begin{lemma}[{\cite[Theorem~2.4]{huang2021streaming}}]\label{lm:-cite-theorem-2-4-huang2021streaming}
	If $\sup_{P^{\T}P=I_k}\N{P^{\T}(XX^{\T}-\Lambda)}_{\F}\le B$, and $\wtd V_0$ has i.i.d.\ standard Gaussian entries,
	writing 
	\[
		N_o=C_o\frac{pB^2}{\delta^2\gamma^2}(\ln\frac{dB}{\delta\gamma})^4,\qquad
		\eta_n=\eta_o=C'_o\frac{\ln(d/\delta)}{\gamma N_o},
	\]
	then with probability at least $1-\delta$, $\N{T^{(N_o)}}_2\le 1$.
\end{lemma}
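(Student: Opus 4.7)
The plan is to analyze the Oja iteration as a noisy power method on a multiplicative random matrix product. Writing $A_n := \prod_{i=n}^{1}(I+\eta_o X^{(i)}(X^{(i)})^{\T})$, we have $\subspan(V^{(n)}) = \subspan(A_n \wtd V_0)$, so by \Cref{lm:Tan(Theta)} the target bound $\N{T^{(N_o)}}_2 \le 1$ is equivalent to controlling the ratio $\N{\ul A_{N_o} \wtd V_0 (\ol A_{N_o} \wtd V_0)^{-1}}_2$, where $\ol{\phantom{A}}, \ul{\phantom{A}}$ refer to the block decomposition induced by the spectral decomposition $\Sigma = U\Lambda U^{\T}$.

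First I would set up the deterministic reference: since $\E{I+\eta_o X X^{\T}} = I+\eta_o\Lambda$, the natural comparison is with $M_n := (I+\eta_o\Lambda)^n$, whose $(i,i)$ entry approximates $e^{\eta_o n \lambda_i}$, preserving a multiplicative gap of $(1+\eta_o\gamma)^n$ between the top-$p$ and bottom-$(d{-}p)$ diagonal blocks. Next I would handle the random initial guess: with $\wtd V_0$ Gaussian, a Davidson--Szarek / Gordon-type bound gives $\sigma_{\min}(\ol{\wtd V}_0) \ge c\delta/\sqrt{d}$ with probability at least $1-\delta/4$, while simultaneously $\N{\wtd V_0}_2 \le C\sqrt{d}$ with high probability; this is what introduces the $\delta^{-2}$ factor in $N_o$.

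The core step is a matrix concentration bound on the fluctuation of $A_n$ around $M_n$. I would write the log-deviation as a martingale sum built from the increments $\eta_o(X^{(i)}(X^{(i)})^{\T} - \Lambda)$, whose per-step Frobenius norm is uniformly bounded by $\eta_o B$ thanks to the hypothesis. Applying a matrix Freedman inequality (as in Tropp) after projecting onto the $\ol{\phantom{A}}$ and $\ul{\phantom{A}}$ blocks yields, with probability at least $1-\delta/2$,
\[
  \N{\ol A_{N_o} - \ol M_{N_o}}_2 \le \OO\!\left(\eta_o B(1+\eta_o\lambda_1)^{N_o-1}\sqrt{N_o \ln(d/\delta)}\right),
\]
and similarly for $\ul A_{N_o}$. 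Combining with Steps~1 and~2 gives
\[
  \N{T^{(N_o)}}_2 \;\lesssim\; \frac{(1+\eta_o\lambda_{p+1})^{N_o}\sqrt{d} + \eta_o B \sqrt{N_o\ln(d/\delta)}\,(1+\eta_o\lambda_1)^{N_o}\sqrt{d}}{(1+\eta_o\lambda_p)^{N_o}\cdot c\delta/\sqrt{d}}.
\]
The signal-to-noise numerator is dominated by the noise term when $\eta_o\lambda_1 \ll 1$, so to make the right-hand side at most $1$ one needs the gap factor $(1+\eta_o\gamma)^{N_o}$ to absorb a factor of order $dB\sqrt{N_o \ln(d/\delta)}/\delta$. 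Taking $\eta_o \gamma N_o \asymp \ln(dB/\delta\gamma)$ and solving back for $N_o$ against the variance bound $\eta_o^2 N_o B^2 \lesssim \gamma^2/\text{polylog}$ yields the stated $N_o \asymp \tfrac{p B^2}{\delta^2\gamma^2}\ln^4(dB/\delta\gamma)$ and $\eta_o \asymp \tfrac{\ln(d/\delta)}{\gamma N_o}$.

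The main obstacle is Step~3, the matrix-Freedman step: a single-shot application of Bernstein loses one logarithmic factor too many, so one typically bootstraps by dividing $[1,N_o]$ into $\OO(\ln(dB/\delta\gamma))$ epochs, tracking the operator norm of $A_n$ epoch-by-epoch and taking a union bound; this is the source of the $(\ln(dB/\delta\gamma))^4$ power, which accounts jointly for epochs, for dimension, for failure probability, and for the coarse initial bound on $\N{A_n}_2$. A secondary technicality is ensuring that the lower bound on $\sigma_{\min}(\ol A_{N_o}\wtd V_0)$ survives the noise subtraction, which requires the random-initialization bound on $\sigma_{\min}(\ol{\wtd V}_0)$ to dominate $\N{\ol A_{N_o}-\ol M_{N_o}}_2 / (1+\eta_o\lambda_p)^{N_o}$ at the chosen parameters.
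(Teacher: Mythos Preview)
The paper does not prove this lemma at all: it is quoted verbatim from \cite[Theorem~2.4]{huang2021streaming} and invoked as a black box to handle the cold-start phase before \Cref{lm:prop4:ppr1} takes over. There is therefore no proof in the paper to compare your proposal against.

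For what it is worth, your outline is broadly the shape of how such warm-start guarantees are established (Gaussian small-ball bound on the initial block, matrix-martingale concentration on the product, epoching to bootstrap the operator-norm growth). Two places would need tightening if you carried it out. First, your symbols $\ol A_{N_o},\ul A_{N_o}$ are blocks of a $d\times d$ product, but what must be controlled is $\ul{(A_{N_o}\wtd V_0)}\,\ol{(A_{N_o}\wtd V_0)}^{-1}$; since the noise in $A_n$ is not block-diagonal, the off-diagonal leakage into $\ol{(\cdot)}$ is exactly the term that has to be beaten by the gap, and your displayed inequality conflates these. Second, matrix Freedman does not apply directly to the additive increments $\eta_o(X^{(i)}(X^{(i)})^{\T}-\Lambda)$: the relevant martingale is built from the telescoped product, so its predictable quadratic variation contains the random factor $A_{s-1}$, whose norm grows and must itself be controlled; this is why the epoch decomposition is structurally necessary rather than merely a device for shaving log factors. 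None of this invalidates your plan, but since the present paper simply imports the result, a complete verification would have to trace through the argument in \cite{huang2021streaming} itself.
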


The whole iteration process can be split into two parts: first the iteration goes from the initial guess into $\sphere(1)$, whose probability is estimated by \Cref{lm:-cite-theorem-2-4-huang2021streaming}; then the iteration goes from an approximation in $\sphere(1)$ to any precision we would like, whose probability is estimated by \Cref{lm:prop4:ppr1}.
Note that \Cref{lm:-cite-theorem-2-4-huang2021streaming} is only valid for a bounded distribution.
Thus we have to use it on the event $\eventQ_n(\mu)$ whose probability is bounded in \Cref{lm:quasi-bounded}.
\begin{theorem}\label{thm:main}
	If $U^{(0)}\in \mathbb{R}^{d\times p}$ has i.i.d\ standard Gaussian entries, 
		and $\eta_n=
		\begin{cases}
			C'_o\dfrac{\ln (d/\delta)}{\gamma N_o}, & n\le N_o\\
			C_\eta\dfrac{1}{\gamma n}, & n> N_o\\
		\end{cases}$
		for some $C_\eta\ge 1$,
	then there exists a high-probability event $\eventH_*$ with
	$\prob{\eventH_*}\ge 1- \delta$, such that	for any $n\ge N_o$,
		\begin{align*}
		\E{\N{{\tan\Theta(U^{(n)},U_*)}}_{\F}^2\given \eventH_*}
		&\le \frac{64C_\eta\psi^4\varphi(\Lambda)}{1-\delta}\frac{n-N_o}{n^2}+ C\frac{pN_o^2}{(1-\delta)n^2}
		\\&\approx \frac{64C_\eta\psi^4\varphi(\Lambda)}{(1-\delta)n}
		,\qquad\qquad\text{as $n\to\infty$},
		\end{align*}
		where $C$ is an absolute constant,
		and $\psi$ is $\bX$'s Orlicz $\psi_2$ norm. 
		Here
		\begin{equation}\label{eq:phi}
			\varphi(\Lambda)=\frac{1}{\gamma}\sum_{i=1}^{d-p}\sum_{j=1}^p\frac{\lambda_{p+i}\lambda_j}{\lambda_{p+i}-\lambda_j}
				\le\frac{ p(d-p)\lambda_p\lambda_{p+1}}{(\lambda_p-\lambda_{p+1})^2}
				.
		\end{equation}
\end{theorem}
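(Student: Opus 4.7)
My plan is to split the iteration into the two phases dictated by the learning-rate schedule: a \emph{cold-start phase} ($n\le N_o$) under the constant rate $\eta_o$, followed by a \emph{refinement phase} ($n>N_o$) under the harmonic rate $\eta_n=C_\eta/(\gamma n)$. The two phases are chained by the event $\set{V^{(N_o)}\in\sphere(1)}$, after which all subsequent analysis is quantitative and comes entirely from \Cref{lm:prop4:ppr1}.

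For the cold-start phase, I would apply \Cref{lm:-cite-theorem-2-4-huang2021streaming}. The obstacle here is that that lemma assumes bounded samples, while our $\bX$ is only sub-Gaussian. To handle this, I would work on the quasi-boundedness event $\eventQ_{N_o}(\mu)$ with $\mu=\Theta(\log(dN_o/\delta))$ as in \Cref{lm:prop5:ppr1}: on this event every $\N{Y^{(n)}}_2\le \lambda_{1\sim d}^{1/2}\mu^{1/2}$, giving a valid bound $B=\OO(\lambda_{1\sim d}\mu\sqrt p)$ to plug into the cited lemma. The sub-Gaussian tail (\Cref{lm:-cite-lemma-5-5-and-5-10-5-12-vershynin2012introduction-}) makes $\prob{\eventQ_{N_o}(\mu)^{\rmc}}$ exponentially small in $\mu$, so that after tuning constants the combined event
\[
  \eventH_* := \eventQ_{N_o}(\mu)\cap\set{V^{(N_o)}\in\sphere(1)}\cap\set{\text{good continuation from $N_o$}}
\]
has probability at least $1-\delta$.

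For the refinement phase, I would conditionally restart the analysis at step $N_o$, viewing $V^{(N_o)}$ as a new initial guess in $\sphere(1)$, and invoke \Cref{lm:prop4:ppr1} (with $\kappa=1$ and time-origin shifted to $N_o$). This gives the entrywise inequality
\[
  \E{T^{(n)}\circ T^{(n)}\overevent\eventH}
  \le \Bigl(\prod_{r=N_o+1}^{n}\opL_r^2\Bigr) T^{(N_o)}\!\circ T^{(N_o)}
  + 2\sum_{s=N_o+1}^{n}\eta_s^2\Bigl(\prod_{r=s+1}^{n}\opL_r^2\Bigr) H_{\circ} + R,
\]
which I would sum over the $(d-p)\times p$ entries to recover $\N{T^{(n)}}_{\F}^2$. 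For the deterministic decay term I would use that $\opL_r$ acts on the $(i,j)$-entry by $1-\eta_r(\lambda_j-\lambda_{p+i})\le 1-C_\eta/r$ (since $\lambda_j-\lambda_{p+i}\ge\gamma$); the product telescopes to $\le (N_o/n)^{2C_\eta}\le (N_o/n)^2$, and since $\N{T^{(N_o)}}_{\F}^2\le p$ this contributes the $Cp N_o^2/n^2$ piece. For the martingale-variance term I would use $H_{\circ}\le 16\psi^4 H$ from \cref{itm:lm:diff-T:varcdT}, so each entry-sum becomes the standard harmonic computation
\[
  \sum_{s=N_o+1}^{n}\Bigl(\frac{C_\eta}{\gamma s}\Bigr)^{\!2}\Bigl(\frac{s}{n}\Bigr)^{2C_\eta(\lambda_j-\lambda_{p+i})/\gamma}\lambda_{p+i}\lambda_j
  \le \frac{2C_\eta\lambda_{p+i}\lambda_j}{\gamma(\lambda_j-\lambda_{p+i})}\cdot\frac{n-N_o}{n^2},
\]
and summation over $(i,j)$ produces exactly the $\varphi(\Lambda)$ factor of \cref{eq:phi}. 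The remainder $R$ is $\OO(\varepsilon^2/\log(nd/\delta))$ and is absorbed into either of the two main pieces; the $(1-\delta)^{-1}$ factor appears when one removes the indicator of $\eventH_*$ from the conditional expectation via $\E{X\overevent\eventH_*}=\E{X\given\eventH_*}\prob{\eventH_*}$. Adding the two pieces gives the claimed bound, whose leading-order behaviour as $n\to\infty$ is $64C_\eta\psi^4\varphi(\Lambda)/((1-\delta)n)$.

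\paragraph{Main obstacle.}
The hard part is the bookkeeping at the phase transition: verifying simultaneously that (a) conditions \cref{eq:beta-mu-eta_p} and \cref{eq:beta-gamma-kappa-mu} hold for the restarted process with $\kappa=1$ and the harmonic schedule, (b) the sub-Gaussian tail bound for $\eventQ_n(\mu)$ is consistent with the $\mu$ demanded by \Cref{lm:-cite-theorem-2-4-huang2021streaming} during the cold start, and (c) the probabilities of the cold-start failure, the quasi-boundedness failure, and the deviation event implicit in \Cref{lm:prop4:ppr1} sum to at most $\delta$. The technical heart of the calculation is the telescoping/harmonic estimate $\prod_{r=s+1}^{n}(1-C_\eta\gamma'/(\gamma r))^2$ combined with $\sum_s \eta_s^2(\cdot)$, for which one must handle the spectrum of exponents $2C_\eta(\lambda_j-\lambda_{p+i})/\gamma$ uniformly to produce the gap-free entrywise constant $\lambda_{p+i}\lambda_j/(\lambda_j-\lambda_{p+i})$ rather than a $\gamma^{-2}$ bound — this is precisely what makes the poly-logarithmic factor disappear and the upper bound match the minimax lower bound \cref{eq:minimax-bound}.
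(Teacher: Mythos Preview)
Your proposal is correct and follows essentially the same approach as the paper: split into a cold-start phase handled by \Cref{lm:-cite-theorem-2-4-huang2021streaming} on the quasi-boundedness event $\eventQ_n(\mu)$ (via \Cref{lm:quasi-bounded}), then a refinement phase handled by \Cref{lm:prop4:ppr1} restarted at $N_o$, with the entrywise harmonic-sum computation producing the $\varphi(\Lambda)$ factor. The only cosmetic difference is that the paper bounds $F_{D,2,2}^{(N_o+1,n)}(\lambda_j-\lambda_{p+i})$ through a factorial/floor manipulation rather than the power-law approximation $(s/n)^{2\nu_{ij}}$ you sketch, but both yield the same $\frac{2C_\eta}{\gamma(\lambda_j-\lambda_{p+i})}\cdot\frac{n-N_o}{n^2}$ estimate.
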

\begin{proof}
	The probability of the whole process is guaranteed by \Cref{lm:-cite-theorem-2-4-huang2021streaming,lm:prop4:ppr1,lm:quasi-bounded}.
	In the following, we will show the upper bound.

	Introduce $\Sum(A)$ for the sum of all the entries of $A$. In particular, $\Sum(A\circ A)=\N{A}_{\F}^2$.
	Write $ F_{D,2,2}^{(n',n)}(\gamma)=\sum_{s=n'}^{n}\eta_s^2\prod_{r=s+1}^{n} (1-\eta_r\gamma)^2$.
	By \Cref{lm:prop4:ppr1} with $\varepsilon=\frac{N_o}{n},\eta_n=\frac{C_\eta}{\gamma n}$, with high probability we have
	\begin{align*}
		\E{\N{T^{(n)}}_{\F}^2\overevent \eventH_*}
		&\le(F_*^{(N_o+1,n)})^2\N{T^{(N_o)}}_{\F}^2 +2\Sum(G\circ H_{\circ})+\Sum(R)
		.
	\end{align*}
	where $G=[\gamma_{ij}]_{(d-p)\times p}$ with $\gamma_{ij}=F_{D,2,2}^{(N_o+1,n)}(\lambda_j-\lambda_{p+i})$.
	Then,
	\[
		F_*^{(N_o+1,n)}\le\left(1-C_\eta\frac{\frac{1}{N_o}+\dots+\frac{1}{n}}{n-N_o}\right)^{n-N_o}
		\le\left(1-C_\eta\frac{\ln n-\ln N_o}{n-N_o}\right)^{n-N_o}
		\le\ee^{-C_\eta(\ln n-\ln N_o)}=(\frac{N_o}{n})^{C_\eta},
	\]
	and
	\begin{align*}
		F_{D,2,2}^{(N_o+1,n)}(\lambda_j-\lambda_{p+i})
	&= 
	\begin{multlined}[t]
		\frac{C_\eta^2}{\gamma^2}\frac{\sum\limits_{s=1}^{n-N_o}(n-\nu_{ij})^2\dotsm(n-s+2-\nu_{ij})^2(n-s)^2\dotsm N_o^2}{n^2(n-1)^2\dotsm N_o^2}
	\\\qquad\text{where}\quad \nu_{ij}=C_\eta(\lambda_j-\lambda_{p+i})/\gamma
	\end{multlined}
	\\&\le \frac{C_\eta^2}{\gamma^2}\frac{\sum\limits_{s=1}^{n-N_o}(n-\floor{\nu_{ij}})^2\dotsm(n-s+2-\floor{\nu_{ij}})^2(n-s)^2\dotsm N_o^2}{n^2(n-1)^2\dotsm N_o^2}
	\\&= \frac{C_\eta^2}{\gamma^2n^2}\sum\limits_{s=1}^{n-N_o}\left(\frac{(n-\floor{\nu_{ij}})!(n-s)!}{(n-1)!(n-s+1-\floor{\nu_{ij}})!}\right)^2
	\\&\le \frac{C_\eta^2}{\gamma^2n^2}\sum\limits_{s=1}^{n-N_o}\left(\frac{n-s+1}{n}\right)^{2(\floor{\nu_{ij}}-1)}
	\\&= \frac{C_\eta^2}{\gamma^2n^{2\floor{\nu_{ij}}}}\left(\frac{n^{2\floor{\nu_{ij}}-1}-N_o^{2\floor{\nu_{ij}}-1}}{2\floor{\nu_{ij}}-1}+\frac{n^{2\floor{\nu_{ij}}-2}-N_o^{2\floor{\nu_{ij}}-2}}{2}+\OO(n^{2\floor{\nu_{ij}}-3})\right)
	\\&\le \frac{C_\eta^2(n-N_o)}{\gamma^2n^2(2\floor{\nu_{ij}}-1)}\left(1+C'\frac{N_o}{n}\right)
	\qquad \text{here $C'$ is an absolute constant}
	\\&\le \frac{2C_\eta^2(n-N_o)}{\gamma^2n^2\nu_{ij}}\left(1+C'\frac{N_o}{n}\right)
	\qquad \text{by $2\floor{\nu_{ij}}-1\ge \frac{\nu_{ij}}{2}$}
	\\&= \frac{2C_\eta(n-N_o)}{\gamma n^2(\lambda_j-\lambda_{p+i})}\left(1+C'\frac{N_o}{n}\right)
	.
	\end{align*}
	Thus, 
	\begin{align*}
		\E{\N{T^{(n)}}_{\F}^2\overevent \eventH_*}
		&\le \frac{pN_o^2}{n^2} +\frac{64C_\eta\psi^4}{\gamma}\sum_{i=1}^{d-p}\sum_{j=1}^p\frac{\lambda_{p+i}\lambda_j}{\lambda_{p+i}-\lambda_j}\frac{n-N_o}{n^2}\left(1+C'\frac{N_o}{n}\right)+C_R\frac{pN_o^2}{n^2\ln(nd/\delta)}
		\\&= 64C_\eta\psi^4\varphi(\Lambda)\frac{n-N_o}{n^2}+ \frac{pN_o^2}{n^2}\left(2+\frac{C_R}{\ln(nd/\delta)}\right)
		.
		\qedhere
	\end{align*}
\end{proof}
\subsection{Lower Bound and Gap-free Feature}\label{ssec:lower-bound}
Vu and Lei~\cite{vuL2013minimax} gives a minimax lower bound for the classical PCA, namely \cref{eq:minimax-bound}.
First we point out the factor $\frac{\lambda_1\lambda_{p+1}}{(\lambda_p-\lambda_{p+1})^2}$ in the lower bound
can be replaced by $\frac{\lambda_p\lambda_{p+1}}{(\lambda_p-\lambda_{p+1})^2}$.
The key to the point is a lemma there, which is presented in \Cref{lm:-cite-lemma-a-2-vul2013minimax-}.
\begin{lemma}[{\cite[Lemma~A.2]{vuL2013minimax}}]\label{lm:-cite-lemma-a-2-vul2013minimax-}
	Let $X_1,X_2\in \mathbb{R}^{d\times p}$ are column orthonormal matrices, $\beta\ge 0$,
	and
	\[
		\Sigma_i=I_p+\beta X_iX_i^{\T}\qquad\text{ for $i=1,2$}.
	\]
	If $\mathbb{P}_i$ are the $n$-fold product of the $N(0,\Sigma_i)$ probability measure, then
	the Kullback-Leibler (KL) divergence is 
	\[
		D(\mathbb{P}_1,\mathbb{P}_2)=\frac{n\beta^2}{1+\beta}\N{\sin\Theta(X_1,X_2)}_{\F}^2.
	\]
\end{lemma}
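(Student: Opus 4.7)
The plan is to reduce the computation to a direct calculation of the Kullback--Leibler divergence between two centered multivariate Gaussians and then exploit the eigenstructure forced by the rank-$p$ perturbation. First I would invoke tensorization of KL divergence for product measures to write $D(\mathbb{P}_1,\mathbb{P}_2)=n\, D\bigl(N(0,\Sigma_1),N(0,\Sigma_2)\bigr)$, which reduces the task to computing the single-sample divergence. Note that the statement appears to have $I_p$ where $I_d$ is meant, since $X_iX_i^{\T}\in\mathbb{R}^{d\times d}$; I will interpret $\Sigma_i=I_d+\beta X_iX_i^{\T}$.

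Next I would apply the standard closed form for two zero-mean Gaussians in $\mathbb{R}^d$:
\[
  D\bigl(N(0,\Sigma_1),N(0,\Sigma_2)\bigr)
  =\tfrac12\Bigl[\trace(\Sigma_2^{-1}\Sigma_1)-d+\ln\tfrac{\det\Sigma_2}{\det\Sigma_1}\Bigr].
\]
Because each $\Sigma_i$ has eigenvalues $1+\beta$ (with multiplicity $p$, on $\subspan(X_i)$) and $1$ (with multiplicity $d-p$, on the orthogonal complement), we get $\det\Sigma_1=\det\Sigma_2=(1+\beta)^p$ and the log-determinant term vanishes. By the Sherman--Morrison/Woodbury identity, using $X_2^{\T}X_2=I_p$, one checks
\[
  \Sigma_2^{-1}=I_d-\tfrac{\beta}{1+\beta}X_2X_2^{\T}.
\]

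Then the trace computation is routine: expanding $\Sigma_2^{-1}\Sigma_1=(I-\tfrac{\beta}{1+\beta}X_2X_2^{\T})(I+\beta X_1X_1^{\T})$ and using $\trace(X_iX_i^{\T})=p$, the only nontrivial term is $\trace(X_2X_2^{\T}X_1X_1^{\T})=\N{X_1^{\T}X_2}_{\F}^2$. By \Cref{dfn:canonical-angles}, the singular values of $X_1^{\T}X_2$ are $\cos\theta_j$, so $\N{X_1^{\T}X_2}_{\F}^2=\sum_j\cos^2\theta_j=p-\N{\sin\Theta(X_1,X_2)}_{\F}^2$. Collecting terms, the contributions proportional to $p$ cancel and one obtains
\[
  \trace(\Sigma_2^{-1}\Sigma_1)-d=\tfrac{\beta^2}{1+\beta}\N{\sin\Theta(X_1,X_2)}_{\F}^2,
\]
which, after multiplying by $n/2$, yields the stated divergence (up to the overall factor of $\tfrac12$ that appears to be absorbed into the constant in the cited formulation).

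There is no genuine obstacle here; the only subtlety is the algebraic bookkeeping that makes the $p$-dependent terms cancel, which is the precise algebraic reason the KL divergence depends \emph{only} on the canonical angles between $\subspan(X_1)$ and $\subspan(X_2)$ rather than on the subspaces themselves. This is also what makes the lemma suitable for Fano/Le Cam style minimax lower bounds as used subsequently to derive the bound on $\lambda_p\lambda_{p+1}/(\lambda_p-\lambda_{p+1})^2$.
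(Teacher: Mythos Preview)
Your approach is correct and matches the paper's. The paper does not prove \Cref{lm:-cite-lemma-a-2-vul2013minimax-} itself (it is cited from Vu--Lei), but in the proof of the gap-free extension \Cref{lm:-cite-lemma-a-2-vul2013minimax-:gap-free} the paper identifies the key step as verifying $\trace\bigl(\Sigma_2^{-1}(\Sigma_1-\Sigma_2)\bigr)=\tfrac{\beta^2}{1+\beta}\N{\sin\Theta(\cdot,\cdot)}_{\F}^2$, which is exactly your $\trace(\Sigma_2^{-1}\Sigma_1)-d$ computation using the explicit inverse $\Sigma_2^{-1}=I_d-\tfrac{\beta}{1+\beta}X_2X_2^{\T}$ and the identity $\N{X_1^{\T}X_2}_{\F}^2=p-\N{\sin\Theta(X_1,X_2)}_{\F}^2$. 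Your observations about the $I_p$ versus $I_d$ typo and the missing factor of $\tfrac12$ are also accurate; the paper's own trace calculation in the proof of \Cref{lm:-cite-lemma-a-2-vul2013minimax-:gap-free} confirms the $\tfrac12$ is simply absorbed.
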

Then they use a substitution $\frac{\lambda_1\lambda_{p+1}}{(\lambda_p-\lambda_{p+1})^2}=\frac{1+\beta}{\beta^2}$ to obtain the lower bound.
However, as a matter of fact, in their construction on the covariance matrix $\Sigma_i$, it holds that $\lambda_1=\dots=\lambda_p$,
so when they proved the former bound, actually they were proving the latter bound.

Then we show the construction is also valid for the gap-free consideration.
\begin{lemma}\label{lm:-cite-lemma-a-2-vul2013minimax-:gap-free}
	Let $X_1,X_2\in \mathbb{R}^{d\times p},X_3\in \mathbb{R}^{d\times(q-p)}$ are column orthonormal matrices satisfying $X_1^{\T}X_3=X_2^{\T}X_3=0$, $q\ge p,\beta\ge 0,\wtd \beta\ge 0$,
	and
	\[
		\Sigma_1=I_p+\beta X_1X_1^{\T}+\wtd \beta X_3X_3^{\T},\quad
		\Sigma_2=I_p+\beta X_2X_2^{\T}+\beta X_3X_3^{\T}.
	\]
	If $\mathbb{P}_i$ are the $n$-fold product of the $N(0,\Sigma_i)$ probability measure, then
	the Kullback-Leibler (KL) divergence is 
	\[
		D(\mathbb{P}_1,\mathbb{P}_2)=\frac{n\beta^2}{1+\beta}\N{\sin\Theta(X_1,\begin{bmatrix}
				X_2&X_3
		\end{bmatrix})}_{\F}^2.
	\]
\end{lemma}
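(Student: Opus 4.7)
The plan is to invoke the closed-form KL divergence between two zero-mean multivariate Gaussians,
\[
D(N(0,\Sigma_1)\,\|\,N(0,\Sigma_2)) = \tfrac{1}{2}\bigl[\trace(\Sigma_2^{-1}\Sigma_1) - d + \ln(\det\Sigma_2/\det\Sigma_1)\bigr],
\]
and then multiply by $n$ because the KL divergence is additive for product measures. The work thus splits into inverting $\Sigma_2$, computing a trace, and computing a log-determinant.

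First I would invert $\Sigma_2$ by exploiting its spectral structure. Since $X_2^{\T}X_3=0$ and both blocks are column orthonormal, the matrix $P:=X_2X_2^{\T}+X_3X_3^{\T}$ is an orthogonal projector of rank $q$, so $\Sigma_2=I+\beta P$ admits the explicit inverse $\Sigma_2^{-1}=I-\tfrac{\beta}{1+\beta}P$. For the determinant side, because $X_1^{\T}X_3=0$, the two rank updates defining $\Sigma_1$ are spectrally independent, giving $\det\Sigma_1=(1+\beta)^p(1+\tilde\beta)^{q-p}$ and $\det\Sigma_2=(1+\beta)^{q}$.

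Next I would expand $\trace(\Sigma_2^{-1}\Sigma_1)$ by distributing and killing the cross terms via the three orthogonality hypotheses $X_1^{\T}X_3=X_2^{\T}X_3=0$ and the projector identity $PX_3X_3^{\T}=X_3X_3^{\T}$. The one nontrivial piece is
\[
\trace(X_1^{\T}PX_1)=\N{[X_2,\,X_3]^{\T}X_1}_{\F}^{2}.
\]
By \Cref{dfn:canonical-angles}, the singular values of the $q\times p$ matrix $[X_2,X_3]^{\T}X_1$ are exactly $\cos\theta_1,\dots,\cos\theta_p$, so this equals $p-\N{\sin\Theta(X_1,[X_2,X_3])}_{\F}^{2}$. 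Substituting yields a clean expression of the form $\trace(\Sigma_2^{-1}\Sigma_1)-d=\tfrac{\beta^{2}}{1+\beta}\N{\sin\Theta(X_1,[X_2,X_3])}_{\F}^{2}+R$, where $R$ depends only on $\beta,\tilde\beta,q-p$ and not on the angle.

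The main obstacle is this angle-independent remainder $R$ together with the log-determinant term $(q-p)\ln\tfrac{1+\beta}{1+\tilde\beta}$: for the claimed identity to hold, these pieces must collapse against each other. The key calculation, therefore, is to verify the exact cancellation of the $\tilde\beta$-dependent contributions between the trace and the log-determinant — this is essentially a one-variable algebraic identity in $\tilde\beta$ that recovers the original Vu–Lei case when $q=p$ (no $X_3$) or when $\tilde\beta=\beta$. Once that cancellation is done, multiplying by $n/2$ yields the stated formula $\tfrac{n\beta^{2}}{1+\beta}\N{\sin\Theta(X_1,[X_2,X_3])}_{\F}^{2}$.
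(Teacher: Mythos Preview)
Your structural plan---use the Gaussian KL formula, invert $\Sigma_2$ via the projector $P=X_2X_2^{\T}+X_3X_3^{\T}$, and express $\trace(X_1^{\T}PX_1)$ through the canonical angles---is exactly the route the paper takes. The gap is in your last step: the ``one-variable algebraic identity in $\wtd\beta$'' you are counting on does not hold. Carrying your own computation through gives
\[
\trace(\Sigma_2^{-1}\Sigma_1)-d=\frac{\beta^2}{1+\beta}\,\N{\sin\Theta(X_1,[X_2\ X_3])}_{\F}^2+(q-p)\,\frac{\wtd\beta-\beta}{1+\beta},
\]
so $R=(q-p)(\wtd\beta-\beta)/(1+\beta)$, while $\ln(\det\Sigma_2/\det\Sigma_1)=(q-p)\ln\frac{1+\beta}{1+\wtd\beta}$. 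Cancellation would require $\frac{\wtd\beta-\beta}{1+\beta}=\ln\frac{1+\wtd\beta}{1+\beta}$, i.e.\ $x-1=\ln x$ with $x=(1+\wtd\beta)/(1+\beta)$; this holds only at $x=1$, that is, only when $\wtd\beta=\beta$ (try $\beta=0$, $\wtd\beta=1$: you get $1\neq\ln 2$).

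The paper's own proof does not attempt any such cancellation. Its very first line of computation writes $\Sigma_1-\Sigma_2=\beta(X_1X_1^{\T}-X_2X_2^{\T})$, which already discards the $X_3$ contribution and is valid only when both covariances carry the \emph{same} coefficient on $X_3X_3^{\T}$. Under that reading your remainder $R$ and the log-determinant term are each identically zero, and the trace calculation alone delivers the angle term---no cancellation is needed. In short, the lemma as literally stated (with $\wtd\beta$ free) is not what the paper proves and is in fact false; both your argument and the paper's go through cleanly once the $X_3$ coefficients in $\Sigma_1$ and $\Sigma_2$ are taken equal.
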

\begin{proof}
	The proof is similar as that of \cite[Lemma~A.2]{vuL2013minimax}. The key is to ensure
	\[
		\trace\left(\Sigma_2^{-1}(\Sigma_1-\Sigma_2)\right)=\frac{\beta^2}{1+\beta}\N{\sin\Theta(X_1,\begin{bmatrix}
					X_2&X_3
			\end{bmatrix})}_{\F}^2.
	\]
	In fact,
	\begin{align*}
		\MoveEqLeft[4]\trace\left(\Sigma_2^{-1}(\Sigma_1-\Sigma_2)\right)
		\\&= \trace\left([I_p+\beta X_2X_2^{\T}+\beta X_3X_3^{\T}]^{-1}\beta(X_1X_1^{\T}-X_2X_2^{\T})\right)
		\\&= \beta\trace\left(\left[\frac{1}{1+\beta}(X_2X_2^{\T}+X_3X_3^{\T})+I_p-X_2X_2^{\T}-X_3X_3^{\T}\right](X_1X_1^{\T}-X_2X_2^{\T})\right)
		\\&= \frac{\beta}{1+\beta}\trace\left(\left[(1+\beta)I_p-\beta(X_2X_2^{\T}+X_3X_3^{\T})\right](X_1X_1^{\T}-X_2X_2^{\T})\right)
		\\&= \frac{\beta^2}{1+\beta}\trace\left(\left[I_p-X_2X_2^{\T}-X_3X_3^{\T}\right](X_1X_1^{\T}-X_2X_2^{\T})\right)+\frac{\beta}{1+\beta}\trace(X_1X_1^{\T}-X_2X_2^{\T})
		\\&= \frac{\beta^2}{1+\beta}\trace\left(\left[I_p-X_2X_2^{\T}-X_3X_3^{\T}\right]X_1X_1^{\T}\right)+0
		\\&= \frac{\beta^2}{1+\beta}\N{\sin\Theta(X_1,\begin{bmatrix}
					X_2&X_3
			\end{bmatrix})}_{\F}^2.
			\qedhere
	\end{align*}
\end{proof}
Following \Cref{lm:-cite-lemma-a-2-vul2013minimax-:gap-free}, we can use a substitution $\frac{\lambda_p\lambda_{p+q}}{(\lambda_p-\lambda_{p+q})^2}=\frac{1+\beta}{\beta^2}$ to obtain the lower bound under the gap-free consideration, namely \Cref{thm:vu-lei-gap-free}.
\begin{theorem}[{Gap-free version of \cite[Theorem~3.1]{vuL2013minimax}}]\label{thm:vu-lei-gap-free}
	Let $\mathcal{P}_0(\sigma_*^2,d)$ be the set of all $d$-dimensional
sub-Gaussian distributions for which the eigenvalues of the covariance matrix satisfy
$\frac{\lambda_p\lambda_{q+1}}{(\lambda_p-\lambda_{q+1})^2}\le\sigma_*^2$.
Then for $1\le p\le q\le n$,
\begin{equation}\label{eq:minimax-bound:gap-free}
	\inf_{\dim\wtd{\mathcal{U}}_*=q}
	\sup_{\bX\in\mathcal{P}_0(\sigma_*^2,d)}
	\E{\N{\sin\Theta(\wtd{\mathcal{U}}_*,{\mathcal{U}}_*)}_{\F}^2}
	\ge cp(d-q)\frac{\sigma_*^2}{n}
	\ge c\frac{\lambda_p\lambda_{q+1}}{(\lambda_p-\lambda_{q+1})^2}\frac{p(d-q)}{n},
\end{equation}
where $c>0$ is an absolute constant.
\end{theorem}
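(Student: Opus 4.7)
The plan is to lift the original Vu–Lei argument for \cite[Theorem~3.1]{vuL2013minimax} almost verbatim, replacing only the KL–divergence step by the gap-free version in \Cref{lm:-cite-lemma-a-2-vul2013minimax-:gap-free}. The backbone is the standard minimax reduction: bound the worst-case risk below by the Bayes risk over a finite packing of hard instances, and then bound that Bayes risk from below via Fano's inequality.

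First I would fix once and for all a column-orthonormal $X_3\in\mathbb{R}^{d\times(q-p)}$, and build the hypothesis class inside the $(d-q+p)$-dimensional orthogonal complement of $\subspan(X_3)$. By the Varshamov--Gilbert/local Grassmannian packing construction already used in \cite{vuL2013minimax}, one produces column-orthonormal $X_1,\dots,X_M\in\mathbb{R}^{d\times p}$ with $X_i^{\T}X_3=0$ such that $\log M\ge c_0\,p(d-q)$ and
\[
  c_1\varepsilon\le\N{\sin\Theta(X_i,X_j)}_{\F}\le c_2\varepsilon
  \qquad\text{for every } i\ne j,
\]
for a small $\varepsilon>0$ to be tuned. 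Because $X_i\perp X_3$, the factorisation $X_i^{\T}[X_j,X_3]=[X_i^{\T}X_j,0]$ together with \Cref{dfn:canonical-angles} shows that the canonical angles between the $q$-dimensional subspaces $\mathcal{U}_i:=\subspan([X_i,X_3])$ and $\mathcal{U}_j$ are exactly the canonical angles between $X_i$ and $X_j$ (the shared block $X_3$ contributes $q-p$ zero angles), and likewise for $\sin\Theta(X_i,[X_j,X_3])$.

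Next I would attach to each index $i$ the Gaussian measure $P_i=N(0,\Sigma_i)^{\otimes n}$ with
\[
  \Sigma_i := I_d+\beta X_iX_i^{\T}+\beta X_3X_3^{\T},
\]
whose top-$q$ eigenspace is $\mathcal{U}_i$ and whose eigenvalue profile is $1+\beta$ (multiplicity $q$) and $1$ (multiplicity $d-q$); this is precisely the genuinely gap-free case. For this profile $\frac{\lambda_p\lambda_{q+1}}{(\lambda_p-\lambda_{q+1})^2}=\frac{1+\beta}{\beta^2}$, so selecting $\beta$ with $\frac{1+\beta}{\beta^2}=\sigma_*^2$ places every $P_i$ inside $\mathcal{P}_0(\sigma_*^2,d)$. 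Each pair $(P_i,P_j)$ then falls under the symmetric specialisation $\wtd\beta=\beta$ of \Cref{lm:-cite-lemma-a-2-vul2013minimax-:gap-free}, giving
\[
  D(P_i,P_j)=\frac{n\beta^2}{1+\beta}\N{\sin\Theta(X_i,[X_j,X_3])}_{\F}^2
  =\frac{n\beta^2}{1+\beta}\N{\sin\Theta(X_i,X_j)}_{\F}^2
  \le\frac{n\beta^2c_2^2\varepsilon^2}{1+\beta}.
\]

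Finally, Fano's inequality turns these ingredients into a minimax lower bound of the form $\tfrac14c_1^2\varepsilon^2\bigl(1-\frac{\max_{i\ne j}D(P_i,P_j)+\log 2}{\log M}\bigr)$ on $\E{\N{\sin\Theta(\wtd{\mathcal{U}}_*,\mathcal{U}_*)}_{\F}^2}$, using the shared-$X_3$ observation above so that the pairwise distance on the packing agrees with the distance on $\mathcal{U}_i$'s themselves. Balancing with $\varepsilon^2\asymp p(d-q)(1+\beta)/(n\beta^2)\asymp p(d-q)\sigma_*^2/n$ keeps the Fano factor bounded away from zero by an absolute constant and delivers the claimed bound. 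The step needing the most care is the Grassmannian packing inside the $X_3$-orthogonal block: to avoid a spurious $\log(1/\varepsilon)$ factor one must use a Varshamov--Gilbert-type construction rather than a naive volume comparison, exactly as in the proof of \cite[Theorem~3.1]{vuL2013minimax}. Once this mildly technical combinatorial step is in place, the rest is a mechanical substitution of \Cref{lm:-cite-lemma-a-2-vul2013minimax-:gap-free} for \cite[Lemma~A.2]{vuL2013minimax}.
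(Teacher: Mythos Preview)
Your proposal is correct and follows essentially the same route as the paper, which gives only a one-line argument (``Following \Cref{lm:-cite-lemma-a-2-vul2013minimax-:gap-free}, we can use a substitution $\frac{\lambda_p\lambda_{q+1}}{(\lambda_p-\lambda_{q+1})^2}=\frac{1+\beta}{\beta^2}$ to obtain the lower bound'') and leaves the Fano/packing machinery to be imported verbatim from \cite{vuL2013minimax}. Your write-up supplies exactly the details the paper omits: the packing of the $X_i$'s inside the $(d-q+p)$-dimensional complement of a fixed $X_3$ (giving $\log M\ge c_0\,p(d-q)$), the identification $\N{\sin\Theta(X_i,[X_j,X_3])}_{\F}=\N{\sin\Theta(X_i,X_j)}_{\F}$ via $X_i^{\T}X_3=0$, and the standard balancing of $\varepsilon$ in Fano's inequality.
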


On the other hand, the Oja's method can be easily proved to be gap-free, as is shown in \Cref{thm:main:gap-free},
which exactly matches the lower bound in \Cref{thm:vu-lei-gap-free}.
\begin{theorem}\label{thm:main:gap-free}
	If $U^{(0)}\in \mathbb{R}^{d\times q}$ has i.i.d\ standard Gaussian entries, 
	and $\eta_n$ are chosen as in \Cref{thm:main} with $\gamma$ replaced by $\wtd \gamma$,
	then \Cref{thm:main} still holds, except that $\varphi(\Lambda)$ in \cref{eq:phi} can be replaced by 
\[
	\wtd\varphi(\Lambda)=\frac{1}{\wtd\gamma}\sum_{i=1}^{d-q}\sum_{j=1}^p\frac{\lambda_{q+i}\lambda_j}{\lambda_{q+i}-\lambda_j}
	\le\frac{ p(d-q)\lambda_p(\lambda_p-\wtd\gamma)}{\wtd \gamma^2}.
\]
\end{theorem}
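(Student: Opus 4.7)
My plan is to rerun the entire machinery of \Cref{thm:main} after replacing the block size $p$ with $q$ and the spectral gap $\gamma$ with $\wtd\gamma$; the genuinely new ingredient is recognizing that only a $(d-q)\times p$ sub-block of the now-$(d-q)\times q$ matrix $T^{(n)}=\scrT(V^{(n)})$ enters the tangent distance to $U_*$, and that this sub-block enjoys contraction at rate $\wtd\gamma$ rather than at the potentially much smaller rate $\lambda_q-\lambda_{q+1}$ that governs all of $T^{(n)}$.

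The first step is a generalized tangent identity for the mismatched dimensions $\dim U^{(n)}=q>p=\dim U_*$. Writing $V^{(n)}$ (column orthonormal, $d\times q$, with nonsingular $V^{(n)}_{(1:q,:)}$) in the form $\bigl[\begin{smallmatrix}I_q\\T\end{smallmatrix}\bigr]V^{(n)}_{(1:q,:)}$ and partitioning $T=\scrT(V^{(n)})=[\,T_1\;T_2\,]$ with $T_1\in\mathbb{R}^{(d-q)\times p}$ and $T_2\in\mathbb{R}^{(d-q)\times(q-p)}$, a Schur-complement and Woodbury computation in the spirit of the proof of \Cref{lm:Tan(Theta)} yields that $\tan^{2}\Theta(V^{(n)},V_*)$ is the multiset of eigenvalues of $T_1^{\T}(I_{d-q}+T_2T_2^{\T})^{-1}T_1$; in particular $\N{\tan\Theta(U^{(n)},U_*)}_{\F}^{2}\le\N{T_1}_{\F}^{2}$. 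So the canonical angles depend only on the first $p$ columns of $T^{(n)}$.

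The second step is to re-invoke \Cref{lm:diff-T,lm:prop5:ppr1,lm:prop4:ppr1} verbatim with $p\leftarrow q$ (so that $T^{(n)}$ is now $(d-q)\times q$ and $\opL_r$ acts by Hadamard with $(L_r)_{ij}=1+\eta_r(\lambda_{q+i}-\lambda_j)$), and to use the cold-start \Cref{lm:-cite-theorem-2-4-huang2021streaming} to place $V^{(N_o)}\in\sphere(1)$ with probability $1-\delta$. The crucial point is that \Cref{lm:prop4:ppr1} is an entry-wise inequality; I therefore project everything onto the indices $(i,j)$ with $1\le j\le p$, for which $\lambda_j-\lambda_{q+i}\ge\lambda_p-\lambda_{q+1}\ge\wtd\gamma$ and hence $(1+\eta_r(\lambda_{q+i}-\lambda_j))^{2}\le(1-\eta_r\wtd\gamma)^{2}$. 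Consequently the harmonic-series computation $F_{D,2,2}^{(N_o+1,n)}(\lambda_j-\lambda_{q+i})\le\frac{2C_\eta(n-N_o)}{\wtd\gamma\,n^{2}(\lambda_j-\lambda_{q+i})}(1+C'N_o/n)$ from the proof of \Cref{thm:main} transfers unchanged to these indices.

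Finally, summing the entry-wise bound over $i=1,\dots,d-q$ and $j=1,\dots,p$ and combining with the tangent inequality from the first step reproduces the end of the proof of \Cref{thm:main}, with the denominators $\lambda_j-\lambda_{p+i}$ replaced by $\lambda_j-\lambda_{q+i}$, yielding exactly $\wtd\varphi(\Lambda)$; the upper bound $\wtd\varphi(\Lambda)\le p(d-q)\lambda_p(\lambda_p-\wtd\gamma)/\wtd\gamma^{2}$ is then a pointwise estimate using $\lambda_{q+i}\le\lambda_{q+1}\le\lambda_p-\wtd\gamma$. The main obstacle to be navigated is that the spectral norm of the $q$-block operator is $1-\eta_r(\lambda_q-\lambda_{q+1})$, which can be much weaker than $1-\eta_r\wtd\gamma$; the decoupled, columnwise analysis enabled by \Cref{lm:prop4:ppr1} is precisely what rescues the sharper $\wtd\gamma$-rate and makes the gap-free upper bound match the lower bound of \Cref{thm:vu-lei-gap-free}.
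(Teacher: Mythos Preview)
Your proposal is correct and follows essentially the same route as the paper: both reduce the tangent to the $(d-q)\times p$ sub-block (the paper via $\scrT_q$ and \Cref{lm:Tan(Theta)}, you via the explicit Schur--Woodbury identity), then exploit the entry-wise decoupling of \Cref{lm:prop4:ppr1} so that only the indices $j\le p$---where the contraction rate is governed by $\wtd\gamma$ rather than $\lambda_q-\lambda_{q+1}$---enter the final sum. Your write-up is in fact more explicit than the paper's on the linear-algebraic first step and on why the weaker operator norm $1-\eta_r(\lambda_q-\lambda_{q+1})$ is harmless.
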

\begin{proof}
	The proof is nearly the same as that of \Cref{thm:main} except the bound.

	Note that  the iteration process produced by the Oja's method is decoupled, as is argued after \Cref{lm:prop4:ppr1}.
	Now what we need to estimate is 
	\begin{align*}
	\E{\N{\scrT_q(U^{(n)})}_{\F}^2\overevent \eventH_*}
		&\le
		\begin{multlined}[t]
			(F_*^{(N_o+1,n)})^2\N{\scrT_q(U^{(N_o)})}_{\F}^2 +32\psi^4\Sum(G_q\circ H_q)+\Sum(R)
	\\\qquad\qquad\text{where}\quad G_q=[\gamma_{ij}]_{(d-q)\times p}, H_q=[\eta_{ij}]_{(d-q)\times p}, R_q\in \mathbb{R}^{(d-q)\times p}
		\end{multlined}
		\\&\le \frac{pN_o^2}{n^2} +\frac{64C_\eta\psi^4}{\wtd\gamma}\sum_{i=1}^{d-q}\sum_{j=1}^p\frac{\lambda_{p+i}\lambda_j}{\lambda_{p+i}-\lambda_j}\frac{n-N_o}{n^2}\left(1+C'\frac{N_o}{n}\right)+C_R\frac{pN_o^2}{n^2\ln(nd/\delta)}
		.
		\qedhere
	\end{align*}
\end{proof}

\section{Proofs}\label{sec:proofs}
\subsection{Proof of \texorpdfstring{\Cref{lm:diff-T}}{Lemma~3.1}}\label{ssec:proof-of-lm:diff-T}
For readability,
we will drop the superscript ``$\cdot^{(n)}$'', and use the superscript ``$\cdot\new$''
to replace ``$\cdot^{(n+1)}$'' for $V,T,R_E$, drop
	the superscript ``$\cdot^{(n+1)}$'' on $Y,Z$ and the subscript ``$\cdot_{n+1}$'' on $\eta$,
	and drop the conditional sign ``$\given\fil_n$'' in the computation of $\E{\cdot},\var(\cdot),\cov(\cdot)$
	with the understanding that they are conditional with respect to $\fil_n$.
Finally, for any expression or variable $*$, we define $\Delta *:=*\new-*$.
	
	Consider \cref{itm:lm:diff-T:well-defined}.
	Since $n<\Nout{\kappa }$, we have $V\in\sphere(\kappa )$ and $\tau=\N{T}_2\le\kappa$.
	Thus, $\N{\ol V^{-1}}_2\le \sqrt{\kappa^2+1} $ and $T=\ul V\ol V^{-1}$ is well-defined.
	We have
	$\ol V\new = (\ol V + \eta \ol Y Z^{\T})S$,
	where $S$ is nonsingular as is stated above.
	According to the Sherman-Morrison formula
	, we get $\ol V + \eta\ol YZ^{\T}$ or $\ol V\new$ is nonsingular, if and only if
	$1+\eta \xi \ol Y\ne 0$ where $\xi:=Z^{\T}\ol V^{-1}\ol Y$,
	and 
	\begin{equation*}\label{eq:olVinv}
		(\ol V\new)^{-1}=S^{-1}\left(I-\frac{\eta}{1+\eta\xi}\ol V^{-1}\ol YZ^{\T}\right)\ol V^{-1}
		.
	\end{equation*}
	Since $n<\Nqb{\mu }$, we have $\N{Z}_2\le \lambda_{1\sim p}^{1/2}\mu^{1/2}, \N{\ol Y}_2\le \lambda_{1\sim p}^{1/2}\mu^{1/2}$.
	By \cref{eq:beta-mu-eta_p},
	we find
	\[
		\abs{Z^{\T}\ol V^{-1}\ol Y}\le
		\N{Z}_2\N{\ol V^{-1}}_2\N{\ol Y}_2\le\lambda_{1\sim p}\mu\sqrt{\kappa^2+1}\le\frac{1}{2\eta}.
	\]
Hence $T\new=\ul V\new(\ol V\new)^{-1}$ is well-defined. This proves \cref{itm:lm:diff-T:well-defined}.

	For \cref{itm:lm:diff-T:normdT}, using the Sherman-Morrison-Woodbury formula, we get
	\begin{align*}
		\Delta T &= \ul V\new(\ol V\new)^{-1}-\ul V\ol V^{-1}
		\\ &=(\ul V + \eta \ul YZ^{\T})\left(I-\frac{\eta}{1+\eta\xi}\ol V^{-1}\ol YZ^{\T}\right)\ol V^{-1}-\ul V\ol V^{-1}
		\\ &=\left(\eta \ul YZ^{\T}-\frac{\eta}{1+\eta\xi}\ul V\ol V^{-1}\ol YZ^{\T}-\frac{\eta^2}{1+\eta\xi}\ul YZ^{\T}\ol V^{-1}\ol YZ^{\T}\right)\ol V^{-1}
		\\ &=\eta \left(\ul Y-\frac{1}{1+\eta\xi}T\ol Y-\frac{\eta\xi}{1+\eta\xi}\ul Y\right)Z^{\T}\ol V^{-1}
		\\ &=\frac{\eta }{1+\eta\xi}\left(\ul Y-T\ol Y\right)Y^{\T}V\ol V^{-1}
		\\ &=\frac{\eta }{1+\eta\xi}T_lYY^{\T}T_r
		,
	\end{align*}
	where
	$ T_l=\begin{bmatrix} -T & I\end{bmatrix}$ and $T_r=\begin{bmatrix} I \\ T \end{bmatrix}$.
	Note that 
	\begin{equation}\label{eq:dT:mainpart}
		T_lYY^{\T}T_r=\ul Y\ol Y^{\T}-T\ol Y\ul Y^{\T}T-T\ol Y\ol Y^{\T}+\ul Y\ul Y^{\T}T.
	\end{equation}
	For any positive semi-definite matrices $A_1,A_2$ and a matrix $X$, by \cite{kittaneh2007inequalities}
	\[
		\N{A_1X-XA_2}_2\le\max\set{\N{A_1}_2,\N{A_2}_2}\N{X}_2.
	\]
	Thus,
	\begin{align*}
		\N{T_lYY^{\T}T_r}_2
		&\le\lambda_{p+1\sim d}^{1/2}\lambda_{1\sim p}^{1/2}\mu(1+\tau^2)+(\lambda_{1\sim p}\vee\lambda_{p+1\sim d})\mu\tau
		\\&=\mu[\nu^{1/2}\lambda_{1\sim p}(1+\tau^2)+\nu_1\lambda_{1\sim p}\tau],
	\end{align*}
	and then
	\begin{align*}
		\N{\Delta T}_2
		&\le 2\mu\eta[\nu^{1/2}\lambda_{1\sim p}(1+\tau^2)+\nu_1\lambda_{1\sim p}\tau]
			.
	\end{align*}

	Consider \cref{itm:lm:diff-T:dT}.
	Clearly $T_lV=0$ and $V=T_r\ol V,\ol V^{-1}=V^{\T}T_r,VV^{\T}T_r=T_r$.
	Write 
	\begin{equation*}\label{eq:diff-T}
			\Delta T
			=T_l(\eta  YY^{\T}+R_T)T_r
			,
	\end{equation*}
	where 
	\[
		R_T 
		=-\eta \frac{\eta\xi}{1+\eta \xi}YY^{\T}
		=-\eta \frac{\eta Z^{\T}\ol V^{-1}\ol Y}{1+\eta Z^{\T}\ol V^{-1}\ol Y}YY^{\T}
		,
	\]
	and
	\begin{equation}\label{eq:normR_T}
		\N{T_lR_TT_r}_2\le 2\lambda_{1\sim p}\mu^2\eta^2(1+\tau^2)^{1/2} [\nu^{1/2}\lambda_{1\sim p}(1+\tau^2)+\nu_1\lambda_{1\sim p}\tau].
	\end{equation}
	In \cref{eq:dT:mainpart},
\begin{subequations}\label{eq:dT:mainpart:E}
\begin{alignat*}{2}
\E{\ul Y\ol Y^{\T}}&=0,
		&\quad&\E{T\ol Y\ol Y^{\T}}=T\E{\ol Y\ol Y^{\T}}=T\ol\Lambda , 	
		\\
\E{T\ol Y\ul Y^{\T}T}=T\E{\ol Y\ul Y^{\T}}T&=0,
		&\quad&\E{\ul Y\ul Y^{\T}T}=\E{\ul Y\ul Y^{\T}}T=\ul\Lambda T. \label{eq:dT:mainpart:E-2}
\end{alignat*}
\end{subequations}
	Thus,
	$\E{\Delta T}
		=\eta  (\ul\Lambda T-T\ol\Lambda )+R_E$,
	where $R_E=\E{T_lR_TT_r}$.
	Therefore,
	\begin{align*}
		\N{R_E}_2
		&=\N*{\E{T_l\left[-\eta \frac{\eta Z^{\T}\ol V^{-1}\ol Y}{1+\eta Z^{\T}\ol V^{-1}\ol Y}YY^{\T}\right]T_r}}_2
		\\&\le\max_{Y}\abs*{-\eta \frac{\eta Z^{\T}\ol V^{-1}\ol Y}{1+\eta Z^{\T}\ol V^{-1}\ol Y}}\N*{\E{T_lYY^{\T}T_r}}_2
		\\&\le 2\eta^2\lambda_{1\sim p}^{1/2}\mu^{1/2}(1+\tau^2)^{1/2}\lambda_{1\sim p}^{1/2}\mu^{1/2}\lambda_1\tau
		\\&= 2\lambda_1\lambda_{1\sim p}\mu\eta^2\tau(1+\tau^2)^{1/2}.
	\end{align*}

	Now we turn to \cref{itm:lm:diff-T:estimate}. We have
\begin{equation}\label{eq:itm:lm:diff-T:estimate:pf-1}
		\varc{\Delta T}=
		\varc{T_l(\eta  YY^{\T}+R_T)T_r}
		=\eta ^2 \varc{T_lYY^{\T}T_r}
		+2\eta  R_{\circ,1}+ R_{\circ,2},
\end{equation}
	where $R_{\circ,1}=\covc{T_lYY^{\T}T_r,T_lR_TT_r}$, and $R_{\circ,2}=\varc{T_lR_TT_r}$.
	By \cref{eq:dT:mainpart},
\begin{equation}\label{eq:itm:lm:diff-T:estimate:pf-2}
		\varc{T_lYY^{\T}T_r}=\varc{\ul Y\ol Y^{\T}}+R_{\circ,0},
\end{equation}
	where
	\begin{align*}
	R_{\circ,0}
		&=\varc{T\ol Y\ul Y^{\T}T}-2\covc{\ul Y\ol Y^{\T},T\ol Y\ul Y^{\T}T}+\varc{T\ol Y\ol Y^{\T}-\ul Y\ul Y^{\T}T} 	\\
		&\qquad -2\covc{\ul Y\ol Y^{\T},T\ol Y\ol Y^{\T}-\ul Y\ul Y^{\T}T}
		+2\covc{T\ol Y\ul Y^{\T}T,T\ol Y\ol Y^{\T}-\ul Y\ul Y^{\T}T}
		.
	\end{align*}
Examine \cref{eq:itm:lm:diff-T:estimate:pf-1} and \cref{eq:itm:lm:diff-T:estimate:pf-2} together to get
$H_{\circ}=\varc{\ul Y\ol Y^{\T}}$ and  $R_{\circ}^{(n)}=\eta ^2R_{\circ,0}+2\eta  R_{\circ,1}+R_{\circ,2}$.
We note
\begin{gather*}
Y_j=e_j^{\T}Y
	=e_j^{\T}\Lambda^{1/2}\Lambda^{-1/2}Y
	=\lambda_j^{1/2}e_j^{\T}\Lambda^{-1/2}Y, \\
e_i^{\T}\varc{\ul Y\ol Y^{\T}}e_j
		=\var(e_i^{\T}\ul Y\ol Y^{\T}e_j)=\var(Y_{p+i}Y_{j})
		=\E{Y_{p+i}^2Y_{j}^2}.
\end{gather*}
By \cref{itm:vershynin:5.11} of \Cref{lm:-cite-lemma-5-5-and-5-10-5-12-vershynin2012introduction-}, 
	\[
		\E{Y_j^4}
		= \lambda_j^2\E{(e_j^{\T}\Lambda^{-1/2}Y)^4}
		\le 16\lambda_j^2\N{e_j^{\T}\Lambda^{-1/2}Y}_{\psi_2}^4
		\le 16\lambda_j^2\N{\Lambda^{-1/2}Y}_{\psi_2}^4
		= 16\lambda_j^2\psi^4.
\]
Therefore
	\[
		e_i^{\T}\varc{\ul Y\ol Y^{\T}}e_j
		\le
		[\E{Y_{p+i}^4} \E{Y_j^4}]^{1/2}
		\le 16\lambda_{p+i}\lambda_j\psi^4,
	\]
i.e., $H_{\circ}=\varc{\ul Y\ol Y^{\T}}\le 16\psi^4 H$.
This proves \cref{itm:lm:diff-T:varcdT}.
To show \cref{itm:lm:diff-T:normR_H}, first we bound the entrywise variance and covariance.
For any matrices $A_1,A_2$, by Schur's inequality
(which was generalized to all unitarily invariant norm in \cite[Theorem~3.1]{hornM1990analog}),
		\begin{equation}\label{eq:schur-ineq-UI}
			\N{A_1\circ A_2}_2\le\N{A_1}_2\N{A_2}_2,
		\end{equation}
we have
\begin{subequations}\label{eq:covc&varc}
\begin{align*}
\N{\covc{A_1,A_2}}_2
	&=\N{\E{A_1\circ A_2}-\E{A_1}\circ\E{A_2}}_2 \nonumber\\
	&\le\E{\N{A_1\circ A_2}_2}+\N{\E{A_1}\circ\E{A_2}}_2 \nonumber\\
	&\le \E{\N{A_1}_2\N{A_2}_2}+ \N{\E{A_1}}_2\N{\E{A_2}}_2, 
	\\
\N{\varc{A_1}}_2
	&\le \E{\N{A_1}_2^2}+ \N{\E{A_1}}_2^2. 
\end{align*}
\end{subequations}
Apply \cref{eq:covc&varc} to $R_{\circ,1}$ and $R_{\circ,2}$ with \cref{eq:normR_T} to get
\begin{subequations}\label{eq:diff-T:pf-10}
\begin{align*}
	\N{R_{\circ,1}}_2&\le\N{T_lYY^{\T}T_r}_2\N{T_lR_TT_r}_2+\N{T_l\Lambda T_r}_2\N{R_E}_2\nonumber\\
	&\le
		2\lambda_{1\sim p}\mu^3\eta^2(1+\tau^2)^{1/2} [\nu^{1/2}\lambda_{1\sim p}(1+\tau^2)+\nu_1\lambda_{1\sim p}\tau]^2+
		2\lambda_1^2\lambda_{1\sim p}\mu\eta^2\tau^2(1+\tau^2)^{1/2}\nonumber
		\\&\le
		4\nu\lambda_{1\sim p}^3\mu^3\eta^2(1+\tau^2)^{1/2} \left[1+\tau^2+\nu_1\nu^{-1/2}\tau\right]^2,\\
	\N{R_{\circ,2}}_2&\le 2(\N{T_lR_TT_r}_2)^2 \nonumber
	\\&\le
8\lambda_{1\sim p}^2\mu^4\eta^4(1+\tau^2) [\nu^{1/2}\lambda_{1\sim p}(1+\tau^2)+\nu_1\lambda_{1\sim p}\tau]^2\nonumber\\
&\le8\nu\lambda_{1\sim p}^4\mu^4\eta^4(1+\tau^2) \left[1+\tau^2+\nu_1\nu^{-1/2}\tau\right]^2
.
\end{align*}
\end{subequations}
For $R_{\circ,0}$, 	by \cref{eq:dT:mainpart:E}, we have
\begin{align*}
		\N{\varc{T\ol Y\ul Y^{\T}T}}_2&\le\E{\N{\ul Y\ol Y^{\T}}_2^2}\N{T}_2^4,\\
		\N{\covc{\ul Y\ol Y^{\T},T\ol Y\ul Y^{\T}T}}_2&\le\E{\N{\ul Y\ol Y^{\T}}_2^2}\N{T}_2^2,\\
		\N{\covc{\ul Y\ol Y^{\T},T\ol Y\ol Y^{\T}-\ul Y\ul Y^{\T}T}}_2&\le\E{\N{\ul Y\ol Y^{\T}}_2\N{T\ol Y\ol Y^{\T}-\ul Y\ul Y^{\T}T}_2},\\
		\N{\covc{T\ol Y\ul Y^{\T}T,T\ol Y\ol Y^{\T}-\ul Y\ul Y^{\T}T}}_2&\le\E{\N{\ul Y\ol Y^{\T}}_2\N{T\ol Y\ol Y^{\T}-\ul Y\ul Y^{\T}T}_2}\N{T}_2^2,\\
		\N{\varc{T\ol Y\ol Y^{\T}-\ul Y\ul Y^{\T}T}}_2&\le\E{\N{T\ol Y\ol Y^{\T}-\ul Y\ul Y^{\T}T}_2^2}+\N{T\ol\Lambda-\ul\Lambda T}_2^2.
\end{align*}
Since
\begin{align*}
	\N{\ul Y\ol Y^{\T}}_2&=(\ol Y^{\T}\ol Y)^{1/2}(\ul Y^{\T}\ul Y)^{1/2}
	\le \nu^{1/2}\lambda_{1\sim p}\mu,
\end{align*}
	we have
\begin{align}
\N{R_{\circ,0}}_2
&\le \nu\lambda_{1\sim p}^2\mu^2\tau^4+2\nu\lambda_{1\sim p}^2\mu^2\tau^2+\nu_1^2\lambda_{1\sim p}^2\mu^2\tau^2+\lambda_1^2\tau^2\nonumber
\\&\qquad + 2\nu^{1/2}\lambda_{1\sim p}\nu_1\lambda_{1\sim p}\mu^2\tau+2\nu^{1/2}\lambda_{1\sim p}\nu_1\lambda_{1\sim p}\mu^2\tau^3\nonumber
\\	&\le 2\nu_1\nu^{1/2}\lambda_{1\sim p}^2\mu^2\tau \Big(1+\left[1+\nu_1\nu^{-1/2}\right]\tau +\tau ^2+\frac{1}{2}\tau ^3\Big). \label{eq:diff-T:pf-11}
\end{align}
Finally collecting \cref{eq:diff-T:pf-10} and \cref{eq:diff-T:pf-11} yields the desired bound on
$R_{\circ}^{(n)}=\eta ^2R_{\circ,0}+2\eta  R_{\circ,1}+R_{\circ,2}$.

\subsection{Two probability estimations}\label{ssec:two-probability-estimations}

\begin{lemma}\label{lm:quasi-bounded}
	For any $n\ge 1$,
	\[
		\prob{\Nqb{\mu }>n}\ge 1-
		n(\ee d+p+1)\exp\left(-C_{\psi}\psi^{-1}(1\wedge\psi^{-1})\mu\right),
	\]
	where $C_\psi$ is an absolute constant.
\end{lemma}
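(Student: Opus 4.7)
The plan is to reduce $\prob{\Nqb{\mu}\le n}$ by a double union bound to single-step tail estimates: first over the $n$ iterations $k=1,\dots,n$, and then within each iteration over the $d$ coordinates of $Y^{(k)}$ together with the one-shot event $\{\N{Z^{(k)}}_2>\lambda_{1\sim p}^{1/2}\mu^{1/2}\}$. Indeed, by the definition of $\Nqb{\mu}$, the complementary event $\{\Nqb{\mu}\le n\}$ coincides with the union over $k\le n$ of these ``bad step'' events, and the summands inside the inner union bound have exactly the split $d$ (coordinates) $+\,(p+1)$ (norm of $Z$) suggested by the claimed prefactor.

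For the coordinate-wise part, I would use the equivalent form of \cref{asm:simplify:X:subGaussian} for $\bY$, namely $\N{\Lambda^{-1/2}\bY}_{\psi_2}\le\psi$, which gives $\N{Y_i}_{\psi_2}\le \lambda_i^{1/2}\psi$ for each $i$. Then \cref{itm:vershynin:5.10} of \Cref{lm:-cite-lemma-5-5-and-5-10-5-12-vershynin2012introduction-} yields
\[
  \prob{\abs{Y^{(k)}_i}>\lambda_i^{1/2}\mu^{1/2}}
  \le \exp\bigl(1-c\psi^{-2}\mu\bigr),
\]
and summing over $i=1,\dots,d$ produces the $\ee d$ factor in the prefactor and an exponent of order $\psi^{-2}\mu$ (which dominates $\psi^{-1}(1\wedge\psi^{-1})\mu$).

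For the norm bound, condition on $\fil_{k-1}$ so that $U^{(k-1)}$ is a deterministic column-orthonormal matrix $W$. Writing $g=\Sigma^{-1/2}X^{(k)}$ with $\N{g}_{\psi_2}\le\psi$, one has
\[
  \N{Z^{(k)}}_2^2 = g^{\T} M g,\qquad M:=\Sigma^{1/2}WW^{\T}\Sigma^{1/2},
\]
where $M$ is PSD with $\trace M=\trace(W^{\T}\Sigma W)\le\lambda_{1\sim p}$ (by Ky Fan) and $\N{M}_2\le\lambda_1$. Diagonalizing $M=Q\diag(d_1,\dots,d_p,0,\dots,0)Q^{\T}$ and setting $\tilde g=Q^{\T}g$ (which retains the Orlicz norm $\psi$), the quadratic form becomes $\sum_{i=1}^p d_i\tilde g_i^2$. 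A Hanson--Wright / Bernstein style tail bound for quadratic forms in sub-Gaussian vectors then yields a bound of the form $(p+1)\exp(-c'\min(\mu^2/\psi^4,\mu/\psi^2))$ when calibrated against $\trace M\le\lambda_{1\sim p}$ and $\N{M}_2\le\lambda_1$; rewriting the two-regime minimum uniformly as $\psi^{-1}(1\wedge\psi^{-1})\mu$ produces the claimed form, and the $p+1$ comes from bounding each of the $p$ eigenvalue-weighted sub-exponential terms together with the mean-shift correction. Combining this with the coordinate bound across all $n$ steps and merging the two exponential rates at the slower of the two gives
\[
  \prob{\Nqb{\mu}\le n}\le n(\ee d+p+1)\exp\bigl(-C_{\psi}\psi^{-1}(1\wedge\psi^{-1})\mu\bigr).
\]

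The main obstacle will be producing the norm bound with the \emph{polynomial} factor $p+1$, rather than the exponential-in-$p$ factor one would get from a naive $\varepsilon$-net on the unit sphere in $\mathbb{R}^p$: this forces one to exploit the quadratic-form structure $\N{Z}_2^2=g^{\T}Mg$ via a Hanson--Wright-type argument (or a Bernstein argument on the eigenvalue-weighted decomposition), and to carefully track how the exponent degrades from the sub-Gaussian regime $\mu^2/\psi^4$ to the sub-exponential regime $\mu/\psi^2$, so as to land on the uniform expression $\psi^{-1}(1\wedge\psi^{-1})\mu$ claimed in the statement.
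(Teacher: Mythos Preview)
Your union-bound skeleton and the coordinate tail estimate via \cref{itm:vershynin:5.10} match the paper. The divergence is entirely in how $\prob{\N{Z}_2>\lambda_{1\sim p}^{1/2}\mu^{1/2}}$ is handled (conditionally on the previous step). The paper does \emph{not} pass to the quadratic form $g^{\T}Mg$; instead it writes the $(p{+}1)\times(p{+}1)$ Hermitian dilation
\[
W=\begin{bmatrix}0&Z\\ Z^{\T}&0\end{bmatrix}=\sum_{j=1}^d Y_j\,W_j
\]
(each $W_j$ determined by the $j$-th row of $V$, hence measurable with respect to the past) and applies Tropp's matrix master tail bound. The scalar sub-Gaussian MGF bound \cref{itm:vershynin:5.12} transfers to a PSD bound on $\E{e^{\theta Y_jW_j}}$, and the trace of the resulting $(p{+}1)$-dimensional matrix exponential is exactly where the prefactor $p{+}1$ originates. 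Optimising in $\theta$ produces a clean single-regime exponent $C\psi^{-2}\mu$ for the $Z$-part; the $\psi^{-1}(1\wedge\psi^{-1})\mu$ in the statement then comes only from merging with the coordinate part.

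Your Hanson--Wright route, as written, has two gaps. First, the standard Hanson--Wright inequality requires the coordinates of $g=\Sigma^{-1/2}X$ to be \emph{independent}; \cref{asm:simplify:X:subGaussian} only bounds the vector $\psi_2$-norm, and the rotation $\tilde g=Q^{\T}g$ does not manufacture independence, so neither Hanson--Wright nor a Bernstein sum on $\sum_i d_i\tilde g_i^2$ is justified. Second, the algebraic claim that $\min(\mu^2/\psi^4,\mu/\psi^2)$ ``rewrites uniformly as $\psi^{-1}(1\wedge\psi^{-1})\mu$'' is false: the former equals $\psi^{-2}\mu\cdot\min(\psi^{-2}\mu,1)$ (the inner minimum depends on $\mu$), whereas the latter is $\mu\cdot\min(\psi^{-1},\psi^{-2})$. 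What \emph{does} work --- and is in fact simpler than either your plan or the paper's matrix-Laplace argument --- is the pigeonhole you almost state: since $\trace M\le\lambda_{1\sim p}$, the event $\sum_{i\le p} d_i\tilde g_i^2>\lambda_{1\sim p}\mu$ forces $|\tilde g_i|>\mu^{1/2}$ for some $i$, and a union bound over $i\le p$ together with \cref{itm:vershynin:5.10} gives $\ee p\cdot\exp(-c\psi^{-2}\mu)$ directly, with no independence assumption and no two-regime exponent.
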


\begin{proof}
	Since
	\[
		\set*{\Nqb{\mu }\le n}
		\subset \bigcup_{n\le n}\left(\set*{\N{Z^{(n)}}_2\ge \lambda_{1\sim p}^{1/2}\mu^{1/2} }\cup\bigcup_{1\le i\le d}\set*{\abs{e_i^{\T}Y^{(n)}}\ge \lambda_i^{1/2}\mu^{1/2} } \right)
		,
	\]
	we know
\begin{equation}\label{eq:add2pf-1}
		\prob{\Nqb{\mu }\le n}
		\le \sum_{n\le n}\left(\prob{\N{Z^{(n)}}_2\ge \lambda_{1\sim p}^{1/2}\mu^{1/2} }+\sum_{1\le i\le d}\prob{\abs{e_i^{\T}Y^{(n)}}\ge \lambda_i^{1/2}\mu^{1/2} }\right)
		.
\end{equation}
	First,
	\begin{align*}
		\prob{\abs{e_i^{\T}Y^{(n)}}\ge \lambda_i^{1/2}\mu^{1/2} }
		&= \prob{\abs*{\frac{(\Lambda^{1/2}e_i)^{\T}}{\N{\Lambda^{1/2} e_i}_2}\Lambda^{-1/2}Y^{(n)}}\ge \frac{\lambda_i^{1/2}\mu^{1/2} }{\N{\Lambda^{1/2} e_i}_2}}
					   \nonumber \\
					   &\le \exp\left(1-\frac{C_{\psi,i}\frac{\lambda_i\mu }{e_i^{\T}\Lambda e_i}}{\N{\frac{(\Lambda^{1/2}e_i)^{\T}}{\N{\Lambda^{1/2} e_i}_2}\Lambda^{-1/2}Y^{(n)}}_{\psi_2}}\right)
			\qquad\qquad
			\text{by \cref{itm:vershynin:5.10} of \Cref{lm:-cite-lemma-5-5-and-5-10-5-12-vershynin2012introduction-}} \nonumber 	\\
		&\le \exp\left(1-\frac{C_{\psi,i}\lambda_i\mu }{\N{\Lambda^{-1/2}Y^{(n)}}_{\psi_2}\lambda_i}\right)
		=\exp\left(1-C_{\psi,i}\psi^{-1}\mu \right)		, \label{eq:add2pf-2}
	\end{align*}
	where $C_{\psi,i},i=1,\dots,d$ are absolute constants. 
	Next, we claim
	\begin{equation}\label{eq:norm-Z-claim}
		\prob{\N{Z^{(n)}}_2\ge \lambda_{1\sim p}^{1/2}\mu^{1/2} }\le
		(p+1)\exp\left(-C_{\psi,d+1}\psi^{-2}\mu \right)
		.
	\end{equation}
	Together, \cref{eq:add2pf-1} -- \cref{eq:norm-Z-claim} yield
	\begin{align*}
		\prob{\Nqb{\mu }\le n}
		&=
		\sum_{n\le n}\sum_{1\le i\le d} \exp\left(1-C_{\psi,i}\psi^{-1}\mu\right)
		+\sum_{n\le n} (p+1)\exp\left(-C_{\psi,d+1}\psi^{-2}\mu\right)
		\\ &\le n(\ee d+p+1)\exp\left(-C_{\psi}\psi^{-1}(1\wedge\psi^{-1})\mu\right),
	\end{align*}
	where $C_\psi=\min_{1\le i\le d+1}C_{\psi,i}$.
	Finally, use $\prob{\Nqb{\mu }> n}=1-\prob{\Nqb{\mu }\le n}$ to complete the proof.

	It remains to prove the claim \cref{eq:norm-Z-claim}.
	To avoid the cluttered superscripts, we drop the superscript ``$\cdot^{(n-1)}$'' on $V$, and drop the superscript ``$\cdot^{(n)}$'' on $Y,\,Z$.
	Consider
	\[
		W:=\begin{bmatrix}
			0 & Z\\ Z^{\T} & 0
		\end{bmatrix}
		= \begin{bmatrix}
			 & V^{\T}Y\\ Y^{\T}V &
		\end{bmatrix}
		=\sum_{k=1}^{d} Y_k\begin{bmatrix}
			& & & v_{k1} \\
			& & & \vdots \\
			& & & v_{kp} \\
			v_{k1} & \cdots & v_{kp} & 0\\
		\end{bmatrix}
		=: \sum_{k=1}^{d} Y_kW_k,
	\]
	where $v_{ij}$ is the $(i,j)$-entry of $V$.
	By the matrix version of master tail bound \cite[Theorem~3.6]{tropp2012user}, for any $\xi>0$,
	\[
		\prob{\N{Z}_2\ge \xi}
		= \prob{\lambda_{\max}(W)\ge \xi}
		\le \inf_{\theta>0}\ee^{-\theta \xi}\trace\exp\left(\sum_{k=1}^{d}\ln \E{\exp(\theta Y_kW_k)} \right).
	\]
	$Y$ is sub-Gaussian and $\E{Y}=0$, and so is $Y_k$.
	Moreover,
	\[
		\N{Y_k}_{\psi_2}=\N{e_k^{\T}\Lambda^{1/2}}_2\N*{\frac{e_k^{\T}\Lambda^{1/2}}{\N{e_k^{\T}\Lambda^{1/2}}_2}\Lambda^{-1/2}Y}_{\psi_2}
		\le \lambda_k^{1/2}\N{\Lambda^{-1/2}Y}_{\psi_2}
		=\lambda_k^{1/2}\psi.
	\]
	Also, by \cref{itm:vershynin:5.12} of \Cref{lm:-cite-lemma-5-5-and-5-10-5-12-vershynin2012introduction-}, 
	\[
		\E{\exp(\theta W_kY_k)}\le \exp(C_{\psi,d+k}\theta^2W_k\circ W_k\N{Y_k}_{\psi_2}^2)
		\le \exp(c_{\psi,k}\theta^2\lambda_k\psi^2W_k\circ W_k)
		,
	\]
	where $c_{\psi,k}, k=1,\dots,d$ are absolute constants.
	Therefore, writing $[4C_{\psi,d+1}]^{-1}=\max_{1\le k\le d}c_{\psi,k}$
	and $W_\psi:=\sum_{k=1}^{d} \lambda_kW_k\circ W_k$ with the spectral decomposition $W_\psi=V_\psi\Lambda_\psi V_\psi^{\T}$, we have
	\begin{align*}
		\trace\exp\left(\sum_{k=1}^{d}\ln \E{\exp(\theta Y_kW_k)} \right)
		&\le \trace\exp\left(\sum_{k=1}^{d} c_{\psi,k}\theta^2\lambda_k\psi^2W_k\circ W_k\right)
		\\&\le \trace\exp([4C_{\psi,d+1}]^{-1}\theta^2\psi^2W_\psi)
		\\&= \trace\exp([4C_{\psi,d+1}]^{-1}\theta^2\psi^2V_\psi\Lambda_\psi V_\psi^{\T})
		\\&= \trace\left( V_\psi \exp([4C_{\psi,d+1}]^{-1}\theta^2\psi^2\Lambda_\psi) V_\psi^{\T}\right)
		\\&= \trace\exp([4C_{\psi,d+1}]^{-1}\theta^2\psi^2\Lambda_\psi)
		\\&\le (p+1)\exp([4C_{\psi,d+1}]^{-1}\theta^2\psi^2\lambda_{\max}(\Lambda_\psi))
		\\&= (p+1)\exp([4C_{\psi,d+1}]^{-1}\theta^2\psi^2\lambda_{\max}(W_\psi))
		.
	\end{align*}
	Note that
	\[
		W_\psi
		=
		\begin{bmatrix}
			0&\cdots&0&\dps\sum_{k=1}^{d} \lambda_kv_{k1}^2 \\
			\vdots&&\vdots&\vdots \\
			0&\cdots&0&\dps\sum_{k=1}^{d} \lambda_kv_{kp}^2 \\
			\dps\sum_{k=1}^{d} \lambda_kv_{k1}^2 & \cdots &\dps\sum_{k=1}^{d} \lambda_kv_{kp}^2 & 0 \\
		\end{bmatrix}
		=
		\begin{bmatrix}
			0&\cdots&0&e_1^{\T}V^{\T}\Lambda Ve_1 \\
			\vdots&&\vdots&\vdots \\
			0&\cdots&0&e_p^{\T}V^{\T}\Lambda Ve_p \\
			e_1^{\T}V^{\T}\Lambda Ve_1 &\cdots & e_p^{\T}V^{\T}\Lambda Ve_p &0 \\
		\end{bmatrix},
	\]
	and thus
	\begin{align*}
		\lambda_{\max}(W_\psi)=\N*{\begin{bmatrix}
			e_1^{\T}V^{\T}\Lambda Ve_1 \\
			\vdots \\
			e_p^{\T}V^{\T}\Lambda Ve_p \\
		\end{bmatrix}}_2
		\le \sum_{k=1}^{p} e_k^{\T}V^{\T}\Lambda Ve_k
		&= \trace(V^{\T}\Lambda V)
		\\&\le \max_{V^{\T}V=I_p} \trace(V^{\T}\Lambda V)
		= \sum_{k=1}^{p}\lambda_k
		=\lambda_{1\sim p}.
	\end{align*}
	In summary, we have
	\begin{align*}
		\prob{\N{Z}_2\ge \xi}
		&\le (p+1)\inf_{\theta>0}\exp([4C_{\psi,d+1}]^{-1}\theta^2\psi^2\lambda_{1\sim p}-\theta \xi)
		\\&= (p+1)\exp\left(-\frac{C_{\psi,d+1}\xi^2}{\psi^2\lambda_{1\sim p}}\right).
	\end{align*}
	Substituting $\xi=\lambda_{1\sim p}^{1/2}\mu^{1/2} $, we have the claim \cref{eq:norm-Z-claim}.
\end{proof}
\begin{lemma}\label{lm:prop3:ppr1}
	Suppose that \cref{eq:beta-mu-eta_p} holds.
	If $n-1<{\Nqb{\mu }\wedge \Nout{\kappa }}$
	and $T^{(0)}$ is well-defined, then
	\begin{equation*}\label{eq:lm:prop3:ppr1}
		\prob{\eventM_n(\kappa,\mu) }\ge 1- 2d \exp(-C_M\lambda_1^2\nu^{-1}\gamma^{-2}\mu ),
	\end{equation*}
	where $C_M =\frac{C_{D,2,1}^2}{8C_{D,2,2}}$.
\end{lemma}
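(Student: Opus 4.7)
The plan is to control $\N{T^{(n)}-T_*^{(n)}}_2$ via the decomposition $T^{(n)}-T_*^{(n)} = T_D^{(n)} + T_R^{(n)}$ from \cref{ssec:quasi-power-iteration-process}, bounding the bias piece $T_R^{(n)}$ deterministically on the stopping event and the martingale piece $T_D^{(n)}$ via a matrix martingale concentration inequality. Each piece is to be bounded by $\frac{1}{4}\upsilon(1+\kappa^2)\mu^{3/2}\eta_n^{1/2}\gamma^{1/2}$, and the triangle inequality then recovers the full threshold defining $\eventM_n(\kappa,\mu)$.

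\textbf{Deterministic control of $T_R^{(n)}$.} On $\set{n-1<\Nqb{\mu}\wedge\Nout{\kappa}}$ we have $\N{T^{(s-1)}}_2\le\kappa$ for every $s\le n$, so \cref{itm:lm:diff-T:dT} of \Cref{lm:diff-T} gives $\N{R_E^{(s-1)}}_2 \le 2\lambda_1\lambda_{1\sim p}\mu\eta_s^2\kappa(1+\kappa^2)^{1/2}$. Combining this with $\N{\opL_r}_{\UI}=1-\eta_r\gamma$ and the standing hypothesis $F_{D,2,1}^{(1,n)}\le C_{D,2,1}\gamma^{-1}\eta_n$ yields
\[
\N{T_R^{(n)}}_2 \le 2\lambda_1\lambda_{1\sim p}\mu\kappa(1+\kappa^2)^{1/2} F_{D,2,1}^{(1,n)} \le 2C_{D,2,1}\lambda_1\lambda_{1\sim p}\gamma^{-1}\mu\kappa(1+\kappa^2)^{1/2}\eta_n,
\]
which, after an elementary manipulation exploiting \cref{eq:beta-mu-eta_p} to force $\eta_n\gamma\lesssim 1/\mu$, is bounded by $\frac{1}{4}\upsilon(1+\kappa^2)\mu^{3/2}\eta_n^{1/2}\gamma^{1/2}$.

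\textbf{Martingale structure of $T_D^{(n)}$.} Because each $\opL_r$ acts by Hadamard multiplication with the deterministic $(d-p)\times p$ matrix $L_r$, we may write $T_D^{(n)}=\sum_{s=1}^n A_s$ with $A_s := F_s\circ D^{(s)}$ and $F_s := \prod_{r=s+1}^n L_r$ satisfying $\abs{(F_s)_{ij}} \le F_*^{(s+1,n)}$. The sequence $\{A_s\}$ is then a matrix martingale-difference sequence with respect to $\{\fil_s\}$. On the good event, \cref{itm:lm:diff-T:normdT} of \Cref{lm:diff-T} furnishes an a.s.\ bound $\N{D^{(s)}}_2 \le R_s$ of order $\mu\eta_s\lambda_{1\sim p}(1+\kappa^2)$, while \cref{itm:lm:diff-T:estimate} controls the conditional Hadamard variance $\varc{D^{(s)}\given\fil_{s-1}} = \eta_s^2 H_\circ + R_\circ^{(s-1)}$ with $H_\circ\le 16\psi^4 H$ and with $R_\circ^{(s-1)}$ carrying strictly higher powers of $\eta_s$. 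Using the rank-one factorization $H=\ul\lambda\,\ol\lambda^{\T}$ one transforms this entrywise bound into operator-norm bounds on the matrix-valued variances $\N{\sum_s \E{A_sA_s^{\T}\given\fil_{s-1}}}_2$ and $\N{\sum_s \E{A_s^{\T}A_s\given\fil_{s-1}}}_2$, and combining with $F_{D,2,2}^{(1,n)}\le C_{D,2,2}\gamma^{-1}\eta_n$ produces a matrix variance
\[
V \le 16\, C_{D,2,2}\,\nu\lambda_{1\sim p}^2(1+\kappa^2)^2\mu^2\gamma^{-1}\eta_n.
\]

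\textbf{Matrix Freedman and final substitution.} Applying the rectangular matrix Azuma/Freedman inequality (dilation dimension $(d-p)+p=d$, together with a two-sided deviation) to $\sum_s A_s$ at threshold $t=\tfrac{1}{4}\upsilon(1+\kappa^2)\mu^{3/2}\eta_n^{1/2}\gamma^{1/2}$, and checking via \cref{eq:beta-mu-eta_p} that the Bernstein correction $R_s t/3$ is dominated by $V$ so the sub-Gaussian regime prevails, gives
\[
\prob{\N{T_D^{(n)}}_2 \ge t} \le 2d\exp\!\Big(-\tfrac{t^2}{8V}\Big).
\]
Substituting the expressions for $t$ and $V$, the exponent collapses to $C_M\lambda_1^2\nu^{-1}\gamma^{-2}\mu$ with $C_M = C_{D,2,1}^2/(8C_{D,2,2})$, which is the required bound.

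\textbf{Main obstacle.} The delicate step is the matrix-variance computation: \Cref{lm:diff-T}(d) supplies only \emph{entrywise} variances of $D^{(s)}$, whereas matrix Freedman requires operator-norm bounds on the matrix variances of $A_s=F_s\circ D^{(s)}$. The translation uses the rank-one factorization $H=\ul\lambda\,\ol\lambda^{\T}$ together with the Schur inequality \cref{eq:schur-ineq-UI} and Cauchy--Schwarz to control the off-diagonal cross-correlations, while the perturbation $R_\circ^{(s-1)}$ must be shown to contribute only lower-order terms absorbed by the factor $(1+\kappa^2)^2$ in $V$. Tracking the absolute constants through the two $F_{D,i,j}^{(1,n)}$ bounds so that the ratio of the leading $t^2$ against $V$ yields exactly $C_M=C_{D,2,1}^2/(8C_{D,2,2})$ is the bookkeeping that closes the proof.
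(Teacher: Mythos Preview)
Your overall skeleton---split $T^{(n)}-T_*^{(n)}=T_D^{(n)}+T_R^{(n)}$, bound $T_R^{(n)}$ deterministically via \cref{itm:lm:diff-T:dT}, and control $T_D^{(n)}$ by a matrix martingale inequality---is exactly the paper's. The divergence, and the gap, is in the martingale step.

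The paper never uses \cref{itm:lm:diff-T:estimate} here at all. It applies the matrix \emph{Azuma} inequality, for which one only needs an almost-sure bound on the increments: from \cref{itm:lm:diff-T:normdT} one has
\[
\N{D^{(s)}}_2\le 2\N{T^{(s)}-T^{(s-1)}}_2\le 4\mu\eta_s\bigl[\nu^{1/2}\lambda_{1\sim p}(1+\kappa^2)+\nu_1\lambda_{1\sim p}\kappa\bigr],
\]
so that $\sigma^2=\sum_s\N{(\prod_r\opL_r)D^{(s)}}_2^2\le 32C_{D,2,2}\bigl[\nu\lambda_{1\sim p}^2(1+\kappa^2)^2+\nu_1^2\lambda_{1\sim p}^2\kappa^2\bigr]\mu^2\eta_n\gamma^{-1}$. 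Plugging $\alpha=2C_{D,2,1}\lambda_1\lambda_{1\sim p}(1+\kappa^2)\mu^{3/2}\eta_n^{1/2}\gamma^{-3/2}$ into $2d\exp(-\alpha^2/2\sigma^2)$ immediately yields the exponent $C_M\lambda_1^2\nu^{-1}\gamma^{-2}\mu$ with $C_M=C_{D,2,1}^2/(8C_{D,2,2})$. No entrywise-to-operator-norm translation is needed.

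Your route via Freedman and \cref{itm:lm:diff-T:estimate} has a concrete mismatch: the conditional Hadamard variance $H_\circ\le 16\psi^4H$ carries a factor $\psi^4$ and \emph{no} factor $\mu^2$, yet your stated variance bound $V\le 16\,C_{D,2,2}\,\nu\lambda_{1\sim p}^2(1+\kappa^2)^2\mu^2\gamma^{-1}\eta_n$ contains $\mu^2$ and no $\psi$. These two things cannot both be true; the displayed $V$ is in fact what one gets from squaring the deterministic bound on $\N{D^{(s)}}_2$, not from $H_\circ$. If you insisted on using $H_\circ$, the resulting exponent would depend on $\psi$, contradicting the $\psi$-free constant $C_M$ in the lemma, and you would still have to carry out the ``main obstacle'' you flag---converting Hadamard variances into operator-norm bounds on $\sum_s\E{A_sA_s^{\T}\given\fil_{s-1}}$---which is not needed and for which the sketch (rank-one factorization of $H$ plus Schur) does not obviously control the cross terms with the stated constants. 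The fix is simply to drop the Freedman/variance detour and use Azuma with the almost-sure increment bound from \cref{itm:lm:diff-T:normdT}, as the paper does.
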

\begin{proof}
	For any $n-1<{\Nqb{\mu }\wedge \Nout{\kappa }}$,
	$V^{(n-1)}\in\sphere(\kappa  )$ and thus
	$\N{T^{(n-1)}}_2\le \kappa $  by \cref{eq:sphereK-normT}.
	Therefore, by \cref{itm:lm:diff-T:normdT} of \Cref{lm:diff-T}, we have
	\begin{equation*}\label{eq:norm-D}
		\begin{aligned}
			\N{D^{(n)}}_2
			&=\N*{{T^{(n)}-T^{(n-1)}-\E{T^{(n)}-T^{(n-1)}\given\fil_n}}}_2
			\\ &\le \N{T^{(n)}-T^{(n-1)}}_2 +\E{\N{T^{(n)}-T^{(n-1)}}_2 \given\fil_n}
			\\ &\le 4\mu\eta_n[\nu^{1/2}\lambda_{1\sim p}(1+\tau^2)+\nu_1\lambda_{1\sim p}\tau]
			.
		\end{aligned}
	\end{equation*}
	For any $n-1<{\Nqb{\mu }\wedge \Nout{\kappa}}$,
	\begin{align*}
		\N{T_R^{(n)}}_2
		&\le\sum_{s=1}^{n}\left(\prod_{r=s+1}^{n} \N{\opL_r}_2\right)\N{R_E^{(s-1)}}_2
		\\&\le\sum_{s=1}^{n}2\lambda_1\lambda_{1\sim p}\kappa(1+\kappa^2)^{1/2}\mu\eta_{s-1}^2 \prod_{r=s+1}^{n} \N{\opL_r}_2
		\\ &\le 2\lambda_1\lambda_{1\sim p}\kappa(1+\kappa^2)^{1/2}F_{D,2,1}^{(1,n)}
		\\ &\le
		2C_{D,2,1}\lambda_1\lambda_{1\sim p}\kappa(1+\kappa^2)^{1/2}\mu\eta_n\gamma^{-1}
		\\ &\le 
		2C_{D,2,1}\lambda_1\lambda_{1\sim p}(1+\kappa^2)\mu^{3/2}\eta_n^{1/2}\gamma^{-3/2}
\qquad\qquad\text{by $\eta_n\gamma\le 1$}.
	\end{align*}
	Similarly,
	\begin{align*}
	\N{T_D^{(n)}}_2
	&\le \sum_{s=1}^{n}\left(\prod_{r=s+1}^{n} \N{\opL_r}_2\right)\N{D^{(s)}}_2
	\\&\le \sum_{s=1}^{n} 4[\nu^{1/2}\lambda_{1\sim p}(1+\kappa^2)+\nu_1\lambda_{1\sim p}\kappa]\mu\eta_s\prod_{r=s+1}^{n} \N{\opL_r}_2
	\\&\le 4[\nu^{1/2}\lambda_{1\sim p}(1+\kappa^2)+\nu_1\lambda_{1\sim p}\kappa]\mu F_{D,1,1}^{(1,n)}
	\\&\le 4C_{D,1,1}[\nu^{1/2}\lambda_{1\sim p}(1+\kappa^2)+\nu_1\lambda_{1\sim p}\kappa]\mu\gamma^{-1}
		.
    \end{align*}
	Also,
	$
		\N{T_*^{(n)}}_2
		\le F_*^{(1,n)}\N{T^{(0)}}_2
		\le \N{T^{(0)}}_2.
	$
	For fixed $n>0$ and $\eta_n>0$,
	\[
		\set*{
			M_0^{(n)}:= \prod_{r=1}^{n} \opL_r T^{(0)},M_t^{(n)}:=\prod_{r=1}^{n} \opL_r T^{(0)}+\sum_{s=1}^{{t\wedge(\Nout{\kappa }-1)}}\prod_{r=s+1}^{n}\opL_r D^{(s)}
			 :  t=1,\dots,n
		}
	\]
	forms a martingale with respect to $\fil_t$,
	because
	\[
		\E{\N{M_t^{(n)}}_2 }
		\le \N{T_*^{(n)}}_2 +\N{T_D^{(n)}}_2 <+\infty,
	\]
	and
	\[
		\E{M_{t+1}^{(n)}-M_{t}^{(n)}\given \fil_t}
		=\E{ \prod_{r=t+2}^{n}\opL_r D^{(t+1)}\given \fil_t}
		= \prod_{r=t+2}^{n}\opL_r \E{D^{(t+1)}\given \fil_t}=0.
	\]
	Use the matrix version of Azuma's inequality \cite[Section~7.2]{tropp2012user} to get,
	for any $\alpha>0$,
	\[
		\prob{\N{M_n^{(n)}-M_0^{(n)}}_2\ge \alpha}\le 2d\exp(-\frac{\alpha^2}{2\sigma^2}),
	\]
	where
	\begin{align*}
		\sigma^2&=\sum_{s=1}^{{n\wedge (\Nout{\kappa }-1)}}\N*{\left(\prod_{r=s+1}^{n}\opL_r\right) D^{(s)}}_2^2
		\\&\le\sum_{s=1}^{{n\wedge (\Nout{\kappa }-1)}}\left[4[\nu^{1/2}\lambda_{1\sim p}(1+\kappa^2)+\nu_1\lambda_{1\sim p}\kappa]\mu\eta_s\right]^2 \prod_{r=s+1}^{n}\N*{\opL_r}_2^2
		\\&\le 32\left[\nu\lambda_{1\sim p}^2(1+\kappa^2)^2+\nu_1^2\lambda_{1\sim p}^2\kappa^2\right]\mu^2F_{D,2,2}^{(1,n)} 
		\\&\le 
		32C_{D,2,2}\left[\nu\lambda_{1\sim p}^2(1+\kappa^2)^2+\nu_1^2\lambda_{1\sim p}^2\kappa^2\right]\mu^2\eta_n\gamma^{-1}
.
	\end{align*}
Thus, noticing $T_D^{(n)}=M_n^{(n)}-M_0^{(n)}$ for $n\le \Nout{\kappa }-1$, we have
	\[
		\prob{\N{T_D^{(n)}}_2 \ge \alpha}\le 2d\exp\left(-\frac{\alpha^2}{
		32C_{D,2,2}\left[\nu\lambda_{1\sim p}^2(1+\kappa^2)^2+\nu_1^2\lambda_{1\sim p}^2\kappa^2\right]\mu^2\eta_n\gamma^{-1}
} \right).
	\]
	Choosing $\alpha=
		2C_{D,2,1}\lambda_1\lambda_{1\sim p}(1+\kappa^2)\mu^{3/2}\eta_n^{1/2}\gamma^{-3/2}
	$ and  noticing
	$T^{(n)}-T_*^{(n)}=T_D^{(n)}+T_R^{(n)}$
	and
	$\N{T_R^{(n)}}_2 \le
		2C_{D,2,1}\lambda_1\lambda_{1\sim p}(1+\kappa^2)\mu^{3/2}\eta_n^{1/2}\gamma^{-3/2}
	$
	, we have
	\begin{align*}
		\prob{\eventM_n(\chi)^{\rmc}}
		&= \prob{\N{T^{(n)}-T_*^{(n)}}_2  \ge
		4C_{D,2,1}\lambda_1\lambda_{1\sim p}(1+\kappa^2)\mu^{3/2}\eta_n^{1/2}\gamma^{-3/2}
		} 
		\\ &\le\prob{\N{T_D^{(n)}}_2  \ge
		2C_{D,2,1}\lambda_1\lambda_{1\sim p}(1+\kappa^2)\mu^{3/2}\eta_n^{1/2}\gamma^{-3/2}
	}
			\\ &\le 2d\exp\left(-\frac{[
		2C_{D,2,1}\lambda_1\lambda_{1\sim p}(1+\kappa^2)\mu^{3/2}\eta_n^{1/2}\gamma^{-3/2}
			]^2}{
		32C_{D,2,2}\left[\nu\lambda_{1\sim p}^2(1+\kappa^2)^2+\nu_1^2\lambda_{1\sim p}^2\kappa^2\right]\mu^2\eta_n\gamma^{-1}
	}\right)
	\\ &\le 2d\exp\left(-\frac{C_{D,2,1}^2\lambda_1^2\mu\gamma^{-2}}{8C_{D,2,2}\nu}\right)
	\\ &= 2d\exp(-C_M\lambda_1^2\nu^{-1}\gamma^{-2}\mu ),
		\end{align*}
	where $C_M =\frac{C_{D,2,1}^2}{8C_{D,2,2}}$.
\end{proof}
\subsection{Proof of \texorpdfstring{\Cref{lm:prop5:ppr1}}{Lemma~3.2}}\label{ssec:proof-of-lm:prop5:ppr1}

	First consider the case that \cref{eq:beta-gamma-kappa-mu} holds.
	We know $\eventT_r(\kappa)=\set*{\N{T^{(r)}}_2\le \kappa}$.
	If $n\ge\Nout{\kappa }$, then there exists some $r\le n$, such that $V^{(r)}\notin\sphere(\kappa )$, i.e.,
	$\N{T^{(r)}}_2> \kappa$ by \cref{eq:sphereK-normT}.
	Thus,
	\[
		\set{n\ge\Nout{\kappa }}\subset\bigcup_{r\le n}\set*{\N{T^{(r)}}_2>\kappa}\subset\bigcup_{r\le n}\eventT_r(\kappa)^{\rmc}.
	\]
	On the other hand, 
	for 
	$V^{(0)}\in\sphere(\kappa /2)$, 
$\eventM_r(\kappa,\mu)\subset\eventT_r(\kappa)$ because
\begin{equation*}\label{eq:Mn-subset-Tn}
	\begin{aligned}[b]
				\N{T^{(r)}}_2
				&\le\N{T^{(r)}-T_*^{(r)}}_2 +F_*^{(1,r)}\N{T^{(0)}}_2\\
				&\le\frac{1}{2}\upsilon(1+\kappa^2)\mu^{3/2}\eta_r^{1/2}\gamma^{1/2}+1\cdot\kappa/2
		\le \kappa.
	\end{aligned}
\end{equation*}
	Therefore,
	\[
		\bigcap_{r\le n}\eventM_r(\kappa,\mu)
		\subset \bigcap_{r\le n}\eventT_r(\kappa)
		\subset \set{n\le\Nout{\kappa }-1},
	\]
	and so
	\begin{equation}\label{eq:M_nH_n}
				\bigcap_{r\le {n\wedge (\Nout{\kappa }-1})}\eventM_r(\kappa)
					=\bigcap_{r\le n}\eventM_r(\kappa) 
					\subset\bigcap_{r\le n}\eventT_r(\kappa) 
		            =:\eventH_n.
	\end{equation}
	For $r\ge N_1:=N_{(\ln\varepsilon -\ln\kappa)/\ln(\eta_*\gamma)}^{(1)}$ and
	$V^{(0)}\in\sphere(\kappa/2)$,
	$\eventM_r(\kappa,\mu)\subset\eventT_r(\varepsilon )$
	because
	\begin{equation*}\label{eq:MinT}
		\begin{aligned}[b]
	\N{T^{(r)}}_2
	&\le\N{T^{(r)}-T_*^{(r)}}_2 +F_*^{(1,r)}\N{T^{(0)}}_2
	\\ &\le\frac{1}{2}\upsilon(1+\kappa^2)\mu^{3/2}\eta_r^{1/2}\gamma^{1/2}
	+(\eta_*\gamma)^{(\ln\varepsilon -\ln\kappa)/\ln(\eta_*\gamma)}\kappa/2
	\\ &\le\frac{1}{2}\kappa\rho_r^{1/2}
	+\varepsilon /2
	\qquad\qquad\text{{{by $\upsilon(1+\kappa^2)\mu^{3/2}\eta_o^{1/2}\gamma^{1/2}\le\kappa$}}}
	\\&\le\varepsilon ,
	\qquad\qquad\qquad\qquad\text{{{by $\kappa\rho_r^{1/2}\le\varepsilon $}}}
		.
		\end{aligned}
	\end{equation*}
	Therefore, 
	\[
		\bigcap_{r\le n}\eventM_{r}(\kappa)\subset\eventM_{N_1}(\kappa)\subset\set*{\Nin{\varepsilon }\le N_1}=:\eventH_o.
	\]
	Since
	\[
		\bigcap_{r\le{n\wedge (\Nin{\varepsilon }-1)}}\eventM_r(\kappa)
		\cap \eventH_o^{\rmc}
		\subset \bigcap_{r\le n}\eventM_r(\kappa)
		\subset \eventH_o,
	\]
	we have
	\[
		\bigcap_{r\le {n\wedge(\Nin{\varepsilon }-1)}}\eventM_r(\kappa)
		\subset \eventH_o.
	\]
	Together with \cref{eq:M_nH_n},
	\[
		\bigcap_{r\le {n\wedge(\Nin{\varepsilon }-1)\wedge(\Nout{\kappa}-1)}}\eventM_r(\kappa)
		\subset \eventH_o\cap\eventH_{n}.
	\]
	By \Cref{lm:prop3:ppr1}, we get
	\begin{align*}
		\MoveEqLeft[4]\prob{\bigcup_{r\le {n\wedge (\Nin{\varepsilon }-1)\wedge(\Nout{\kappa}-1)}}\eventM_r(1)^{\rmc}\cap\eventQ_{n}(\mu)}
	\\ &\le \left({n\wedge (\Nin{\varepsilon }-1)\wedge(\Nout{\kappa}-1)}\right) 2d\exp(-C_M\gamma^{-2}\lambda_{1\sim p}^{2} \mu^2)
		\\ &= 2nd\exp(-C_M\gamma^{-2}\lambda_{1\sim p}^{2} \mu^2)
		.
	\end{align*}
	Thus, 
		\begin{align*}
		\prob{(\eventH_{n}\cap\eventH_o)^{\rmc}}
		&\le\prob{(\eventH_{n}\cap\eventH_o\cap\eventQ_{n}(\mu))^{\rmc}}
		\\&=\prob{(\eventH_{n}\cap\eventH_o)^{\rmc}\cup\eventQ_{n}(\mu)^{\rmc}}
				\\ &=\prob{(\eventH_{n}\cap\eventH_o)^{\rmc}\cap\eventQ_{n}(\mu)}
				+\prob{\eventQ_{n}(\mu)^{\rmc}}
				\\ &\le\prob{\bigcup_{r\le {n\wedge (\Nin{\varepsilon }-1)\wedge(\Nout{\kappa}-1)}}\eventM_r(\kappa)^{\rmc}\cap\eventQ_{n}(\mu)}
				+\prob{\eventQ_{n}(\mu)^{\rmc}}
				\\ &\le 2nd\exp(-C_M  \lambda_1^2\nu^{-1}\gamma^{-2}\mu)
				+n(\ee d+p+1)\exp\left(-C_{\psi}\psi^{-1}(1\wedge\psi^{-1})\mu\right)
				,
		\end{align*}
		where $\prob{\eventQ_{n}(\mu)^{\rmc}}$ is given by \Cref{lm:quasi-bounded}.

		\subsection{Proof of \texorpdfstring{\Cref{lm:prop4:ppr1}}{Lemma~3.3}}\label{ssec:proof-of-lm:prop4:ppr1}
	Choose 
	\[
		\eventH=\wtd\eventH_n\cap\eventQ_n=\bigcap_{r\in[1,N_1-1]}\eventT_r(2)\cap\bigcap_{r\in [N_1,n]} \eventT_r(2\varepsilon)\cap\eventQ_n(\mu).
	\]
	By \Cref{lm:prop5:ppr1} with $\kappa=2, \mu=\left(\frac{\psi\vee\psi^2}{C_\psi}\vee\frac{\nu_1^2\gamma^2}{C_M\lambda_1^2}\right)\ln\frac{(\ee+3)nd}{\delta_1}$,
	\[
		\prob{\eventH^{\rmc}}
		\le 2nd\exp(-C_M  \lambda_1^2\nu^{-1}\gamma^{-2}\mu)
		+n(\ee d+p+1)\exp\left(-C_{\psi}\psi^{-1}(1\wedge\psi^{-1})\mu\right)
		\le \delta_1
		.
	\]
	In the following, we will work on the expectation.

	Note that 
	\begin{align*}
		\wtd\eventH_n
		&=\bigcap_{r\in[1,N_1-1]}\eventT_r(2)\cap\bigcap_{r\in [N_1,n]} \eventT_r(2\varepsilon)
		\subset\bigcap_{r\in[1,n]}\set*{\ind{\eventT_{r-1}(2)}D^{(r)}=D^{(r)}}
	\end{align*}
	we have for $n\ge N_1$,
	\begin{align*}
		T^{(n)}\ind{\wtd\eventH_n\cap\eventQ_n}
		&= \left( \prod_{r=1}^{n} \opL_r\right) T^{(0)}\ind{\eventQ_n}
		+\sum_{s=1}^{n}\left(\prod_{r=s+1}^{n} \opL_r\right) D^{(s)}\ind{\eventT_{s-1}(2)\cap\eventQ_n}
		+\sum_{s=1}^{n} \left(\prod_{r=s+1}^{n} R_E^{(s-1)} \right)\ind{\eventQ_n}
		\\ &=: \wtd T_*^{(n)}+\wtd T_{D}^{(n)}+\wtd T_R^{(n)}.
	\end{align*}
	Then, 
	\begin{align*}
		\MoveEqLeft[3]\E{T^{(n)}\circ T^{(n)}\overevent \wtd\eventH_n\cap\eventQ_n}
	\\	&= \E{T^{(n)}\circ T^{(n)}\ind{\wtd\eventH_n\cap\eventQ_n}}		\\
	&=\E{\wtd T_*^{(n)}\circ \wtd T_*^{(n)}}+2\E{\wtd T_*^{(n)}\circ \wtd T_{D}^{(n)}}+2\E{\wtd T_*^{(n)}\circ \wtd T_R^{(n)}}+\E{\wtd T_R^{(n)}\circ \wtd T_R^{(n)}}			\\
        &\qquad+\E{\wtd T_{D}^{(n)}\circ \wtd T_{D}^{(n)}}	
		+2\E{\wtd T_{D}^{(n)}\circ \wtd T_R^{(n)}} 	\\
		&\le\E{\wtd T_*^{(n)}\circ \wtd T_*^{(n)}}+2\E{\wtd T_*^{(n)}\circ \wtd T_{D}^{(n)}} 
		+2\E{\wtd T_*^{(n)}\circ \wtd T_R^{(n)}}
		+2\E{\wtd T_R^{(n)}\circ \wtd T_R^{(n)}} 
		+2\E{\wtd T_{D}^{(n)}\circ \wtd T_{D}^{(n)}}.
	\end{align*}
	In the following, we estimate each summand above for $n\in[N_2,K]$. 
	\begin{enumerate}
		\item
			$\dps\E{\wtd T_*^{(n)}\circ \wtd T_*^{(n)}}=\left(\prod_{r=1}^{n} \opL_r^{2}\right) T^{(0)}\circ T^{(0)}$.
		\item
			$\dps\E{\wtd T_*^{(n)}\circ \wtd T_{D}^{(n)}}=\left(\prod_{r=1}^{n} \opL_r\right) \sum_{s=1}^{n}\left(\prod_{r=s+1}^{n} \opL_r\right)  T^{(0)}\circ\E{D^{(s)}\ind{\eventT_{s-1}(2)}\ind{\eventQ_n}}=0$, because
			$\eventT_{s-1}(2)\subset\fil_{s-1}$ and so
			\begin{align*}
				\E{D^{(s)}\ind{\eventT_{s-1}(2)}\ind{\eventQ_n}}
				&=\prob{\eventT_{s-1}(2)}\E{D^{(s)}\ind{\eventQ_n}\given\eventT_{s-1}(2)}
				\\&=\prob{\eventT_{s-1}(2)}\E{\E{D^{(s)}\ind{\eventQ_n}\given\fil_{s-1}}\given\eventT_{s-1}(2)}
				=0.
			\end{align*}
		\item
			$\dps\E{\wtd T_*^{(n)}\circ \wtd T_R^{(n)}}=\left(\prod_{r=1}^{n} \opL_r\right) \sum_{s=1}^{n} \left(\prod_{r=s+1}^{n} \opL_r\right) T^{(0)}\circ\E{R_E^{(s-1)}\ind{\eventQ_n}}$.
			Recall \cref{eq:schur-ineq-UI}. By \cref{itm:lm:diff-T:dT} of \Cref{lm:diff-T}, we have
			\begin{align*}
				\N{\E{\wtd T_*^{(n)}\circ \wtd T_R^{(n)}}}_2
				&\le F_*^{(1,n)} F_{D,2,1}^{(1,n)} 2\lambda_1\lambda_{1\sim p}\mu\kappa(1+\kappa^2)^{1/2}
				\\&\le \varepsilon C_{D,2,1}\eta_n\gamma^{-1}\lambda_1\lambda_{1\sim p}\kappa(1+\kappa^2)^{1/2}\mu
				\qquad\qquad\text{by $n\ge N_1 $}
				\\&=  \frac{1}{8}\upsilon \kappa(1+\kappa^2)^{1/2}\mu\eta_n\gamma\varepsilon
				\qquad\qquad\qquad\qquad\qquad\text{by $\upsilon=8C_{D,2,1}\lambda_1\lambda_{1\sim p}\gamma^{-2}$}
				\\&\le  \frac{1}{8} \kappa^2(1+\kappa^2)^{-1/2}\mu^{-1/2}\rho_n\eta_o^{1/2}\gamma^{1/2}\varepsilon
				\qquad\qquad\text{by $\upsilon(1+\kappa^2)\mu^{3/2}\eta_o^{1/2}\gamma^{1/2}\le\kappa$}
				\\&\le  \frac{1}{2\sqrt{5}} \mu^{-1/2}\rho_n\eta_o^{1/2}\gamma^{1/2}\varepsilon
				\qquad\qquad\qquad\qquad\text{by $\kappa=2$}
				.
			\end{align*}
		\item
			$\dps\E{\wtd T_R^{(n)}\circ \wtd T_R^{(n)}}=\sum_{s=1}^{n}\left(\prod_{r=s+1}^{n} \opL_r^2\right) \E{R_E^{(s-1)}\ind{\eventQ_n}\circ R_E^{(s-1)}\ind{\eventQ_n}}$.
			Also, by \cref{eq:schur-ineq-UI},
			\begin{align*}
				\N{\E{\wtd T_R^{(n)}\circ \wtd T_R^{(n)}}}_2
				&\le F_{D,4,2}^{(1,n)}[ 2\lambda_1\lambda_{1\sim p}\mu\kappa(1+\kappa^2)^{1/2}]^2
				\\&\le C_{D,4,2}\eta_n^3\gamma^{-1}4\lambda_1^2\lambda_{1\sim p}^2\mu^2\kappa^2(1+\kappa^2)
				\\&= \frac{C_{D,4,2}}{16C_{D,2,1}^2}\upsilon^2\kappa^2(1+\kappa^2)\mu^2\eta_n^3\gamma
				\\&\le \frac{C_{D,4,2}}{16C_{D,2,1}^2}\kappa^4(1+\kappa^2)^{-1}\mu^{-1}\rho_n\eta_n^2
				\\&\le \frac{C_{D,4,2}}{5C_{D,2,1}^2}\mu^{-1}\rho_n\eta_n^2
				.
			\end{align*}
		\item For $\dps\E{\wtd T_{D}^{(n)}\circ \wtd T_{D}^{(n)}}$, we have
            \begin{align*}
				\dps\E{\wtd T_{D}^{(n)}\circ \wtd T_{D}^{(n)}}&=\sum_{s=1}^{n}\left(\prod_{r=s+1}^{n} \opL_r^2\right) \E{D^{(s)}\ind{\eventQ_n}\ind{\eventT_{s-1}(2)}\circ D^{(s)}\ind{\eventQ_n}\ind{\eventT_{s-1}(2)}} \\
				&=\sum_{s=1}^{n}\eta_s^2\left(\prod_{r=s+1}^{n} \opL_r^2\right)H_{\circ} + \sum_{s=1}^{n}\left(\prod_{r=s+1}^{n} \opL_r^2\right)\E{R_{\circ}^{(s)}\ind{\eventT_{s-1}(2)}},
			\end{align*}
			because for $s\ne s'$,
			\begin{align*}
				\MoveEqLeft[4] \E{D^{(s)}\ind{\eventQ_n}\ind{\eventT_{s-1}(2)}\circ D^{(s')}\ind{\eventQ_n}\ind{\eventT_{s'-1}(2)}}
				=\E{D^{(s)}\circ D^{(s')}\ind{\eventQ_n}\ind{\eventT_{s-1}(2)}\ind{\eventT_{s'-1}(2)}} \\
				&=\prob{\eventT_{s-1}(2)\cap\eventT_{s'-1}(2)}\E{D^{(s)}\circ D^{(s')}\ind{\eventQ_n}\given\eventT_{s-1}(2)\cap\eventT_{s'-1}(2)} \\
				&=\prob{\eventT_{s-1}(2)\cap\eventT_{s'-1}(2)}\E{\E{D^{({s\vee s'})}\ind{\eventQ_n}\given\fil_{{s\vee s'}-1}}
				\circ D^{({s\wedge s'})}\given\eventT_{s-1}(2)\cap\eventT_{s'-1}(2)} \\
                &=0,
                \end{align*}
and
			\begin{align*}
				\E{D^{(s)}\ind{\eventQ_n}\ind{\eventT_{s-1}(2)}\circ D^{(s)}\ind{\eventQ_n}\ind{\eventT_{s-1}(2)}}
				&=\E{D^{(s)}\circ D^{(s')}\ind{\eventQ_n}\ind{\eventT_{s-1}(2)}}
				\\ &=\prob{\eventT_{s-1}(2)}\E{\E{D^{(s)}\circ D^{(s)}\ind{\eventQ_n}\given\fil_{s-1}}\given\eventT_{s-1}(2)}
				\\ &\le\eta_s^2H_{\circ}+\E{R_{\circ}^{(s)}\ind{\eventT_{s-1}(2)}}.
			\end{align*}
			On the event $\eventT_{s-1}(2)$, by $\lambda_{1\sim p}\mu\eta_o(1+\eta_o)<1$, we have
			\begin{align*}
				\N{R_{\circ}^{(s)}\ind{\eventT_{s-1}(2)}}_2
				&\le
				2\nu_1\nu^{1/2}\lambda_{1\sim p}^2\mu^2\eta_s^2\tau_{s-1} \Big(1+\left[1+\nu_1\nu^{-1/2}\right]\tau_{s-1} +\tau_{s-1} ^2+\frac{1}{2}\tau_{s-1} ^3\Big)
				\\ &\qquad\qquad + 8\nu\lambda_{1\sim p}^3\mu^3\eta_s^3(1+\tau_{s-1}^2)^{1/2} \left[1+\tau_{s-1}^2+\nu_1\nu^{-1/2}\tau_{s-1}\right]^2
\\				&\le
8\nu_1\nu^{1/2}\lambda_{1\sim p}^2\mu^2\eta_s^2 \left[5+\nu_1\nu^{-1/2}\right]
+ 8\sqrt{5}\nu\lambda_{1\sim p}^3\mu^3\eta_s^3 \left[5+2\nu_1\nu^{-1/2}\right]^2
\\				&\le
48\nu_1^2\lambda_{1\sim p}^2\mu^2\eta_s^2 + 392\sqrt{5}\nu_1^2\lambda_{1\sim p}^3\mu^3\eta_s^3 
				.
			\end{align*}
			Thus,
			\begin{align*}
				\MoveEqLeft[4]	\N*{\sum_{s=1}^{n}\left(\prod_{r=s+1}^{n} \opL_r^2\right)\E{R_{\circ}^{(s)}\ind{\eventT_{s-1}(2)}}}_2
				\\ &\le\sum_{s=1}^{n}\left(\prod_{r=s+1}^{n} \N{\opL_r}^2\right) 
				\left(48\nu_1^2\lambda_{1\sim p}^2\mu^2\eta_s^2 + 392\sqrt{5}\nu_1^2\lambda_{1\sim p}^3\mu^3\eta_s^3 \right)
				\\ &\le
				F_{D,2,2}^{(1,n)}48\nu_1^2\lambda_{1\sim p}^2\mu^2 
				+ F_{D,3,2}^{(1,n)}392\sqrt{5}\nu_1^2\lambda_{1\sim p}^3\mu^3
				\\ &\le
				48C_{D,2,2}\nu_1^2\lambda_{1\sim p}^2\mu^2  \eta_n\gamma^{-1}
				+ 392\sqrt{5}C_{D,3,2}\nu_1^2\lambda_{1\sim p}^3\mu^3 \eta_n^2\gamma^{-1}
				\\ &\le
				\frac{48C_{D,2,2}}{400C_{D,2,1}^2}\nu_1^2\lambda_1^{-2}\mu^{-1}  \rho_n\gamma^{2}
				+ \frac{392\sqrt{5}C_{D,3,2}}{8000C_{D,2,1}^3}\nu_1^2\lambda_1^{-3}\mu^{-3/2}\eta_o^{1/2} \rho_n^2\gamma^{7/2}
		.
	\end{align*}
	\end{enumerate}
	Collecting all estimates together, we obtain
	\begin{align*}
		\E{T^{(n)}\circ T^{(n)}\overevent \wtd\eventH_n\cap\eventQ_n}
		&\le\left(\prod_{r=1}^{n} \opL_r^2\right)T^{(0)}\circ T^{(0)} +2\sum_{s=1}^{n}\eta_s^2\left(\prod_{r=s+1}^{n} \opL_r^2\right)H_{\circ} +R
		,
	\end{align*}
	where
	\begin{align*}
		\N{R}_2
		&\le
		\begin{multlined}[t]
		\frac{1}{\sqrt{5}} \mu^{-1/2}\rho_n\eta_o^{1/2}\gamma^{1/2}\varepsilon
		+\frac{2C_{D,4,2}}{5C_{D,2,1}^2}\mu^{-1}\rho_n\eta_n^2
		+		\frac{6C_{D,2,2}}{25C_{D,2,1}^2}\nu_1^2\lambda_1^{-2}\mu^{-1}  \rho_n\gamma^{2}
		\\		+ \frac{49\sqrt{5}C_{D,3,2}}{1000C_{D,2,1}^3}\nu_1^2\lambda_1^{-3}\mu^{-3/2}\eta_o^{1/2} \rho_n^2\gamma^{7/2}
		\end{multlined}
		\\&\le
		\frac{1}{4\sqrt{5}}\mu^{-1}\varepsilon^3
		+\frac{C_{D,4,2}}{160C_{D,2,1}^2}\mu^{-1}\eta_o^2\varepsilon^6
		+		\frac{3C_{D,2,2}}{50C_{D,2,1}^2}\nu_1^2\lambda_1^{-2}\mu^{-1}  \gamma^{2}\varepsilon^2
				+ \frac{49\sqrt{5}C_{D,3,2}}{16000C_{D,2,1}^3}\nu_1^2\lambda_1^{-3}\mu^{-3/2}\eta_o^{1/2} \gamma^{7/2}\varepsilon^4
				\\&\le C'\nu_1^2\lambda_1^{-2}\mu^{-1}\gamma^2\varepsilon^2
				\le C_R\frac{\varepsilon^2}{\ln\frac{nd}{\delta_1}}
,
	\end{align*}
	in which $C'$ and $C_R$ are absolute constants.

\section{Conclusion}\label{sec:conclusion}
We have presented a convergence analysis of the Oja's method for online/streaming PCA iteration with sub-Gaussian samples, by combining the idea in Li et al.~\cite{liWLZ2017near} and Liang et al.~\cite{liangGLL2017nearly:arxiv} and the convergence result from a random guess to a good guess by Huang et al.~\cite{huang2021streaming}.
Our results show for the first time that the Oja' method for online PCA is \emph{optimal} in the sense that with high probability the convergence rate exactly matches the minimax information lower bound for offline PCA.

Recently \cite{amid2019implicit,henriksen2019adaoja} developed a method to decide the learning rate adaptively.
Though our framework works for different strategies on choosing the learning rates, it remains an open but very worthwhile problem to consider how fast the adaptive learning rates would accelerate the convergence,
because with a small probability to cover bad events, it is quite possible to attain a convergence rate even below the lower bound.

{\small
	\bibliographystyle{plain}
	\bibliography{../strings,../liang-nctu,../liang-tsinghua}
}

\end{document}